\definecolor{magenta}{rgb}{1.0, 0.0, 1.0}
\newcommand{\junpei}[1]{}
\newcommand{\updatedafterneurips}[1]{#1}
\newcommand{\updatedduringaistats}[1]{#1}
\theoremstyle{remark}
\theoremstyle{definition}
\newtheorem{thm}{Theorem}
\newtheorem{lem}[thm]{Lemma}
\newtheorem{definition}{Definition}
\newtheorem{remark}{Remark}
\newtheorem{claim}{Claim}
\DeclareMathOperator*{\argmax}{arg\,max}
\newcommand{\Real}{\mathbb{R}}
\newcommand{\Natural}{\mathbb{N}}
\newcommand{\Ex}{\mathbb{E}}
\newcommand{\Prob}{\mathbb{P}}
\newcommand{\Ind}{\bm{1}}
\newcommand{\nn}{\nonumber\\}
\newcommand{\Ist}{\mathcal{I}^*}
\newcommand{\E}{\Ex}
\newcommand{\EA}{\mathcal{A}}
\newcommand{\EN}{\mathcal{N}}
\newcommand{\EP}{\mathcal{P}}
\newcommand{\EQ}{\mathcal{Q}}
\newcommand{\bx}{\bm{x}}
\newcommand{\bw}{\bm{w}}
\newcommand{\bP}{\bm{P}}
\newcommand{\bQ}{\bm{Q}}
\newcommand{\hatbQ}{\bQ}
\newcommand{\Cfull}{R^{\mathrm{go}}}
\newcommand{\e}{\mathrm{e}}
\newcommand{\PoE}{\mathrm{PoE}}
\newcommand{\ZHone}{Z}
\newcommand{\simplex}[1]{\Delta^{(#1)}}
\newcommand{\Realp}{\Real^{+}}
\newcommand{\KL}[2]{D\left({#1} \Vert {#2}\right)}  
\newcommand{\paren}[1]{\mathopen{}\left( {#1}_{{}_{}}\,\negthickspace\right)\mathclose{}}
\newcommand{\Csuf}{{C_{\mathrm{suf}}}}
\newcommand{\Cstability}{{C_{\mathrm{stbl}}}}
\newcommand{\Cstabilityinner}{\Cstability}
\newcommand{\Ccounting}{{C_{\mathrm{cn}}}}
\newcommand{\gkldiv}[2]{\KL{#1}{#2}}
\newcommand{\Ctranstwo}{C_{\mathrm{trans2}}}
\newcommand{\Cweightlower}{w_{\min}}
\newcommand{\calS}{\mathcal{S}}
\newcommand{\iset}[1]{\calS^{(#1)}}
\newcommand{\perr}{\PoE}
\newcommand{\olog}{\overline{\log}}
\newcommand{\modP}[1]{\bar{P}^{\mathrm{mod}}(#1)} 
\newcommand{\fracalloc}{{\mathrm{int}}}
\newcommand{\fracwrap}[1]{\lceil{#1}\rceil^{\fracalloc}}
\newcommand{\Nbatch}{N_B}
\newcommand{\Runnum}{10,000}
\newcommand{\Stbl}{S} 
\setlist[itemize]{leftmargin=1.5em}   
\setlist[itemize,2]{leftmargin=2em}   
\newif\ifrestatement
\begin{document}

\title{Rate-optimal Design for Anytime Best Arm Identification}

\author{
Junpei Komiyama$^{1,4,5}$\footnote{Based on research conducted at NYU Stern. Email: junpei@komiyama.info}, Kyoungseok Jang$^{2}$, Junya Honda$^{3,5}$ \\
$^1$Stern School of Business, New York University \\
$^2$Department of AI, Chung-Ang University \\
$^3$Graduate School of Informatics, Kyoto University\\
$^4$Machine Learning Department, Mohamed bin Zayed University of Artificial Intelligence \\
$^5$RIKEN AIP \\
}

\maketitle

\begin{abstract}
We consider the best arm identification problem, where the goal is to identify the arm with the highest mean reward from a set of $K$ arms under a limited sampling budget. This problem models many practical scenarios such as A/B testing. 
We consider a class of algorithms for this problem, which is provably minimax optimal up to a constant factor. This idea is a generalization of existing works in fixed-budget best arm identification, which are limited to a particular choice of risk measures.  
Based on the framework, we propose Almost Tracking, a closed-form algorithm that has a provable guarantee on the popular risk measure. 
Unlike existing algorithms, Almost Tracking does not require the total budget in advance nor does it need to discard a significant part of samples, which gives a practical advantage. 
Through experiments on synthetic and real-world datasets, we show that our algorithm outperforms existing anytime algorithms as well as fixed-budget algorithms.
Our recommended algorithm for practitioners is found in the final section.
\end{abstract}

\section{Introduction}

In the Best Arm Identification (BAI) problem, a learner sequentially samples from $K$ arms with unknown means $P_1,\dots,P_K$ and wishes to output the arm with the largest mean. This classical pure exploration task underpins applications in A/B testing \citep{LiCLS10,peeking,DBLP:conf/aistats/XiongCFJJ24}, clinical trials \citep{villar,JMLR:v22:19-228,Wang2023Adaptive}, and adaptive control \citep{DBLP:conf/iros/KovalKPS15,DBLP:conf/iros/PaudelS23}.
The fixed-budget setting, which aims to maximize identification accuracy after exactly $T$ samples, captures scenarios where evaluation may be stopped externally (traffic shifts, budget exhaustion). 
Formally, an anytime algorithm consists of
\begin{itemize}
  \setlength{\parskip}{-0.15cm}
  \setlength{\itemsep}{0.25cm}
\item Sampling strategy, which selects the \textit{sampling arm} $I(t)$ given the history up to round $t-1$.
\item Recommendation strategy, which selects the \textit{recommendation arm} $J(t)$, which is the estimate of the best arm given the history up to round $t$.
\end{itemize}

Popular fixed-budget algorithms (SR \citep{Audibert10}, SH \citep{Karnin2013}) rely on elimination schedules that presuppose $T$; anytime adaptations via doubling \citep{DBLP:conf/icml/ZhaoSSJ23} sacrifice statistical efficiency by discarding earlier samples. We propose an alternative, Almost Tracking, that (i) never requires the horizon $T$, (ii) avoids elimination, and (iii) retains all observations while (iv) has provable minimax rates as described below.

A natural performance measure of a BAI algorithm is the probability of error $\PoE(\bP) := \mathbb{P}(J(T) \neq i^*(\bP))$, which is the probability that the recommendation arm $J(T)$ does not match the true best arm\footnote{In this paper, 
we identify the mean of the distribution $P_i$ with the distribution itself, and
assume the uniqueness of $i^*(\bP)$.} $i^*(\bP) = \argmax_i P_i$ when the (unknown) true model is $\bP$.
The smaller $\PoE(\bP)$ is, the better the algorithm is.
On this performance measure, it is generally important to consider the minimax optimality.
This is because the performance of an algorithm typically involves a trade-off across different problem instances.\footnote{This contrasts with uniform optimality in the fixed-confidence setting; see Section~\ref{sec:related_work} for a comparison.}
Specifically, an algorithm that performs exceptionally well for one problem instance $\bP$ may perform poorly for another problem instance $\bP'$ \citep{WangAP24}. 
Consequently, in fixed-budget BAI, there is no universal notion of optimality.
In view of this, minimax optimality is the following objective: Given a risk measure, minimize the risk of the algorithm in the worst-case instance. In fact, existing line of works \citep{Audibert10,Karnin2013,Carpentier2016,DBLP:conf/icml/ZhaoSSJ23} can be viewed as special classes of minimax optimal algorithms with a particular choice of risk measure (Section \ref{sec:related_work}).

In this context, \cite{komiyama2022minimax} derived a fundamental lower bound for any given risk measure, which implies that the optimal exploration allocation--i.e., the sampling proportions across arms--should be a function of the empirical means.
However, establishing the corresponding theoretical rates for such an adaptive allocation is generally challenging due to the trackability issues discussed below, and hence most existing algorithms resort to less flexible exploration schedules than the theoretical optimum.
A notable exception is the minimax-optimal algorithm called Delayed Optimal Tracking (DOT, \citealt{komiyama2022minimax}). However, DOT is computationally intractable. It needs to solve a dynamic programming, and each step of the dynamic programming involves a non-convex optimization over $K$ functions.\footnote{See Section \ref{sec_nonconvexity} for the non-convexity of the objective.} We desire a more practical algorithm that is easy to compute and implement.

\paragraph{Summary of our results:} 
\begin{itemize}
    \item We establish a constant-factor approximation to a one-shot game optimum that links adaptive allocations to the minimax rate (Section \ref{sec_characterizing}). Based on the one-shot game, we introduce Simple Tracking.
    \item However, analysis of Simple Tracking is very challenging. We further propose Almost Tracking, a closed-form, anytime sampling rule that provably attains minimax rate optimality up to a constant-factor without knowing $T$, without discarding samples, and without any horizon-calibrated hyperparameter (Section \ref{sec_trackability}).
    \item
    Almost Tracking requires a constant-factor approximation of the optimal allocation function. For the canonical $H_1$ risk measure \citep{Audibert10}, we derived a closed-form allocation formula (Section \ref{sec_optimization}).
    \item In our synthetic benchmarks and real datasets (OBD, MovieLens), Almost Tracking outperforms anytime (DSH/DSR) and fixed-budget baselines (SR/SH/CR), despite using neither the budget $T$ nor $T$-dependent tuning (Section \ref{sec:experiments}).
    \item As a byproduct, we derive a tight characterization of Successive Rejects by establishing its exact error exponent (Section \ref{sec:sr_comparison}). %
\end{itemize}
\updatedduringaistats{
After these sections, we conclude the paper (Section \ref{sec:conclusion}).
Our recommended algorithm for practitioners is found in the following section (Section \ref{sec_practitioners}).
}

\subsection{Minimax optimality}

The raw worst-case $\PoE$ is not very meaningful since the identification of the best arm can be arbitrarily hard when the gap between the best arm and the other arms is small.
A reasonable target to optimize is the minimax rate \citep{komiyama2022minimax,Degenne23} of the risk.
Let $H(\bP): \EP^K \rightarrow \Realp \cup \{\infty\}$ be a risk measure, where $\EP$ is the set of distributions.
We then discuss the best possible \textit{rate} $R>0$ such that
\[
\PoE(\bP) \le \exp\left(-\frac{RT}{H(\bP)}+o(T)\right) \text{for all $\bP$}.
\]

\begin{definition}[Rate]\label{def_rate}
We use $\PoE_{\EA}(\bP)$ to represent the probability of error when we use algorithm $\EA$. 
The \updatedduringaistats{(normalized)} rate of algorithm $\EA$ is defined as
\[
R_{\EA} := \inf_{\bP \in \Real^K: |i^*(\bP)| = 1} H(\bP) \liminf_{T\to\infty} \frac{1}{T} \log (1/\PoE_{\EA}(\bP)).
\]
\end{definition}
Thus $R_{\mathcal{A}}$ is the largest uniform constant in the exponent after normalizing by instance complexity $H(P)$. 
Since it takes the minimum over all instances of $K$ arms, the rate $R_{\EA}$ depends on $K$.
For the most widely-studied risk measure $H_1(\bP)$ (Section \ref{sec_optimization}), the best algorithm has $R_{\EA} = \Theta(1/(\log K))$.
\begin{definition}{\rm (Rate-optimality)}
An algorithm is \textit{rate-optimal} (or minimax optimal up to a constant factor) if there exists a universal constant\footnote{A constant is universal if it is independent of model parameters, such as $K$ and $\EP$.} $C>0$ such that, for any $K$ and for any other algorithm $\EA^*$, $R_{\EA} \ge C R_{\EA^*}$ holds.
\end{definition}

\subsection{Large deviation}

Let $n_i$ be the number of samples on arm $i$. It is known from large deviation theory \citep{dz2010} that the empirical mean $Q_i$ is approximately equal to $Q$ with probability roughly given by
\[
\mathbb{P}[Q_i \approx Q]  = \mathrm{poly}(n_i)\exp\left( - n_i D(Q \Vert  P_i) \right),
\]
where $\mathrm{poly}(n_i)$ is a polynomial factor and $D(\cdot \Vert \cdot)$ is the Kullback-Leibler (KL) divergence.

It applies to a broad class of distributions, including members of the exponential family such as Bernoulli, Multinomial, and Gaussian distributions. Accordingly, our results in Section~\ref{sec_characterizing} extend to any distribution class that admits such inequalities. From Section~\ref{sec_trackability}, we focus on Gaussian rewards\footnote{Throughout our analysis, we assume unit variance ($\sigma = 1$). The algorithm itself is variance-agnostic; only the error rate scales with $\sigma^2$.}, where the KL divergence simplifies to $\gkldiv{Q_i}{P_i} = (Q_i - P_i)^2 / 2$. Since the Gaussian large deviation bound can also be used as an upper bound for any sub-Gaussian distribution\footnote{A random variable $X$ is sub-Gaussian if 
$
\mathbb{E}\bigl[e^{(X-\mathbb{E}[X])t}\bigr]\le
\exp\!\bigl(\tfrac{t^{2}}{2}\bigr)
$ for all $t>0$.}, our algorithms apply to the class of sub-Gaussian rewards.
Appendix \ref{sec:limitations} describes limitations of each theorem.

\subsection{Related work}\label{sec:related_work}

The study of identifying the best arm began with seminal work \citep{bechhofer1954single} on ranking-and-selection in the 1950s. This line of research evolved into what became known as ordinal optimization \citep{chen2000,glynn2004large} and later influenced the works called the best arm identification (BAI, \cite{Audibert10}).

\paragraph{Fixed-confidence BAI} In the fixed-confidence setting \citep{jamiesonsurvey}, the learning agent decides whether to stop additional sampling or continue sampling for more information. The objective is to minimize the expected stopping time with a constraint that $\PoE$ at the end of the experiment should be smaller than a given confidence level $\delta\in(0,1)$. Two classes of algorithms, called Track and Stop algorithms \citep{kaufman16a,DegenneK19} and Top-Two algorithms \citep{Russo2016,jourdan2022top,DBLP:conf/acml/LeeHS23,DBLP:journals/jsait/MukherjeeT23,DBLP:conf/colt/YouQWY23,DBLP:conf/nips/Bandyopadhyay0A24}, are well-known. Both algorithms are uniformly optimal across all instances.
However, any such uniformly optimal algorithm in FC-BAI has arbitrarily worse performance in the FB-BAI setting \cite[Appendix C]{komiyama2022minimax}.

\paragraph{Fixed-budget BAI} 

The first paper on the modern formulation of BAI \citep{Audibert10} considers the fixed-budget setting where one is interested in the performance of an algorithm with a fixed number of samples. They also propose the successive reject (SR) algorithm. From the perspective of minimax optimality, their analysis is based on two particular complexity measures $H_1$ and $H_2$. \cite{Karnin2013} introduced an algorithm called successive halving (SH) that has the same rate as SR up to a constant factor. \cite{Carpentier2016} show that SR and SH are minimax optimal up to a constant factor for $H_1$. \cite{Shahin2017} show extensive empirical comparison of these algorithms. Recently, \cite{wang2023best} proposed a new algorithm called Continuous Rejection (CR) and reported that it outperforms SR and SH. Note that, all of these algorithms are elimination-based and requires knowing the budget $T$ in advance. \cite{DBLP:conf/icml/ZhaoSSJ23} introduce a doubling trick to SH to make it anytime. SR and SH are popular, and the idea of gradually eliminating the candidates arms is extended into many pure exploration problems.
To name a few, linear bandits \citep{Soare2014,pmlr-v80-tao18a,DBLP:conf/nips/YangT22}, bilinear models \citep{JangJYK21}, combinatorial bandits \citep{Chen2014,DBLP:conf/aaai/DuK021}, unimodal bandits \citep{yu2011unimodal,ghosh2024fixed}, nonparametric models \citep{DBLP:conf/alt/BarrierGS23}, contextual problems \citep{DBLP:conf/nips/LiRNJJ22}, and robust and nonstationary problems \citep{DBLP:conf/uai/YuSLK18,DBLP:journals/corr/abs-2109-04941,DBLP:conf/aistats/TakemoriUG24}. 
In this paper, we propose an algorithm that fundamentally diverges from existing elimination-based approaches and has the potential to influence the design of best arm identification algorithms across various extended settings, with the value of anytime and provable minimax optimality.

\section{Characterizing optimization}
\label{sec_characterizing}
We can bound the minimax optimal rate from above (that is, bound the PoE from below) by considering the following game between an agent and an adversary.
\begin{definition}{\rm (One-shot game)}\label{def_oneshotgame}
Information structure and move order:
\begin{enumerate}
    \setlength{\parskip}{-0.1cm}
    \setlength{\itemsep}{0.15cm}
    \item The agent \emph{commits ex ante} to a measurable mapping $\bw:\EQ^K \to \simplex{K}$ (no knowledge of the particular $(\bP,\bQ)$ realization).
    \item Nature selects a pair $(\bP,\bQ)$ with $\bP\in \EP^K: |i^*(\bP)|=1$, $\bQ\in \EQ^K$.
    \item The mapping is evaluated at $\bQ$ producing allocation $\bw(\bQ)$; the agent suffers loss $L(\bP,\bQ,\bw)= H(\bP)\sum_{i} w_i(\bQ) D(Q_i\Vert P_i)$ when $i^*(\bP) \notin i^*(\bQ)$.
\end{enumerate}
Here,
$\simplex{K}\subset [0,1]^K$ is the $(K-1)$-dimensional probability simplex,
and $\EQ, \EP$ are the space of empirical means and true means\footnote{Though $\EQ$ and $\EP$ are identical to $\mathbb{R}$ in this paper we use these notations to clarify that they are the sets of empirical means and true means.}. 
The value of the game is $\inf_{\bP,\bQ} L$ with the supremum understood as the agent choosing the mapping before seeing $(\bP,\bQ)$. Here, $\bP$ represents the true means, $\bQ$ a hypothesized empirical means that would mislead identification (misidentification event $i^*(\bP) \notin i^*(\bQ)$), and $D(Q_i\Vert P_i)$ encodes the large deviation cost of that deception. The mapping $\bw(\bQ)$ corresponds to the allocation of samples to each arm. An agent models a BAI algorithm:
For example, $w_i(\bQ)$ corresponds to the algorithm that selects arm $i$ for $ Tw_i(\bQ(T))$ times when the empirical mean at the end of the round $T$ is $\bQ(T)$.
\end{definition}

The rate of any algorithm can be bounded by the worst-case loss of this game: %
\begin{thm}[Theorem 1 in \cite{komiyama2022minimax}]\label{thm_lower}
Under any algorithm $\EA$ it holds that 
\begin{equation}
R_{\EA} \le \sup_{\bw(\cdot):\EQ^K\to \simplex{K}}\,
\inf_{\bQ, \bP: \Ist(\bP) \notin \Ist(\bQ)} H(\bP)
\sum_{i\in[K]} w_i(\bQ) D(Q_i\Vert P_i)
=:\Cfull.
\label{def_rgo} %
\end{equation}
\end{thm}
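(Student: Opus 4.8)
Since $\Cfull$ is a supremum over \emph{all} maps $\bw(\cdot):\EQ^K\to\simplex{K}$, it suffices to exhibit one map $\bw^\dagger$ such that
\[
H(\bP)\sum_{i\in[K]}w^\dagger_i(\bQ)\,\KL{Q_i}{P_i}\;\ge\;R_{\EA}
\qquad\text{whenever }\Ist(\bP)\notin\Ist(\bQ).
\]
I would take $\bw^\dagger(\bQ)$ to be the limiting empirical allocation that $\EA$ produces \emph{when $\bQ$ is the generating model}: for $\bQ$ with a unique best arm, run $\EA$ against the reward model with means $\bQ$, write $N_i(T)$ for the number of pulls of arm $i$ after $T$ rounds, and set $w^\dagger_i(\bQ):=\lim_{k\to\infty}\Ex_{\bQ}[N_i(T_k)]/T_k$ along a subsequence $(T_k)=(T_k(\bQ))$ on which all $K$ ratios converge (one exists by compactness of $\simplex{K}$; since $\sum_i\Ex_{\bQ}[N_i(T)]=T$, the limit lies in $\simplex{K}$). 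Mean vectors $\bQ$ with tied top arms are handled by a limiting argument below. If $R_{\EA}=0$ there is nothing to prove, as $\Cfull\ge0$; so assume $R_{\EA}>0$, which by Definition~\ref{def_rate} forces $\PoE_{\EA}(\bP')\to0$ for every $\bP'$ with $H(\bP')<\infty$, in particular for every model with a unique best arm.

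\paragraph{Change of measure.}
Fix a misidentification pair $(\bP,\bQ)$ with $\bQ$ having unique best arm $j:=\Ist(\bQ)\neq\Ist(\bP)$, and consider the history-measurable event $A:=\{J(T)=j\}$. Under $\bQ$ we have $\Prob_{\bQ}[A]=1-\PoE_{\EA}(\bQ)\to1$, while under $\bP$ outputting $j$ is an identification error, so $\Prob_{\bP}[A]\le\PoE_{\EA}(\bP)$. The divergence decomposition for bandit interaction histories \citep{kaufman16a} gives $\KL{\Prob_{\bQ}}{\Prob_{\bP}}=\sum_{i}\Ex_{\bQ}[N_i(T)]\,\KL{Q_i}{P_i}$ on the length-$T$ history $\sigma$-algebra, and data processing applied to $\Ind[A]$ gives
\[
\mathrm{kl}\!\left(\Prob_{\bQ}[A],\,\Prob_{\bP}[A]\right)\;\le\;\sum_{i}\Ex_{\bQ}[N_i(T)]\,\KL{Q_i}{P_i},
\]
where $\mathrm{kl}$ is the binary relative entropy. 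Combining $\mathrm{kl}(q,p)\ge q\log(1/p)-\log 2$ with $\Prob_{\bQ}[A]\to1$ and $\Prob_{\bP}[A]\le\PoE_{\EA}(\bP)$, dividing by $T$, and passing to the subsequence $(T_k(\bQ))$ (each $\KL{Q_i}{P_i}$ being a finite constant for sub-Gaussian rewards),
\[
\liminf_{T\to\infty}\frac{1}{T}\log\frac{1}{\PoE_{\EA}(\bP)}\;\le\;\liminf_{T\to\infty}\frac{1}{T}\sum_{i}\Ex_{\bQ}[N_i(T)]\,\KL{Q_i}{P_i}\;\le\;\sum_{i}w^\dagger_i(\bQ)\,\KL{Q_i}{P_i}.
\]
Multiplying by $H(\bP)\ge0$ and recalling $R_{\EA}\le H(\bP)\liminf_{T\to\infty}\tfrac1T\log(1/\PoE_{\EA}(\bP))$ from Definition~\ref{def_rate} gives the target inequality for this pair.

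\paragraph{Tied configurations and conclusion.}
If $\bQ$ has tied top arms, fix $j_0\in\Ist(\bQ)$ and choose $\bQ^{(n)}\to\bQ$ with unique best arm $j_0$; since $j_0\neq\Ist(\bP)$ whenever $\Ist(\bP)\notin\Ist(\bQ)$, each $(\bP,\bQ^{(n)})$ is a pair already treated, so $R_{\EA}\le H(\bP)\sum_i w^\dagger_i(\bQ^{(n)})\KL{Q^{(n)}_i}{P_i}$. Passing to a further subsequence along which $\bw^\dagger(\bQ^{(n)})$ converges (compactness of $\simplex{K}$), declaring that limit to be $\bw^\dagger(\bQ)$, and using continuity of $\KL{\cdot}{\cdot}$, the inequality carries over to $\bQ$ for all admissible $\bP$. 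Thus $\bw^\dagger$ is defined on all of $\EQ^K$ and satisfies the target inequality at every misidentification pair; taking the infimum over such pairs and then the supremum over maps yields $R_{\EA}\le\Cfull$.

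\paragraph{Main difficulty.}
The divergence decomposition and data-processing steps are standard, so the substance is conceptual: one must see that the relevant object is the behaviour of $\EA$ \emph{under $\bQ$}, and then control the limits this entails. An algorithm's allocation at a given $\bQ$ is not available at any fixed horizon but only as a subsequential limit of time-averaged pull counts, which is why $\bw^\dagger$ is built through limits; choosing those limits coherently in $\bQ$, covering mean vectors with ties, and --- crucially --- verifying that no constant is lost (precisely where $\Prob_{\bQ}[A]\to1$, rather than merely being bounded away from $0$, is needed) are the places requiring care. This limiting, adaptive-allocation picture on the lower-bound side mirrors the trackability obstruction that makes the \emph{achievability} direction (Section~\ref{sec_trackability}) delicate.
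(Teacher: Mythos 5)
Your argument is correct and follows essentially the same route as the source of this statement: the paper itself gives no proof (it imports the result as Theorem 1 of \cite{komiyama2022minimax}), and that original proof proceeds exactly as you do, taking the (subsequential) expected allocation of $\EA$ under the instance $\bQ$ as the witness map and combining the divergence decomposition with the data-processing/transportation inequality so that $\Prob_{\bQ}[J(T)=i^*(\bQ)]\to 1$ prevents any constant loss. The only caveat is minor: your deduction that $\PoE_{\EA}(\bQ)\to 0$ from $R_{\EA}>0$ implicitly uses that $H$ is finite on every unique-best-arm instance, which holds for the risk measures considered in this paper (e.g.\ $H_1$, $H_2$) and is implicitly assumed in the cited theorem.
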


While the one-shot ``game-optimal'' value $\Cfull$ upper bounds any attainable rate, it is a priori unclear whether it is itself (even approximately) achievable by a practical algorithm.
This is because the above game considers a setting which is favorable for the agent
where the agent can determine the allocation $w(\bQ)$ of plays after observing the entire empirical distributions $\bQ$,
which is not possible in the actual BAI task.

One naive idea to achieve $\Cfull$ is to greedily \text{track} the optimal allocation,
that is, to track the solution of the characterizing optimization of Eq.~\eqref{def_rgo}, which we call simple tracking in Algorithm~\ref{alg:R_track}.
\begin{algorithm}[t]
\caption{Simple Tracking}
\label{alg:R_track}
\begin{algorithmic}[1] 
\STATE Input: Target weight function $\bw(\bQ)$.
\STATE Draw each arm once.
\FOR{$t = K+1, 2, \ldots, T$}
    \STATE Draw arm $I(t) = \argmax_{i\in[K]} \left\{ w_{i}(\hatbQ(t-1)) - N_i(t-1)/(t-1) \right\}$, where $N_i(t-1)$ be the number of samples for arm $i$ up to round $t-1$.
\ENDFOR
\RETURN $J(T) = $ empirical best arm at the end.
\end{algorithmic}
\end{algorithm}
However, deriving a performance guarantee for Algorithm \ref{alg:R_track} with its target function set to the solution $\bw^*(\bQ)$ of the optimization problem in Theorem \ref{thm_lower} is highly nontrivial.
The major challenges are as follows.
\begin{description}
\item[Trackability issue:] It is possible that the empirical distribution of the current round $\hatbQ(t)$ drastically changes from
that of the last round $\hatbQ(t-1)$, and in this case, the allocation can be far from the optimal allocation
$\bw^*(\hatbQ(t))$.
Such an event occurs with exponentially small probability and becomes negligible in the fixed-confidence scenario where we just need to
evaluate the expected number of samples.
However, it is not the case in the fixed-budget setting because the error probability is also exponentially small
and it is possible that the error due to tracking is not negligible.

\item[Optimization issue:] At each round, Algorithm \ref{alg:R_track} computes $\bw^*(\hatbQ(t-1))$. This optimization is non-convex in general and makes this algorithm computationally intractable.
While one can precompute and store the mapping $\bw^*(\cdot)$ before the game,
doing so compromises the practical utility and the interpretability of the algorithm.
\end{description}

From the theoretical perspective, due to the trackability issue it is unclear how close $\Cfull$ is to the optimal rate.
Our first contribution is that $\Cfull$ is close to the optimal one up to a factor of two.
\begin{thm}\label{thm_upper}
There exists an algorithm $\EA$ such that
\begin{align}
\frac{\Cfull-\varepsilon}{2}\le
R_\EA
\end{align}
for any $\varepsilon > 0$.
Consequently, we have $\Cfull/2\le R_\EA \le \Cfull$.
\end{thm}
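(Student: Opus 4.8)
The plan is to produce, for each fixed $\varepsilon>0$, a horizon-aware algorithm $\EA=\EA_\varepsilon$ with $R_{\EA}\ge(\Cfull-\varepsilon)/2$; the displayed two-sided bound then follows by letting $\varepsilon\downarrow0$ and combining with the upper bound $R_{\EA}\le\Cfull$ of \thmref{thm_lower} (a single $\varepsilon$-free algorithm, if desired, comes from epoching a sequence $\varepsilon_k\downarrow0$). By definition of $\Cfull$ as a supremum, first fix a measurable $\bw^\varepsilon:\EQ^K\to\simplex K$ with $H(\bP)\sum_i w^\varepsilon_i(\bQ)D(Q_i\Vert P_i)\ge\Cfull-\varepsilon$ for all $\bP,\bQ$ with $i^*(\bP)\notin i^*(\bQ)$; by mollifying $\bw^\varepsilon$ and mixing in a vanishing fraction of the uniform allocation (neither step hurts the guarantee by more than $O(\varepsilon)$, using continuity of $D$ and linearity of the objective in the value $\bw(\bQ)$), assume in addition that $\bw^\varepsilon$ is continuous and bounded below coordinatewise. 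The algorithm $\EA$ splits $[1,T]$ into two halves: Phase~1 explores (round-robin suffices) so that $\hat\bQ^{(1)}:=\hat\bQ(\lfloor T/2\rfloor)$ is well defined; Phase~2 commits to the \emph{fixed} allocation $\bw^\varepsilon(\hat\bQ^{(1)})$, drawing arm $i$ exactly $\lceil\tfrac T2 w^\varepsilon_i(\hat\bQ^{(1)})\rceil$ times; the recommendation is $J(T)=\argmax_i\hat Q_i(T)$ from all $T$ samples. Note $\EA$ uses $T$ but no other horizon-dependent tuning and discards nothing.

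Fix $\bP$. Using the sub-Gaussian/Cramér upper tail together with a covering argument over discretized, truncated pairs $(\hat\bQ^{(1)},\hat\bQ(T))$ — the uncovered region has probability $e^{-\Omega(T)}$ with an arbitrarily large constant and the cover costs only $\mathrm{poly}(T)$ — one reduces $\log(1/\PoE_{\EA}(\bP))$, up to $o(T)$, to the deterministic minimization
\[
\min_{\bq^{(1)},\,\bar\bq:\;i^*(\bP)\notin i^*(\bar\bq)}\ \sum_i\bigl[\,m_i D(q^{(1)}_i\Vert P_i)+n_i D(q^{(2)}_i\Vert P_i)\,\bigr],
\]
where $m_i$ is the Phase-1 count, $n_i=\lceil\tfrac T2 w^\varepsilon_i(\bq^{(1)})\rceil$, and $q^{(2)}_i$ is pinned by $\tfrac{m_i q^{(1)}_i+n_i q^{(2)}_i}{m_i+n_i}=\bar q_i$. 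Since $D(x\Vert P)=(x-P)^2/2$, the exact identity $m_i D(q^{(1)}_i\Vert P_i)+n_i D(q^{(2)}_i\Vert P_i)=(m_i+n_i)D(\bar q_i\Vert P_i)+\tfrac{m_i(m_i+n_i)}{2n_i}(q^{(1)}_i-\bar q_i)^2$ (the convexity inequality for $D(\cdot\Vert P_i)$ made quantitative) yields, after summing, the lower bound $\sum_i(m_i+n_i)D(\bar q_i\Vert P_i)+\sum_i\tfrac{m_i(m_i+n_i)}{2n_i}(q^{(1)}_i-\bar q_i)^2\ \ge\ \tfrac T2\sum_i w^\varepsilon_i(\bq^{(1)})D(\bar q_i\Vert P_i)+\tfrac T2\sum_i\tfrac{m_i}{T}(q^{(1)}_i-\bar q_i)^2$. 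This is the heart of the argument: at the configuration $\bq^{(1)}=\bar\bq$ the quadratic penalty vanishes and the first sum is exactly $\tfrac T2\sum_i w^\varepsilon_i(\bar\bq)D(\bar q_i\Vert P_i)\ge\tfrac T2\cdot\tfrac{\Cfull-\varepsilon}{H(\bP)}$ by the one-shot game guarantee (legitimate since $i^*(\bP)\notin i^*(\bar\bq)$) — so the convexity equality case forces, in the cheapest error event, the halfway mean to coincide with the final mean, and Phase~2 has then committed for free to the game-optimal allocation $\bw^\varepsilon(\hat\bQ(T))$.

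It remains to get the same bound for configurations with $\bq^{(1)}\ne\bar\bq$. When $\|\bq^{(1)}-\bar\bq\|$ is macroscopic, either $\bar\bq$ is far from $\bP$, so $\sum_i(m_i+n_i)D(\bar q_i\Vert P_i)$ is already $\Omega(T)$, or $\bar\bq$ is near $\bP$ — forcing a small gap, hence $H(\bP)$ large, $\bq^{(1)}$ far from $\bP$, Phase-1 cost $\sum_i m_i D(q^{(1)}_i\Vert P_i)=\Omega(T/K)$, and $H(\bP)\cdot\Omega(1/K)$ absorbs $\Cfull/2$; when $\|\bq^{(1)}-\bar\bq\|$ is small one transfers the game bound from $\bar\bq$ to $\bq^{(1)}$ via the modulus of continuity of $\bw^\varepsilon$, the transfer error $\lesssim\omega(\|\bq^{(1)}-\bar\bq\|)\max_i D(\bar q_i\Vert P_i)$ being second order there. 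Combining, the objective is $\ge\tfrac T2\tfrac{\Cfull-\varepsilon}{H(\bP)}-o(1)$ for all admissible configurations, so $H(\bP)\liminf_T\tfrac1T\log(1/\PoE_{\EA}(\bP))\ge(\Cfull-\varepsilon)/2$, and $R_{\EA}\ge(\Cfull-\varepsilon)/2$ after taking $\inf_\bP$. The main obstacle is precisely this intermediate regime, where Phase~2 has been calibrated ``at the wrong empirical vector'': making the penalty-versus-continuity trade-off airtight uniformly in $K$ and for a general risk $H$ is where the real care is needed — and where, if round-robin Phase~1 turns out too crude, one instead lets Phase~1 itself track $\bw^\varepsilon(\hat\bQ(t))$. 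A secondary technicality is rendering the large-deviation upper bound uniform over the unbounded parameter space and over the measurable map $\bw^\varepsilon$. The loss of the factor $2$ is exactly the cost of spending only the second half of the budget on the correctly calibrated allocation.
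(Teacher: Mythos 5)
Your explore-then-commit architecture is genuinely different from the paper's proof, but it has a real gap precisely at the step you flag as "where the real care is needed," and I do not think it closes as sketched. Two specific problems. First, the preprocessing of $\bw^\varepsilon$ is not justified: mollifying a near-optimal allocation map need not preserve the game guarantee up to $O(\varepsilon)$, because for $\bQ'$ near $\bQ$ on the other side of a decision boundary (where $i^*(\bQ')$ changes) the map $\bw^\varepsilon(\bQ')$ is constrained only against the alternative set $\{\bP: i^*(\bP)\notin i^*(\bQ')\}$, which is different from the one relevant at $\bQ$; averaging across that boundary can destroy the guarantee exactly where the infimum in Eq.~\eqref{def_rgo} is attained, and no uniform modulus of continuity is available over the unbounded domain $\EQ^K$. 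Second, in the misaligned regime the accounting does not balance: your transfer error is of order $\omega(\|\bq^{(1)}-\bar\bq\|)\sum_i D(\bar q_i\Vert P_i)$, i.e.\ \emph{first} order in the misalignment (for Lipschitz $\bw^\varepsilon$) with a prefactor $\sum_i D(\bar q_i\Vert P_i)$ that is unbounded over error configurations, while the quadratic penalty you retain, $\sum_i \tfrac{m_i}{2}(q^{(1)}_i-\bar q_i)^2=\tfrac{T}{4K}\|\bq^{(1)}-\bar\bq\|^2$, is \emph{second} order and carries a $1/K$ factor. So for small but nonzero misalignment the penalty does not absorb the transfer error, and your macroscopic-case claims ($\Omega(T)$, $\Omega(T/K)$) are never compared against the actual required threshold $\tfrac{T}{2}(\Cfull-\varepsilon)/H(\bP)$, which ranges over all scales as $\bP$ varies. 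Since the theorem must hold for every risk measure $H$ and every $K$, these are not removable technicalities.

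The paper's proof (Appendix~\ref{sec:two_approx}, Algorithm~\ref{alg:pooled}) is built to avoid exactly this alignment problem, and the contrast is instructive. It uses $B\gg K$ batches with a \emph{delayed pooled mean} $\bQ^{\mathrm{pool}}_b$, so that via the Jensen-type induction of Lemma~\ref{lem_convexity} the accumulated cost $\sum_b\sum_i w_{b,i}D(Q_{b,i}\Vert P_i)$ dominates $\sum_b\sum_i w^*_i(\bQ^{\mathrm{pool}}_b)D(Q^{\mathrm{pool}}_{b,i}\Vert P_i)$ — the allocation and the divergence are evaluated at the \emph{same} vector, so the one-shot guarantee applies per batch with no continuity or mollification argument at all. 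The factor $2$ then comes not from splitting the budget in half but from the \emph{majority-vote} recommendation $J(T)=\argmax_i v_i$: an error forces at least half of the $B-K$ batches to have $i^*(\bQ^{\mathrm{pool}}_b)\ne i^*(\bP)$, and each such batch individually certifies the full one-shot cost. If you want to salvage your route, replacing the single final $\argmax$ recommendation with a per-batch vote and the raw Phase-1 mean with a pooled/delayed mean is essentially what is needed — at which point you have reconstructed the paper's argument.
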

From this result, we can see that algorithms achieving rate close to $\Cfull$ are also close to the minimax optimal.

In the proof of Theorem \ref{thm_upper} in Appendix \ref{sec:two_approx}, we also give an explicit algorithm to achieve this bound.
However, from the practical viewpoint,
the algorithm used in the proof discards a significant amount of samples and its performance is not particularly promising unlike the Simple Tracking.

From these observations, we address the above two difficulties (tracking and optimization issues)
of simple tracking to provably realize a practical and (nearly) minimax-optimal algorithm.

\section{Addressing trackability issue}\label{sec_trackability}

In this section we propose a generic framework of algorithms to resolve the trackability issue.
Henceforth, we focus on sub-Gaussian reward distributions, and we use squared distance instead of the KL divergence. 
Theorem~\ref{thm_lower} discusses the best possible error rate (in the sense of minimax optimality) for a stronger model of the agent that can determine the allocation $\bw=\bw(\bQ)$ depending on the empirical distribution $\bQ$ as if he/she knows $\bQ$ before allocating the samples.
The actual agent does not have such knowledge and needs to track the ideal allocation $\bw(\bQ)$ based on the current empirical distribution, whose tracking error can essentially affect the error rate.

To alleviate this difficulty, we introduce the following algorithmic and theoretical tricks.
\begin{itemize}
  \setlength{\parskip}{-0.15cm}
  \setlength{\itemsep}{0.25cm}
\item A batched algorithm assigning samples to each arm without being affected by tracking errors of other arms.
\item Error probability analysis incorporating the variance of empirical distributions.
\end{itemize}
One issue of the simple tracking is that if there is an arm such that the current allocation is significantly smaller than the target allocation $w_i(\bQ)$
then all the allocation is assigned to this arm until the gap is mostly filled,
which prohibits stably exploring other arms.
To avoid this issue, we propose an algorithm Almost Tracking, which is
formalized as Algorithm \ref{alg:R_track_almost}.

\begin{algorithm}[t]
    \caption{Almost Tracking}
    \label{alg:R_track_almost}
    \begin{algorithmic}[1] 
        \STATE Input: Target allocation function $\bw(\bQ)$, Parameter $\Csuf \in (0, 1)$. Draw per batch $N$.
        \STATE Draw each arm for $\fracwrap{N/K}$ times. Here, $\fracwrap{\cdot}$ is a rounding operator defined in Appendix \ref{sec:fractional_pulls}.  
        \FOR{$b=2,3,\dots$}
            \STATE Calculate\vspace{-1em}
\begin{align}\label{ineq_insufset}
 \mathcal{K}_{\mathrm{insuf}} \hspace{-0.2em} &= \hspace{-0.2em}  
\left\{i \hspace{-0.2em} : \frac{1}{b-1} \sum_{b'=1}^{b-1} w_{b',i} \hspace{-0.2em} \le \hspace{-0.2em} \frac{1}{\Csuf} w_i(\bar{\bQ}_{b-1})\right\}\\
 s_{\mathrm{insuf}} &= \sum_{i \in \mathcal{K}_{\mathrm{insuf}}}  w_i(\bar{\bQ}_{b-1}).
\end{align}
\vspace{-0.5em}
            \STATE Draw each arm for $\fracwrap{w_{b,i} N}$ times, where $w_{b,i} = 0$ for $i \notin \mathcal{K}_{\mathrm{insuf}}$, and $w_{b,i} = w^*_i(\bar{\bQ}_{b-1}) / s_{\mathrm{insuf}}$ for $i \in \mathcal{K}_{\mathrm{insuf}}$.
        \ENDFOR
        \RETURN $J^*(T) = $ empirical best arm at the end.
    \end{algorithmic}
\end{algorithm}
While Simple Tracking (Algorithm \ref{alg:R_track}) myopically aims to follow the optimal allocation $w^*(\hatbQ(t-1))$ by drawing the most insufficient arm, Almost Tracking (Algorithm \ref{alg:R_track_almost}) aims to follow the optimal allocation $w^*(\hatbQ(t-1))$ but splits efforts into all insufficiently drawn arms in a batched manner.
To be more specific, we compute the list $\mathcal{K}_{\mathrm{insuf}}$ of insufficiently sampled arms
and allocate fractions of
\begin{equation}\label{eq_sampling_propto}
w_{b,i} =
\begin{cases}
w_i(\bar{\bQ}_{b-1}) / Z_b  & \text{if $i \in \mathcal{K}_{\mathrm{insuf}}$}, \\
0   & \text{otherwise},
\end{cases}
\end{equation}
where $\bar{\bQ}_{b-1}$ is the empirical mean up to batch $b-1$, $Z_b = \sum_{i \in \mathcal{K}_{\mathrm{insuf}}} w_i(\bar{\bQ}_{b-1})$ is the normalization factor. The model parameter $\Csuf \in (0,1)$ is supposed to be close to $1$, in which case $\mathcal{K}_{\mathrm{insuf}}$ just lists up all arms with its number of samples below the target allocation $w^*_i(\bar{Q}_{b-1})$.

Despite the structure of Almost Tracking, we cannot completely ensure the closeness between the final allocation and the ideal allocation $\bw(\bQ)$
when $\bar{\bQ}_b$ has drastically changed with batches.
To evaluate the error due to this gap of the allocation,
we incorporate the variance of the empirical means in the analysis.
According to the large deviation principle, roughly speaking, if we allocate $N_{1,i}, N_{2,i},\dots,N_{B,i}$ samples to arm $i$
in each batch, then the sequence of empirical means $(Q_{1,i}, Q_{2,i},\dots,Q_{B,i})$ appears with probability
\begin{align}
\lefteqn{
\Pr[(\mbox{empirical distributions are $(Q_{1,i}, Q_{2,i},\dots,Q_{B,i})$})]
}\nn
&\approx
\exp\left(
- \frac{1}{2}\sum_{b\in[B]} N_{b,i} (Q_{b,i} - P_i)^2
\right)
\text{\ \  (large deviation)}
\label{ineq_largedevtwo}
\\
&\approx
\underbrace{
\exp
\left(
-
\frac{1}{2}\left(\sum_{b\in[B]} N_{b,i}\right)
(\bar{Q}_{B,i}- P_i)^2
\right)
}_{\mbox{\small likelihood for overall empirical mean}} \times
\underbrace{\exp\left(
-\frac{1}{2}\left(
\sum_{b\in[B]} N_{b,i}(Q_{b,i}-\bar{Q}_{B,i})^2
\right)
\right)}_{\mbox{\small likelihood for variance of empirical means}}
\label{ineq_twoterms}
\end{align}
by using the overall empirical distribution $\bar{Q}_{B,i}$ over all batches.
This decomposition means that, empirical means $Q_{b,i}$ drastically changing with $b$ is less likely to occur
compared with the empirical means $\bar{Q}_{b,i}$ constant over batches.
Our theoretical contribution is that error due to the failure of tracking for drastically changing $Q_{b,i}$ can be compensated by this variance term.

The following describes the performance guarantee for the proposed algorithm. 
\begin{restatable}{thm}{batchtrack}{\rm (Trackability)}\label{thm_batchtrack}
Let $w_i(\bQ) \ge \Cweightlower$ holds for any $\bQ, i$ for some value $\Cweightlower = \Cweightlower(K)>0$ that only depends on $K$. Let $B := \lfloor T/N \rfloor \ge 2/\Cweightlower$.
Consider Algorithm \ref{alg:R_track_almost} with $\Csuf \in (0,1)$. 
There exists a universal constant $C_{\mathrm{track}} > 0$ such that the following holds for any sequence of $\bQ_1,\bQ_2,\dots,\bQ_B$:
\ifrestatement
\begin{equation}\label{ineq_track_a}
\frac{1}{B}\sum_{b \in [B]} \sum_{i \in [K]} w_{b,i} \frac{(Q_{b,i} - P_i)^2}{2} 
\ge C_{\mathrm{track}} \inf_{\bQ: i^*(\bP) \notin i^*(\bQ)} \left( \sum_{i \in [K]} w_i(\bQ) (Q_i - P_i)^2/2 \right).
\end{equation}
\else
\begin{equation}\label{ineq_track_b}
\frac{1}{B}\sum_{b \in [B]} \sum_{i \in [K]} w_{b,i} \frac{(Q_{b,i} - P_i)^2}{2} 
\ge C_{\mathrm{track}} \inf_{\bQ: i^*(\bP) \notin i^*(\bQ)} \left( \sum_{i \in [K]} w_i(\bQ) (Q_i - P_i)^2/2 \right).
\end{equation}
\fi
\end{restatable}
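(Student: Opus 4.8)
The plan is to reduce the inequality to a purely deterministic statement about how Almost Tracking distributes its pulls, and to prove that statement by playing the time-averaged allocation off against the variance term exposed by the large-deviation decomposition~\eqref{ineq_twoterms}. Write $N_{b,i}=\fracwrap{w_{b,i}N}$ for the pulls of arm $i$ in batch $b$ and $\bar{Q}_{B,i}:=\big(\sum_b N_{b,i}Q_{b,i}\big)/\big(\sum_b N_{b,i}\big)$ for the overall empirical mean. Replacing $N_{b,i}$ by $w_{b,i}N$ up to the $O(1)$ rounding error (absorbed into the constant using the $\fracwrap{\cdot}$ analysis of Appendix~\ref{sec:fractional_pulls}) and applying the Pythagorean identity $\sum_b n_b(x_b-c)^2=(\sum_b n_b)(\bar{x}-c)^2+\sum_b n_b(x_b-\bar{x})^2$ coordinatewise, the left-hand side becomes
\[
\tfrac12\sum_i \bar{w}_i\,(\bar{Q}_{B,i}-P_i)^2 \;+\; \tfrac1{2B}\sum_i\sum_b w_{b,i}\,(Q_{b,i}-\bar{Q}_{B,i})^2,
\qquad \bar{w}_i:=\tfrac1B\sum_b w_{b,i}.
\]
Since the bound is used only along misidentification trajectories I may assume $i^*(\bP)\notin i^*(\bar{\bQ}_B)$, so $\bar{\bQ}_B$ is feasible for the infimum on the right; writing $\delta_i:=(\bar{Q}_{B,i}-P_i)^2$ it then suffices to prove, for a universal $C>0$,
\[
\sum_i \bar{w}_i\,\delta_i \;+\; \tfrac1B\sum_i\sum_b w_{b,i}\,(Q_{b,i}-\bar{Q}_{B,i})^2 \;\ge\; C\sum_i w_i(\bar{\bQ}_B)\,\delta_i ,
\]
whence the claim holds with $C_{\mathrm{track}}=C$.

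Call arm $i$ \emph{well tracked} if $\bar{w}_i\ge\tfrac12 w_i(\bar{\bQ}_B)$; for these the first sum already dominates the corresponding term on the right, so only the under-tracked arms need attention. The structural input is the rule of Algorithm~\ref{alg:R_track_almost}: at every batch $b$ an insufficient arm ($\tfrac1{b-1}\sum_{b'<b}w_{b',i}\le\tfrac1{\Csuf}w_i(\bar{\bQ}_{b-1})$) gets $w_{b,i}\ge w_i(\bar{\bQ}_{b-1})$, and a sufficient arm gets $0$, so the running average $\tfrac1b\sum_{b'\le b}w_{b',i}$ only decays (by $\tfrac{b-1}{b}$) while arm $i$ is sufficient and is pushed back up toward $w_i(\bar{\bQ}_{b-1})$ by corrections. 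Using $w_i(\cdot)\ge\Cweightlower$, $w_{1,i}=1/K$, and $B\ge2/\Cweightlower$ one first rules out arm $i$ being sufficient at every $b\ge2$ (that would give $\tfrac1{B-1}\sum_{b'<B}w_{b',i}=\tfrac1{K(B-1)}<\Cweightlower\le\tfrac1{\Csuf}w_i(\bar{\bQ}_{B-1})$, contradicting sufficiency at $b=B$), so corrections always occur; and a short computation with the running sum shows that if $\tfrac1{B-1}\sum_{b'<B}w_{b',i}>\tfrac1{\Csuf}w_i(\bar{\bQ}_B)$ then $\bar{w}_i>\tfrac12 w_i(\bar{\bQ}_B)$. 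Hence an under-tracked arm satisfies $\tfrac1{B-1}\sum_{b'<B}w_{b',i}\le\tfrac1{\Csuf}w_i(\bar{\bQ}_B)$; combined with the correction mechanism this forces $w_i(\bar{\bQ}_{b-1})$ to have been far below $w_i(\bar{\bQ}_B)$ for essentially the whole run — i.e., arm $i$'s target weight surged upward only in the final batches.

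The crux, and the step I expect to be the main obstacle, is to charge the total target cost $\sum_{i\text{ under-tracked}}w_i(\bar{\bQ}_B)\delta_i$ to the total variance $\tfrac1B\sum_i\sum_b w_{b,i}(Q_{b,i}-\bar{Q}_{B,i})^2$. The mechanism is that a late surge in $w_i(\bar{\bQ}_b)$ can only be produced by the running mean $\bar{\bQ}_b$ moving substantially during the final batches; since $\bar{\bQ}_b$ averages all past per-batch means, such a late move forces the per-batch means $Q_{b,j}$ of the (heavily sampled) late batches, together with the earlier ones clustered near the pre-surge value, to sit far from $\bar{Q}_{B,\cdot}$, and these are exactly the squared deviations in the variance term. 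Making this quantitative is delicate because (i) $w_i$ is a joint function of all of $\bQ$, so a surge in $w_i$ may be driven by the movement of a coordinate $j\neq i$ and the compensating variance is genuinely global — which is why the statement carries a joint infimum over $\bQ$ rather than a coordinatewise one; (ii) a single displacement of $\bar{\bQ}$ can make several arms under-tracked at once, so the charging must ensure a displacement of size $d$ simultaneously dominates the sum of the target costs it is responsible for; and (iii) every constant must be independent of $K$, $\EP$ and the reward model. Once this accounting is in place, summing the well-tracked estimate and the under-tracked charge — together with the routine Pythagorean split and rounding bound — yields the inequality with a universal $C_{\mathrm{track}}$.
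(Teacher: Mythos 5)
Your first two moves (the weighted bias--variance split, which is exactly the decomposition \eqref{ineq_twoterms}, and restricting attention to misidentification trajectories so that $\bar{\bQ}_B$ is feasible for the infimum) are fine, and the ``well-tracked'' half of your dichotomy is routine. But the theorem's entire content is the step you yourself flag as ``the main obstacle'': quantitatively charging $\sum_{i\ \mathrm{under\mbox{-}tracked}} w_i(\bar{\bQ}_B)\,\delta_i$ to the variance term. You describe a mechanism and list the difficulties (the weight function is a joint function of all coordinates; one displacement may create many under-tracked arms; constants must be $K$-free) but supply no inequality that carries it out, so what you have is a program, not a proof. It is also not clear the program closes as stated: your target inequality couples the variance accumulated in coordinate $j$ to the product $w_i(\bar{\bQ}_B)\delta_i$ in a different coordinate $i$, and no lemma of that cross-coordinate form is available or obviously true with universal constants. (A smaller issue: with the $\fracwrap{\cdot}$ rounding an insufficient arm is only guaranteed a $1/4$ fraction of its target, so even under batchwise-constant targets the running average need not reach $\tfrac12 w_i(\bar{\bQ}_B)$; your well-tracked threshold would have to be loosened, which again feeds more mass into the part of the argument you have not done.)

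The paper's proof avoids your reduction entirely, and this is worth understanding because it sidesteps precisely difficulties (i)--(ii). It never compares the realized allocation to the \emph{terminal} target $w_i(\bar{\bQ}_B)$; instead it works batchwise against $w_i(\bar{\bQ}_{b-1})$, for which the algorithm gives a pointwise guarantee for \emph{every} arm and batch: the amortized weight $\widetilde{w}_{b,i}=w_{b,i}+\tfrac{\Csuf}{b-1}\sum_{b'<b}w_{b',i}$ satisfies $\widetilde{w}_{b,i}\ge w_i(\bar{\bQ}_{b-1})/4$ (insufficient arms via the fresh draw, sufficient arms via their accumulated history), Eq.~\eqref{eq_samplingmerge}. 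A Jensen recursion then lower-bounds the left-hand side by $\sum_b C_b\sum_i \widetilde{w}_{b,i}(\widetilde{Q}_{b,i}-P_i)^2$ with $\sum_b C_b=\Omega(B)$, and the two Transformation Lemmas (Lemma \ref{lem_trans1}, and Lemma \ref{lem_trans2}, whose proof is a nontrivial positive-definiteness argument via Sylvester's criterion) show the variance surplus dominates $\sum_b C_b\widetilde{w}_{b,i}(\bar{Q}_{B,i}-\bar{Q}_{b-1,i})^2$ coordinatewise; the elementary bound $X^2+Y^2\ge\tfrac12(X-Y)^2$ then converts $(\bar{Q}_{B,i}-P_i)^2$ into $(\bar{Q}_{b-1,i}-P_i)^2$, so each batch term $\sum_i w_i(\bar{\bQ}_{b-1})(\bar{Q}_{b-1,i}-P_i)^2$ is bounded by the infimum. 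In other words, the variance term is spent on moving the reference point from $\bar{\bQ}_B$ to $\bar{\bQ}_{b-1}$ within each coordinate, not on paying for a weight deficit across coordinates. Your blueprint would need a genuinely new accounting lemma to replace this machinery; as written, the proposal has a gap exactly where the theorem is hard.
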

By using \updatedduringaistats{$N_{b,i} = (T/B) w_{b,i}$} and decomposition of Eq.~\eqref{ineq_twoterms}, Theorem \ref{thm_batchtrack} lower-bounds Eq.~\eqref{ineq_largedevtwo}  
by a worst-case single mean $\bQ$. This bridges the gap between the one-shot game (Definition \ref{def_oneshotgame}) and the inherently sequential nature of the best arm identification problem.

Theorem \ref{thm_batchtrack} implies that, if we have a constant-factor approximation allocation, we can achieve rate-optimality.
\begin{definition}{\rm (constant-factor approximation)}\label{def_constantfactor}
An allocation $\bw(\cdot)$ is a constant-factor approximation if
there exists a universal constant $\Cstability \in(0,1)$ independent of $K$ such that
\begin{equation}    
\inf_{\bQ,\bP: i^*(\bP) \notin i^*(\bQ)} H_1(\bP)
\sum_{i\in[K]} w_i(\bQ) \frac{(Q_i - P_i)^2}{2}
\ge \Cstability \Cfull.
\end{equation}
\end{definition}
\begin{thm}{\rm (Rate-optimality of Algorithm \ref{alg:R_track_almost})}\label{thm_rateopt_gen}
Let $w_i(\bQ) \ge \Cweightlower$ holds for any $\bQ, i$ for some $\Cweightlower>0$. Let $B := \lfloor T/N \rfloor \ge 2/\Cweightlower$ and $N = \Omega(K (\log K)^3)$.
Then, for a sufficiently large $K$, Algorithm \ref{alg:R_track_almost} with $\Csuf \in (0,1)$ with a constant-factor approximation allocation $\bw$ is rate-optimal.
\end{thm}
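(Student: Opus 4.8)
The plan is to obtain the theorem by composing four already-available facts: the lower bound of Theorem~\ref{thm_lower}, a variance-aware large-deviation upper bound on the probability of error of Algorithm~\ref{alg:R_track_almost}, the deterministic tracking inequality of Theorem~\ref{thm_batchtrack}, and the constant-factor property of Definition~\ref{def_constantfactor}. Write $\EA$ for Algorithm~\ref{alg:R_track_almost} run with target allocation $\bw=\bwHone$ and parameter $\Csuf\in(0,1)$. Since Theorem~\ref{thm_lower} gives $R_{\EA^*}\le\Cfull$ for \emph{every} algorithm $\EA^*$ (here $\Cfull$ is the one-shot game value for $H=H_1$), it suffices to exhibit a universal constant $C>0$ --- independent of $K$ and of the instance --- with $R_\EA\ge C\,\Cfull$; then $R_\EA\ge C\,\Cfull\ge C\,R_{\EA^*}$, which is precisely rate-optimality.

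First I would fix a true model $\bP$ with a unique best arm and bound $\PoE_\EA(\bP)$ from above. The structural feature that makes this tractable is that Algorithm~\ref{alg:R_track_almost} is batched: within batch $b$, arm $i$ receives $N_{b,i}=\fracwrap{w_{b,i}N}$ fresh samples, and $N_{b,i}$ is measurable with respect to the history through batch $b-1$, so the joint density of the batch-mean trajectory $(Q_{b,i})_{b,i}$ factorizes into per-batch Gaussian-mean densities and the trajectory occurs with probability at most $\mathrm{poly}(T)\exp\!\bigl(-\tfrac12\sum_{b,i}N_{b,i}(Q_{b,i}-P_i)^2\bigr)$, as in \eqref{ineq_largedevtwo}. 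Applying the decomposition \eqref{ineq_twoterms} (overall-mean likelihood times variance likelihood), union-bounding the error event $i^*(\bP)\notin i^*(\bar{\bQ}_B)$ over the candidate wrong arms $j\ne i^*(\bP)$, and covering the trajectory space by a net of $\mathrm{poly}(T)$ points yields, for every $T$ large enough that $B:=\lfloor T/N\rfloor\ge 2/\Cweightlower$,
\[
\frac1T\log\frac1{\PoE_\EA(\bP)}\ \ge\ \inf_{(\bQ_b)_b}\ \frac1B\sum_{b,i}w_{b,i}\,\frac{(Q_{b,i}-P_i)^2}{2}\ -\ o(1),
\]
where the infimum ranges over batch-mean trajectories $(\bQ_b)_{b\in[B]}$ that cause that error and $w_{b,i}$ is the allocation the algorithm assigns along the trajectory; the polynomial prefactors, the rounding slack of $\fracwrap{\cdot}$, and the gap between $T$ and $NB$ all disappear into the $o(1)$ since $N$ is fixed while $B\to\infty$. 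This is exactly the ``variance-aware'' analysis flagged in Section~\ref{sec_trackability}: a trajectory whose means swing widely across batches must pay the extra variance term, so tracking failures cannot cheapen the error exponent below the one-shot cost.

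Next I would invoke Theorem~\ref{thm_batchtrack} with $w=\wHone$ --- whose hypotheses $\wHone_i\ge\Cweightlower$, $B\ge 2/\Cweightlower$, $\Csuf\in(0,1)$ are precisely those assumed here --- applied along each error-causing trajectory, to lower-bound that infimum by $C_{\mathrm{track}}\inf_{\bQ:\,i^*(\bP)\notin i^*(\bQ)}\sum_i\wHone_i(\bQ)(Q_i-P_i)^2/2$, a quantity independent of $T$. Taking $\liminf_{T\to\infty}$ (which removes the $o(1)$), multiplying by $H_1(\bP)$, and invoking Definition~\ref{def_constantfactor} --- whose infimum also ranges over $\bP$ and therefore applies to our fixed $\bP$ --- gives $H_1(\bP)\,\liminf_{T\to\infty}\tfrac1T\log(1/\PoE_\EA(\bP))\ge C_{\mathrm{track}}\Cstability\Cfull$. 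This holds for every $\bP$ with a unique best arm and the right-hand side does not depend on $\bP$, so taking the infimum over $\bP$ in Definition~\ref{def_rate} yields $R_\EA\ge C_{\mathrm{track}}\Cstability\Cfull$. Since $C_{\mathrm{track}}$ (Theorem~\ref{thm_batchtrack}) and $\Cstability$ (Definition~\ref{def_constantfactor}) are both universal, $C:=C_{\mathrm{track}}\Cstability$ completes the proof.

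The main obstacle is the large-deviation upper bound of the second step: turning the heuristics \eqref{ineq_largedevtwo}--\eqref{ineq_twoterms} into a rigorous bound on $\PoE_\EA(\bP)$ for an \emph{adaptive, batched} sampling rule. The delicate points are (i) writing down the factorized joint density of the batch-mean trajectory --- in which each factor's sample size $N_{b,i}$ is determined by the earlier part of the trajectory --- and rearranging $\sum_b N_{b,i}(Q_{b,i}-P_i)^2$ into the overall-mean-plus-variance form of \eqref{ineq_twoterms}; (ii) discretizing the continuum of trajectories at a scale $1/\mathrm{poly}(T)$ so that the net has $\mathrm{poly}(T)$ points and the net error in the exponent is $o(T)$, with sub-Gaussian tails controlling the unbounded part; and (iii) checking that the rounding $\fracwrap{\cdot}$ and the $T-NB$ leftover rounds perturb the exponent only by $o(T)$. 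Once that bound is in place, the conditions $\wHone_i\ge\Cweightlower$, $B\ge 2/\Cweightlower$, $\Csuf\in(0,1)$ are exactly what licenses the appeal to Theorem~\ref{thm_batchtrack}, and everything after that is the mechanical composition with Definition~\ref{def_constantfactor} and Theorem~\ref{thm_lower}.
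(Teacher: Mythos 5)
Your proposal is correct and is essentially the paper's own proof: the paper likewise composes Theorem~\ref{thm_batchtrack} with the constant-factor approximation property (Definition~\ref{def_constantfactor}) to show the error event forces $\sum_{b,i}N_{b,i}(Q_{b,i}-P_i)^2/2 \ge C\,\Cfull T/H_1(\bP)$, bounds the probability of that deviation event, and concludes via the lower bound of Theorem~\ref{thm_lower}. The only difference is presentational: the ``variance-aware large-deviation upper bound'' you flag as the main obstacle is already available in the paper as Lemma~\ref{lem_abstpoe} (proved by exactly the discretization-plus-pre-drawn-samples argument you sketch), so no new derivation is needed.
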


\begin{proof}[Proof of Theorem \ref{thm_rateopt_gen}]
Due to page limitation, the lemmas are deferred to the appendix.
For any $\bP$ with $|i^*(\bP)|=1$ and any $\bQ: i^*(\bQ) \ne i^*(\bP)$, it always holds that
\begin{align}
\lefteqn{
\sum_{b \in [B]} \sum_{i \in [K]} N_{b,i} \frac{(Q_{b,i}-P_i)^2}{2}
}\\
&\ge C_{\mathrm{track}} \inf_{\bQ} \sum_{i \in [K]} w_i(\bQ) \frac{(Q_i -P_i)^2}{2} T
\tag{by Theorem \ref{thm_batchtrack}}\\
&\ge 
\frac{C \Cfull T}{ H_1(\bm{P})}.
\tag{by definition of constant-factor approximation}%
\\
\label{ineq_prob_hone}
\end{align}
for $C = C_{\mathrm{track}} \Cstability >0$.
Therefore, we have
\begin{align}
\lefteqn{
\Prob \left[ i^*(\bQ) \ne i^*(\bP) \right]
}\\
&\le \Prob \left[\sum_{b \in [B]} \sum_{i \in [K]} N_{b,i} \frac{(Q_{b,i}-P_i)^2}{2} \ge \frac{C \Cfull T}{ H_1(\bm{P})} \right]
\tag{by Eq.~\eqref{ineq_prob_hone}}\\
&\le 
\updatedduringaistats{
\exp\left(
- C \left(1 - O\left(\frac{1}{\log K}\right)\right) \frac{\Cfull T}{ H_1(\bm{P})} 
+ o(T)
\right).\tag{by Lemma \ref{lem_abstpoe_choosen}, which uses large deviation}
}
\\\label{ineq_rate_final}
\end{align}
\end{proof}

\begin{remark}{\rm (Finite-time property)}
While our rate is asymptotic, the proof itself does not fundamentally rely on asymptotics. By carefully tracing the argument, one could in fact derive a finite-time bound. Namely, it is possible to express the $o(T)$ term in Eq.~\eqref{ineq_rate_final} explicitly as a function of the model parameters. We do not pursue this direction here, however, since the resulting expression would be unnecessarily cumbersome.
\end{remark}

\section{Addressing optimization issue}\label{sec_optimization}

Theorem \ref{thm_rateopt_gen} ensures the rate-optimality of Algorithm~\ref{alg:R_track_almost} with a constant-factor approximation.
However there remains a challenge to find a good allocation function $\bw(\bQ)$, since its optimization (Eq.~\eqref{def_rgo}) is non-convex and thus computationally costly.

\begin{figure}[t]
    \centering
\includegraphics[width=0.38\textwidth]{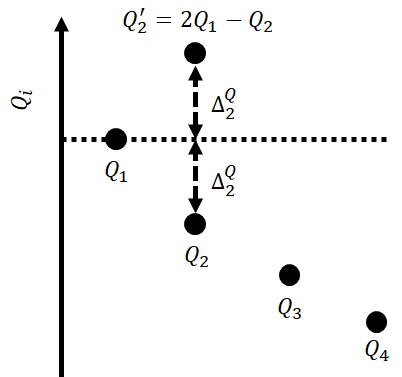}
    \caption{Hypothetical $\bQ'$ such that $H_1'(i, \bQ):=H_1(\bQ')$ with $i=2$. $Q_i'=Q_i$ for $i \ne 2$.}
    \label{fig:sample}
\end{figure}%

As discussed in the introduction, the minimax optimality and corresponding allocation depend on the complexity measure.
In this section, we focus on one of the most standard complexity measure
$
H_1(\bP) = \sum_{i \ne i^*(\bP)} (P^* - P_i)^{-2}.
$
In particular, let
\begin{equation}\label{eq_weight_hone}
w_i(\bQ) := \frac{1}{D_i(\bQ)\ZHone(\bQ)},
\end{equation}
where, for $\Delta_i^{Q} = \max_j Q_j - Q_i = Q^* - Q_i$,
\begin{align}
H_1'(i, \bQ) &= \sum_{j \ne i} \frac{1}{(\Delta_j^{Q} + \Delta_i^{Q})^2},\\
D_i(\bQ) &= 
\begin{cases} %
(\Delta_i^Q)^2 H_1'(i, \bQ) & (i \notin i^*(\bQ))\\
\min_{i \notin i^*(\bQ)} D_i(\bQ) & (i \in i^*(\bQ))\\
\end{cases},
 \label{ineq_Di}
\end{align}
and $\ZHone(\bQ) = \sum_{i \in [K]} (1/D_i(\bQ))$ is the normalization factor.
$H_1'(i, \bQ)$ corresponds to $H_1(\bQ')$ for hypothetical distribution $\bQ'$ such that
$Q_i$ is replaced with $2 Q_1 - Q_i$ as illustrated in Figure~\ref{fig:sample}.

This choice of $\bw$ is inspired by \cite{Carpentier2016}, where $\bQ'$ in Figure~\ref{fig:sample} is used
to construct a lower bound for PoE with respect to $H_1$.
We designed $\bw$ to be inverse proportional to the cost of raising arm $i$ to be optimal arm ($= (\Delta_i^Q)^2$), normalized by the sample complexity when arm $i$ becomes optimal ($= H_1(\bQ')$). 
\begin{restatable}{thm}{approxopt}%
\label{thm:approxopt}
Allocation of Eq.~\eqref{eq_weight_hone} is a constant-factor approximation (Definition \ref{def_constantfactor}).
\label{thm_approxopt}
\end{restatable}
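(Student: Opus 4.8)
The plan is to show that the allocation $\bwHone$ of Eq.~\eqref{eq_weight_hone} achieves, up to a universal constant, the same worst-case value as the game optimum $\Cfull$. Since $\Cfull$ is itself a supremum over all measurable allocations, one direction ($\le$) is immediate, so the real content is the lower bound: $\inf_{\bQ,\bP: i^*(\bP)\notin i^*(\bQ)} H_1(\bP)\sum_i \wHone_i(\bQ)(Q_i-P_i)^2/2 \ge \Cstability \Cfull$ for a universal $\Cstability$. First I would pin down the scale of $\Cfull$ for the $H_1$ risk measure: by the results cited from \cite{Carpentier2016} and \cite{Audibert10} (SR/SH being minimax optimal up to constants for $H_1$), we know $\Cfull = \Theta(1/\log K)$, and more importantly the lower-bound construction in \cite{Carpentier2016} — exactly the hypothetical $\bQ'$ of Figure~\ref{fig:sample} — gives an explicit adversary instance against which any allocation loses $O(1/\log K)$. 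So it suffices to prove a \emph{pointwise} statement: for every $\bQ$, every arm $i$, and the deception that makes $i$ optimal, the cost $\sum_j \wHone_j(\bQ)(Q'_j - Q_j)^2/2$ incurred by $\bwHone$ is at least $\Cstability/\log K$ (after multiplying by $H_1(\bP)$ and taking the relevant infimum over $\bP$).

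The key steps, in order. (i) Reduce the infimum over $\bP$ to a statement about $\bQ$: for fixed $\bQ$, the cheapest deceiving $\bP$ is obtained by pushing some currently-suboptimal arm $i$ up past $i^*(\bQ)$ (or pushing $i^*(\bQ)$ down); standard large-deviation / "who is the most-likely wrong winner" arguments show the optimal such $\bP$ moves only coordinates $i$ and the incumbent optimum, and the resulting cost has the shape $(\Delta_i^Q)^2/2$ times the relevant weights — this is precisely why $D_i(\bQ)=(\Delta_i^Q)^2 H_1'(i,\bQ)$ is the natural normalizer. (ii) Plug in $\wHone$: because $\wHone_i = 1/(D_i \ZHone)$, the cost of the cheapest deception through arm $i$ becomes (up to constants) $\frac{1}{\ZHone(\bQ)}\cdot\frac{(\Delta_i^Q)^2/2}{D_i(\bQ)} \cdot (\text{something }\ge 1)$, and after multiplying by $H_1(\bP)$ — which for the deceived instance is $\Theta(H_1'(i,\bQ))$ — the $D_i$'s cancel, leaving $\Theta(1/\ZHone(\bQ))$. (iii) Bound the normalizer: $\ZHone(\bQ) = \sum_i 1/D_i(\bQ)$, and the crux is $\ZHone(\bQ) = O(\log K)$ uniformly over $\bQ$. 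This is an $H_1$-type harmonic-sum estimate: ordering the gaps $\Delta_{(2)}^Q \le \Delta_{(3)}^Q \le \cdots$, one shows $1/D_i(\bQ) = 1/((\Delta_i^Q)^2 H_1'(i,\bQ))$ behaves like a term of a harmonic series when summed over $i$, because $H_1'(i,\bQ) \gtrsim \sum_{j}1/(\Delta_j^Q + \Delta_i^Q)^2$ already carries the reciprocal-square mass of the "easy" arms. Combining (ii) and (iii) yields the pointwise lower bound $\gtrsim 1/\log K$, and since $\Cfull = \Theta(1/\log K)$, we get $\Cstability \Cfull$ with a universal $\Cstability$.

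The main obstacle I expect is step (iii) — the uniform bound $\ZHone(\bQ) = O(\log K)$ — together with the matching lower bound on the achieved cost in step (ii) when the gap profile is highly non-uniform. If one arm has a tiny gap and the rest are large (or vice versa), the naive term-by-term comparison to $\sum 1/k$ can be off, and one has to group arms by gap scale (dyadic bucketing of the $\Delta_i^Q$) and argue that within each bucket $H_1'(i,\bQ)$ absorbs the within-bucket and larger-bucket mass, so that $\sum_i 1/D_i$ telescopes across $O(\log K)$ buckets with $O(1)$ per bucket. A secondary subtlety is handling the case $i\in i^*(\bQ)$ in Eq.~\eqref{ineq_Di}, where $D_i$ is defined by the min over off-optimal arms; here one checks that this definition does not blow up $\ZHone$ (it contributes one extra term comparable to the smallest $D_j$, hence comparable to an existing term up to a constant) and does not weaken the achieved cost, since any deception must in particular not make the incumbent optimum stay strictly on top. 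Once the bucketing lemma is in place, the rest is bookkeeping with the explicit formulas for $H_1'$, $D_i$, and $\ZHone$.
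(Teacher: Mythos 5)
Your overall architecture coincides with the paper's: write the achieved value as $\Stbl(\bQ,\bP)/\ZHone(\bQ)$ with $\Stbl(\bQ,\bP)=H_1(\bP)\sum_i (Q_i-P_i)^2/D_i(\bQ)$, show it is $\Omega(1/\ZHone(\bQ))$, bound $\ZHone(\bQ)=O(\log K)$, and compare with $\Cfull=\Theta(1/\log K)$ from \cite{Carpentier2016}. However, there is a genuine gap at your steps (i)--(ii), and it is exactly where the paper's real work lies. You assert that the cheapest deceiving $\bP$ moves only arm $i$ and the incumbent, so that $H_1(\bP)=\Theta(H_1'(i,\bQ))$ and the $D_i$'s cancel. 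That identity holds for the \emph{designed} two-point instance $\bQ'$ (this is the motivation for $\wHone$), but the infimum in Definition~\ref{def_constantfactor} ranges over all $\bP$ with $i^*(\bP)\notin i^*(\bQ)$, and the objective is a \emph{product} $H_1(\bP)\cdot\sum_i \wHone_i(\bQ)(Q_i-P_i)^2/2$: the adversary can push many arms far below their empirical means to shrink $H_1(\bP)$, trading this against movement cost weighted by possibly tiny $\wHone_i$. Ruling out that this trade-off beats the two-point construction by more than a constant is precisely the paper's Stability Lemma (Lemma~\ref{lem_stability}), whose proof occupies Claims~\ref{claim: obvious}--\ref{claim: final lower bound of V2q} plus the comparison Lemmas~\ref{lem_asdf}, \ref{lem_dlower}, \ref{lem_dupper}: one must first localize the minimizer ($P_j\ge Q_j$, $P_i\le Q_i$, the downward moves of better-ranked arms bounded by $P_j-Q_j$, and $P_j\le 2Q_1-Q_j$ up to a factor $4$), and then a Cauchy--Schwarz argument using $D_j(\bQ)=\Theta\bigl(j+\sum_{l>j}(\Delta_j^Q)^2/(\Delta_l^Q)^2\bigr)$ shows the product stays $\Omega(1)$ even when arbitrarily many coordinates move. "Standard most-likely-wrong-winner" reasoning does not supply this, because $H_1(\bP)$ couples all coordinates.

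Conversely, the step you flag as the main obstacle, $\ZHone(\bQ)=O(\log K)$, is the easy part and needs no dyadic bucketing: Lemma~\ref{lem_dlower} gives $D_i(\bQ)\ge i+\sum_{j>i}(\Delta_i^Q)^2/(\Delta_j^Q)^2\ge i$ once arms are sorted by empirical mean, so $\ZHone(\bQ)\le\sum_i 1/i\le\log K+1$ (Lemma~\ref{lem_ratio}). Your handling of ties in $i^*(\bQ)$ matches the paper's separate treatment and is fine. To complete your proposal you would need to either prove the stability statement $\inf_{\bP:i^*(\bP)\notin i^*(\bQ)}\Stbl(\bQ,\bP)\ge\Cstability$ uniformly in $\bQ$, or give a correct argument that the unrestricted infimum over $\bP$ is within a universal constant of the two-coordinate perturbation; as written, that step is asserted rather than established.
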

This theorem implies the
rate-optimality with respect to the risk measure $H_1$.
\begin{thm}{\rm (Rate-optimal computationally efficient algorithm)}\label{thm_rateopt_hone}
Assume that $N = \omega(K), T=\omega(NK\log K)$.
Then, Algorithm \ref{alg:R_track_almost} with $\bw$ defined in Eq.~\eqref{eq_weight_hone} is rate-optimal for $H_1(\bP)$.
\end{thm}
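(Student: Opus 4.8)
The plan is to obtain Theorem~\ref{thm_rateopt_hone} as a corollary of the generic guarantee Theorem~\ref{thm_rateopt_gen}, whose two hypotheses on the input allocation are that it is a constant-factor approximation in the sense of Definition~\ref{def_constantfactor} and that it is uniformly bounded below, $\wHone_i(\bQ)\ge\Cweightlower(K)>0$ for all $\bQ,i$. The first hypothesis for $\bwHone$ of Eq.~\eqref{eq_weight_hone} is exactly Theorem~\ref{thm:approxopt}, which I would quote verbatim. What remains is therefore (a) to establish the uniform lower bound $\wHone_i(\bQ)\ge\Cweightlower$, and (b) to verify that the standing assumptions $N=\omega(K)$ and $T=\omega(NK\log K)$ supply the remaining quantitative requirements of Theorem~\ref{thm_rateopt_gen}, namely the batch count $B=\lfloor T/N\rfloor\ge 2/\Cweightlower$ and the negligibility of the integer rounding performed by $\fracwrap{\cdot}$.

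For (a), fix $\bQ$ and relabel the arms so that the empirical gaps $\Delta^Q_1\le\Delta^Q_2\le\cdots\le\Delta^Q_K$ are nondecreasing, with $\Delta^Q_1=0$. For $i\notin i^*(\bQ)$ the quantity $D_i(\bQ)=\sum_{j\ne i}(\Delta^Q_i)^2/(\Delta^Q_j+\Delta^Q_i)^2$ has every summand in $(0,1]$, and the summand at any $j$ with $\Delta^Q_j=0$ (there is at least one, and it is $\ne i$) equals $1$; hence $1\le D_i(\bQ)\le K-1$, and the same bracket holds for $i\in i^*(\bQ)$ by the definition of $D_i(\bQ)$ as a minimum over non-optimal arms. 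Sharpening the lower bound, for an arm sitting in sorted position $i$ the $i-1$ summands whose index $j$ lies in positions $1,\dots,i-1$ each exceed $1/4$ (there $\Delta^Q_j\le\Delta^Q_i$, so $(\Delta^Q_j+\Delta^Q_i)^2\le 4(\Delta^Q_i)^2$), whence $D_i(\bQ)\ge\max\{1,(i-1)/4\}$ and therefore $\ZHone(\bQ)=\sum_i 1/D_i(\bQ)\le\sum_i\min\{1,4/(i-1)\}=O(\log K)$. Combining the two estimates, $\wHone_i(\bQ)=1/(D_i(\bQ)\ZHone(\bQ))\ge 1/((K-1)\cdot O(\log K))=:\Cweightlower$, a quantity depending only on $K$ with $\Cweightlower=\Omega(1/(K\log K))$.

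For (b), since $\Cweightlower=\Theta(1/(K\log K))$, the assumption $T=\omega(NK\log K)$ gives $B=\lfloor T/N\rfloor=\omega(K\log K)\ge 2/\Cweightlower$ for all large $T$, which is precisely the batch-count requirement of Theorem~\ref{thm_rateopt_gen}; this is where the $\log K$ factor in the assumption is consumed, in parallel with $\Cfull=\Theta(1/\log K)$ for $H_1$. The assumption $N=\omega(K)$ makes $\fracwrap{N/K}\ge 1$, so the initialization draws every arm (hence $\bar{\bQ}_1$ is well defined) at a cost of only $N=o(T)$ samples, and, using the properties of $\fracwrap{\cdot}$ from Appendix~\ref{sec:fractional_pulls}, each per-batch discrepancy $|\fracwrap{w_{b,i}N}-w_{b,i}N|$ is $O(1)$, so the cumulative discrepancy over the $B=O(T/N)$ batches is $O(T/N)=o(T)$ and moves the large-deviation exponent only by $o(T)$. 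With (a) and (b) in hand Theorem~\ref{thm_rateopt_gen} applies to Algorithm~\ref{alg:R_track_almost} run with $\bw=\bwHone$, and following its proof $\PoE(\bP)\le\exp(-C\,\Cfull\,T/H_1(\bP)+o(T))$ for the universal constant $C=C_{\mathrm{track}}\Cstability$, so $R_{\EA}\ge C\,\Cfull\ge C\,R_{\EA^*}$ for every competitor $\EA^*$ (Theorem~\ref{thm_lower}) and every $K$; that is, the algorithm is rate-optimal for $H_1$.

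The core difficulty of the program has already been absorbed by Theorems~\ref{thm_batchtrack}, \ref{thm:approxopt} and \ref{thm_rateopt_gen}, so for this statement the genuinely new content is small: the two-sided bound on $D_i(\bQ)$ and the harmonic-sum estimate for $\ZHone(\bQ)$ in step (a) are short, and the conclusion is a direct appeal to the established machinery. I expect the main obstacle to be the bookkeeping in step (b)---confirming that the combined effect of the active-set restriction $\mathcal{K}_{\mathrm{insuf}}$ and the $\fracwrap{\cdot}$ rounding contributes only an additive $o(T)$ to the exponent rather than a multiplicative loss, and that $N=\omega(K)$, $T=\omega(NK\log K)$ are exactly the slack this requires.
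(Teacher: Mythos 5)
Your proposal is correct and follows essentially the same route as the paper, which proves this theorem in one line by invoking Theorem~\ref{thm_approxopt} together with Theorem~\ref{thm_rateopt_gen} at $\Cweightlower=\Theta((K\log K)^{-1})$. The details you supply for the uniform lower bound on $\wHone_i$ (the two-sided bracket on $D_i(\bQ)$ and the harmonic estimate $\ZHone(\bQ)=O(\log K)$) are exactly the content of the paper's Lemmas~\ref{lem_dlower}, \ref{lem_dupper} and \ref{lem_ratio}, and your verification of $B\ge 2/\Cweightlower$ and of the $\fracwrap{\cdot}$ rounding matches the paper's intended use of the assumptions $N=\omega(K)$, $T=\omega(NK\log K)$.
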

Theorem \ref{thm_rateopt_hone} follows from Theorem \ref{thm_approxopt} combined with Theorem \ref{thm_rateopt_gen} with $\Cweightlower = \Theta((K\log K)^{-1})$.

\begin{table*}[t]
    \centering
    \caption{Estimated minimax rates of algorithms (larger better). Our algorithms, Simple tracking and Almost tracking are abbreviated as S.Track and A.Track, respectively.
    Left (resp.~right) four algorithms are non-anytime (resp.~anytime). Results with error bars (confidence region) are reported in Appendix \ref{subsec:confidence_intervals}. Bold font highlights possible best with two-sigma confidence level. Results of Uniform and EB-TC$_{\varepsilon_0}$ are also in \ref{subsec:confidence_intervals}.}
    \label{tab:minimax_rates}
    \begin{tabular}{c||cccc|cccc|cc}
    \toprule
    Metric & SR & SH & CR-A & CR-C & DSR & DSH & S.Track & A.Track & TS-TC & EB-TC \\
    \midrule
$H_1$ & 0.456 & 0.222 & 0.435 & 0.312 & 0.234 & 0.114 & \textbf{0.537} & \textbf{0.509} & 0.396 & 0.181 \\
$H_2$ & \textbf{0.256} & 0.106 & \textbf{0.271} & 0.194 & 0.131 & 0.056 & \textbf{0.260} & 0.255 & 0.227 & 0.104 \\
    \bottomrule
\end{tabular}
\end{table*}

\section{Experiments}\label{sec:experiments}

This section reports the results of our simulations. Our implementation is available at \url{https://github.com/jkomiyama/fb_bai_publish}.
We compared the following algorithms: 
\begin{itemize}
  \setlength{\parskip}{-0.15cm}
  \setlength{\itemsep}{0.25cm}
    \item \updatedduringaistats{Uniform algorithm that draws arms in a round robin.}
    \item Fixed-budget algorithms (require $T$): SR \citep{Audibert10}, SH \citep{Karnin2013}, and two versions of CR (CR-A and CR-C) \citep{wang2023best}. %
    \item Anytime algorithms (do not require $T$): DSH \citep{DBLP:conf/icml/ZhaoSSJ23} and Double Successive Rejects (DSR) that adopt the doubling trick. Our Simple Tracking (Algorithm \ref{alg:R_track}), and Almost Tracking (Algorithm \ref{alg:R_track_almost}). %
    \item Fixed-confidence algorithms: EB-TC and TS-TC \citep{shang2020fixed,jourdan2022top}, two empirically good versions of top-two Thompson sampling \citep{Russo2020}\updatedduringaistats{, and EB-TC$_{\varepsilon_0}$ \citep{jourdaneps2023}.} These algorithms are suboptimal in the fixed-budget setting.
\end{itemize}
Details of the algorithms are in Appendix \ref{subsec:compared_algorithms}.

\subsection{Simulation results: Minimax rates}
\label{sec:minimax_simulation}

We consider ten different instances of $\bP$ with $K=40$ arms with Gaussian rewards with a unit variance.  Our goal is to maximize the performance of algorithms in the most challenging environment. 
The instances are derived from \cite{wang2023best} and we added theoretically challenging instances for SH and Tracking.
Details of the instances are in Appendix \ref{subsec:instances_minimax_rates}. 
Table \ref{tab:minimax_rates} shows the estimated minimax rates by algorithm, which we compute the worst-case rate over ten synthetic instances, where the rate for instance $\bP$ is defined as $T^{-1} H_1(\bP) \log(1/\PoE(T))$,
which corresponds to the exponent of the probability of error normalized by the complexity $H_1(\bP)$. 
As we can see, our algorithms outperform all existing algorithms, including anytime algorithms (DSH and DSR) as well as CR, SR, and SH, with respect to the risk measure $H_1(\bP)$. 
Notably, our algorithms outperform those that require $T$, despite not having access to it.
SH does not perform as well as SR because it discards the samples at the end of each batch. DSH and DSR further discard even more samples, which leads to worse performance.
Table~\ref{tab:minimax_rates} also presents the rates under $H_2(\bP) = \max_i i \Delta_{(i)}^{-2}$, the complexity measure that SR is specifically optimized for (see Section~\ref{sec:sr_comparison}).
On this measure, the gap between SR, CR and Tracking is very narrow.
This is reasonable because SR is tailored to $H_2(\bP)$, and CR has some similarities to SR.
We further elaborate on these measures in Section \ref{sec:sr_comparison}. 

TS-TC, a fixed-confidence algorithm, performs well overall even though it is not optimized for the fixed-budget setting. However, we note that the rate of EB-TC and TS-TC with a large $T$ can be arbitrarily bad (see Section \ref{subsec:two_two_suboptimality} for such an instance).

\subsection{Simulation results: Real-world datasets}

We further create instance $\bP$ based on two real-world datasets. 
The Open Bandit Dataset \citep{SaitoOBP} is a real-world logged bandit dataset collected from a fashion e-commerce platform. 
The Movielens 1M dataset \citep{movielens1m} is a dataset that includes anonymous ratings of popular movies. 
Details of the datasets are in Appendix \ref{tab:realworld_instances}.

Figure \ref{fig:two_plots} shows the results of the algorithms on real-world datasets. Tracking and AlmostTracking outperform anytime algorithms (DSH and DSR) as well as non-anytime algorithms (SR, and SH).
The only exception is CR, which performs comparably to our algorithms in the Movielens 1M dataset. CR does not perform well in Open Bandit Dataset, which we elaborate in the section \ref{subsec:kyoungseok_cr}.

\begin{figure*}[t]
    \centering
    \begin{subfigure}[b]{0.48\textwidth}
        \centering
        \includegraphics[width=\textwidth]{./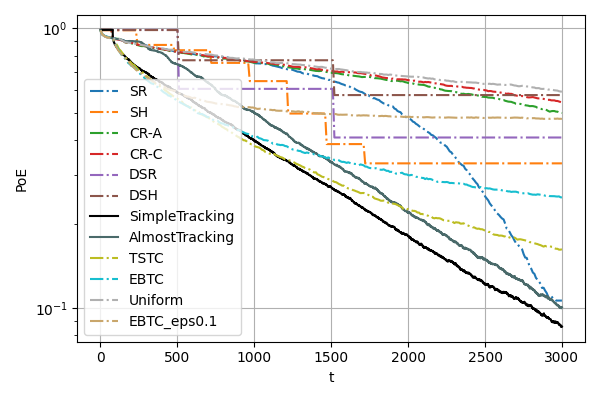}
        \caption{Open Bandit Dataset $K=80$}
        \label{fig:plot60}
    \end{subfigure}
    \hfill
    \begin{subfigure}[b]{0.48\textwidth}
        \centering
        \includegraphics[width=\textwidth]{./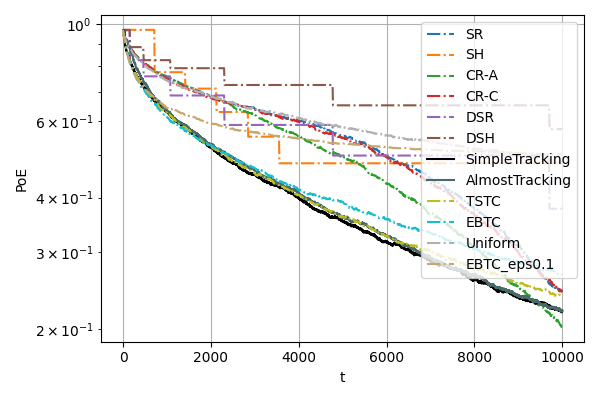}
        \caption{Movielens 1M Dataset $K=31$}
        \label{fig:plot61}
    \end{subfigure}
    \caption{Comparison of PoE across algorithms on real-world datasets (smaller better). For algorithms that do not discard samples, $J(t)$ is the empirical best arm at time $t$. For discarding algorithms (SH, DSH, DSR), $J(t)$ is the empirical best arm at the end of the most recent batch. }%
    \label{fig:two_plots}
    \vspace{-1em}
\end{figure*}

\section{Rate of successive rejects and comparison with our algorithms}
\label{sec:sr_comparison}

To compare our algorithm with SR, we show the following characterization of SR's performance.\footnote{While Theorem 2 in \cite{wang2023best} derives the same upper bound, we complement this result by showing that it also serves as a lower bound. Note that, Section 5.2 in \cite{WangAP24} derives a similar exact characterization for the case of Bernoulli rewards.}
\begin{thm}\label{thm:hthree_main}
For complexity measure $H_3(\bP)$ defined in Appendix~\ref{sec:tight_sr_analysis}, PoE of SR satisfies
\begin{equation}\label{eq_sr_error_a}
    \lim_{T \rightarrow \infty} \frac{1}{T}H_3(\bP) \olog(K)
    \log(1/\PoE(T)) 
     = 1.
     \nonumber
    \end{equation}
    Moreover, $(1/2) H_2(\bP) \le H_3(\bP) \le H_2(\bP)$ holds, where $H_2(\bP) = \max_i i \Delta_{(i)}^{-2}$ is the complexity measure proposed in \cite{Audibert10}.
\end{thm}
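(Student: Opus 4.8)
The plan is to analyze Successive Rejects directly by writing out its per-phase structure and computing the exact error exponent, then comparing the resulting complexity $H_3$ to $H_2$. Recall SR runs in $K-1$ phases; in phase $k$ it has a surviving set $\mathcal{A}_k$ of size $K-k+1$, allocates $n_k - n_{k-1}$ additional pulls to each surviving arm (where $n_k \approx \frac{T - K}{\olog(K)}\cdot\frac{1}{K-k+1}$ up to the rounding in \citet{Audibert10}), and eliminates the empirically worst arm. SR errs if and only if at \emph{some} phase the true best arm $i^*$ is the one eliminated. The first step is to fix an ordering $\Delta_{(1)}\le\Delta_{(2)}\le\dots$ of the gaps and, for each phase $k$, identify the dominant misidentification event: $i^*$ being eliminated in phase $k$ requires the empirical mean of $i^*$ to fall below that of some arm $j\in\mathcal{A}_k$, an event whose probability under Gaussian rewards with $n_k$ samples each has log-probability $\sim -n_k \Delta_j^2/4$ (the worst $j$ in $\mathcal{A}_k$ being the one with the smallest gap still alive, governed by $\Delta_{(K-k+1)}$ after sorting). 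Summing the phase budgets and optimizing the trade-off yields, I expect, that $H_3(\bP)$ is precisely $\max_k (K-k+1)\Delta_{(k)}^{-2}$ but with the max taken over a slightly refined schedule reflecting which arms actually remain — this is the content of the definition deferred to Appendix~\ref{sec:tight_sr_analysis}.

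Concretely I would proceed as follows. \textbf{(Upper bound on PoE, i.e.\ lower bound on the exponent.)} Union-bound over the $K-1$ phases and over the arms that could displace $i^*$ within each phase; apply the Gaussian large-deviation (Chernoff) bound with the exact phase sample counts $n_k$; the polynomial and $K$ factors are absorbed into $o(T)$; the dominant term is $\exp(-\min_k n_k \cdot (\text{relevant squared gap})/ \text{const} + o(T))$, which after substituting $n_k$ gives $\exp(-\frac{T}{H_3(\bP)\olog(K)} + o(T))$. \textbf{(Lower bound on PoE, i.e.\ matching upper bound on the exponent.)} Isolate the single phase $k^\star$ and single competitor arm achieving the min above; the event that this one arm beats $i^*$ using only its own phase samples is an independent Gaussian tail event with probability $\exp(-\frac{T}{H_3(\bP)\olog(K)} - o(T))$, and it forces a wrong elimination, hence forces $\PoE(T)$ to be at least this large. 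Combining the two bounds gives the claimed limit $=1$. \textbf{(Comparison $ (1/2)H_2 \le H_3 \le H_2$.)} The upper inequality $H_3\le H_2$ is immediate because $H_3$ arises from the actual (possibly favorable) SR schedule while $H_2=\max_i i\,\Delta_{(i)}^{-2}$ is a coarser envelope; the lower inequality $H_3\ge (1/2)H_2$ comes from showing the refined schedule loses at most a factor $2$ against the crude one — essentially because $K-k+1 \ge (1/2)\cdot i$ for the relevant index matching, or because rounding in the $n_k$ definition costs at most a constant.

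The main obstacle I anticipate is \textbf{pinning down the exact constant in the exponent}, which is where the factor distinguishing $H_3$ from $H_2$ comes from. The subtlety is that when $i^*$ competes against arm $j$ in phase $k$, \emph{both} empirical means are formed from $n_k$ i.i.d.\ samples, so the relevant rate is $\inf_x [ (x-P_{i^*})^2/2 + (x - P_j)^2/2 ]$ over crossing points $x$, which equals $\Delta_j^2/4$, not $\Delta_j^2/2$ — and one must further check that using \emph{all} samples collected up to phase $k$ (not just phase-$k$ samples) does not improve the exponent, since SR's elimination decision at phase $k$ is based on the cumulative empirical means. Getting this cumulative-sample bookkeeping exactly right, so that the liminf and limsup match and the $\olog(K)$ normalization appears cleanly, is the delicate part; everything else is a routine large-deviation union bound plus the arithmetic of the harmonic-like SR schedule.
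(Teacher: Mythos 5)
Your skeleton (a phase-wise union bound for the upper bound, an explicit bad event for the lower bound, then an extremal comparison of $H_3$ with $H_2$) matches the paper's, but the central idea is missing and the pairwise reduction on which you build everything cannot yield the stated result. For the best arm to be rejected at phase $k$ it must be the empirically \emph{worst} among the $K-k+1$ survivors, i.e.\ it must lie below \emph{all} $K-k$ remaining competitors simultaneously. The paper therefore applies Cram\'er's theorem to the joint event $\{\hat{X}_{j,n_k}\ge \hat{X}_{1,n_k}\ \text{for all } j\in\{2,\dots,K-k+1\}\}$, a convex set in $\mathbb{R}^{K-k+1}$, whose rate function is minimized when the best arm's empirical mean and those of the low-mean competitors all collapse onto a common pooled value $\modP{K-k+1}$; this is exactly where the modified means and the quantity $\sum_{i=1}^{j}(\Delta_i^{(j)})^2$ in the definition of $H_3$ come from. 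Your proposed dominant event --- a single competitor $j$ crossing $i^*$, with rate $\inf_x[(x-P_{i^*})^2/2+(x-P_j)^2/2]=\Delta_j^2/4$ --- recovers only the coarser $H_2$-type exponent on the upper-bound side, and on the lower-bound side it fails outright: one arm beating $i^*$ does not force $i^*$ to be eliminated unless only two arms remain, so the event you exhibit does not imply an error and cannot lower-bound $\PoE(T)$. Without the joint-event/pooled-mean structure your two bounds do not meet, and the limit in the theorem cannot be established.

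Two further points. First, your guess that $H_3$ is ``$\max_k (K-k+1)\Delta_{(k)}^{-2}$ over a refined schedule'' is essentially $H_2$ again; the factor separating $H_3$ from $H_2$ has nothing to do with rounding of $n_k$ or index matching. It comes from an extremal computation (Lagrange multipliers under the constraints $\sum_{i=2}^{j}\Delta_i^{(j)}=\Delta_1^{(j)}$ and $\Delta_1^{(j)}+\Delta_i^{(j)}\le\Delta_j$) showing $\tfrac{1}{2}\Delta_j^2\le\sum_{i=1}^j(\Delta_i^{(j)})^2<\Delta_j^2$, with the left end attained at $K=2$ and the right end approached when all gaps are equal and $K\to\infty$; this is precisely the content of $(1/2)H_2\le H_3\le H_2$. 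Second, the cumulative-sample bookkeeping you correctly flag as delicate is resolved in the paper by requiring each phase's \emph{fresh} per-phase sample means $Y_{i,k}$ to lie in fixed intervals determined by $\modP{j^*}$ for every phase $k\le k^*$; since each cumulative mean $\hat{X}_{i,n_k}$ is a convex combination of the per-phase means, this forces the prescribed elimination order deterministically, the per-phase events are independent, and their rates simply add to give the matching lower bound on $\log\PoE$.
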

Theorem \ref{thm:hthree_main} tightly characterizes SR by providing matching upper and lower bounds, whereas \cite{Audibert10} offered only an upper bound.
This analysis enables us to fairly compare our algorithm with SR, eliminating the possibility of SR's analysis being loose.

\paragraph{Performance comparison between SR and Almost Tracking}

\updatedafterneurips{Theorem 1 in \cite{Carpentier2016} implies that $\Cfull = \Theta(1/(\log K))$.}
This fact, combined with our Theorem \ref{thm_rateopt_hone} implies our algorithm has PoE at most 
\begin{equation}\label{eq_our_error}
\exp\left(-\frac{CT+o(T)}{(\log K)H_1(\bP)}
\right)
\end{equation}
for some universal constant $C>0$, whereas Theorem \ref{thm:hthree_main} states that SR has PoE of 
\begin{equation}\label{eq_sr_error_b}
\exp\left(-\frac{T+o(T)}{(\olog \,K)H_3(\bP)}
\right).
\end{equation}
It holds that $H_3(\bP) \le H_1(\bP) \le H_3(\bP)(\log K)$, and thus Eq.~\eqref{eq_our_error} and Eq.~\eqref{eq_sr_error_b} are identical up to a $\log K$ factor.
Moreover, in the appendix, we show some examples where the error rate of Almost Tracking is strictly better than that of SR: 
\begin{lem}{\rm (Instance where Almost Tracking outperforms SR)}\label{lem_ourwin} %
There exists an instance such that the rate of our algorithm is $O((\log K)/(\log\log K))$ times larger than that of SR.
\end{lem}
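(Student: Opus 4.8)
The plan is to exhibit an explicit family of instances on which the two complexity quantities governing the two algorithms differ by the desired factor. From Eq.~\eqref{eq_our_error} the rate of Almost Tracking is $\Theta\bigl(1/((\log K) H_1(\bP))\bigr)$, while from Eq.~\eqref{eq_sr_error_b} the rate of SR is $\Theta\bigl(1/((\olog K) H_3(\bP))\bigr)$, where $\olog K = \tfrac12 + \sum_{i=2}^{K} 1/i = \Theta(\log K)$. Hence the ratio of rates is $\Theta\bigl(H_3(\bP)/H_1(\bP)\bigr)$ up to the harmless difference between $\log K$ and $\olog K$. So it suffices to find an instance with $H_1(\bP) / H_3(\bP) = O\bigl((\log K)/(\log\log K)\bigr)$. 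Wait — we want Almost Tracking's rate to be \emph{larger}, i.e.\ $H_3(\bP)/H_1(\bP) \cdot (\log K / \olog K)$ to be large; since $H_3 \le H_1$ always, the ratio $H_3/H_1 \le 1$, so this route as stated cannot give a gain bigger than a constant. The correct comparison must instead go through $H_2$: SR's true rate, by Theorem \ref{thm:hthree_main}, is $\Theta(1/((\log K)H_3(\bP)))$ with $H_3 \ge \tfrac12 H_2$, but one can also simply use the \emph{upper bound} on SR's rate, namely that SR's rate is at most $\Theta(1/((\log K) H_3(\bP)))$ and $H_3(\bP) \ge \tfrac{1}{2}H_2(\bP)$. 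Meanwhile Almost Tracking's rate is $\Theta(1/((\log K) H_1(\bP)))$. The ratio is therefore $\Theta(H_3(\bP)/H_1(\bP))$ — bounded by $1$. To get a genuine separation we must instead compare $H_1$ against $H_2$ on an instance where $H_1 \ll H_2 \log K$ fails to be tight: concretely, pick an instance where $H_2(\bP) = \Theta((\log K) H_1(\bP))$ (the extreme of the sandwich $H_1 \le H_2 \log K$ — actually the known relation is $H_2 \le H_1 \le H_2 \log(2K)$), so that $H_3 = \Theta(H_2) = \Theta((\log K)H_1 / \log\log K)$ cannot happen directly; instead one arranges $H_3(\bP) = \Theta(H_2(\bP))$ while $H_1(\bP) = \Theta(H_2(\bP)\log\log K / \log K)$.

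Concretely, I would take the instance with geometrically spaced gaps: $K$ arms with $\Delta_{(1)} = 0$ (the optimal arm) and $\Delta_{(i)}$ chosen so that $i\,\Delta_{(i)}^{-2}$ is roughly constant in $i$ — e.g.\ $\Delta_{(i)} = \sqrt{i}\cdot\Delta$ for a scale $\Delta$, giving $H_2(\bP) = \max_i i/(i\Delta^2) = \Theta(1/\Delta^2)$. For this instance $H_1(\bP) = \sum_{i\ge 2} 1/(i\Delta^2) = \Theta((\log K)/\Delta^2) = \Theta((\log K) H_2(\bP))$, so here $H_1 \gg H_2$ and SR \emph{wins} — the wrong direction. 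So instead I would use an instance at the opposite extreme, where many arms share a common small gap: say $m$ arms at gap $\Delta$ and the rest at gap $1$, with $m$ chosen as a function of $K$. Then $H_1 = \Theta(m/\Delta^2 + K)$ and $H_2 = \max(m/\Delta^2, K)$, so $H_1 = \Theta(H_2)$ — no separation in either direction there. The separation between Almost Tracking and SR comes not from $H_1$ vs $H_2$ at all but from the fact that SR's bound carries an extra $\log K$ that Almost Tracking's does not, \emph{combined} with an instance where Almost Tracking's worst-case allocation is far better than the uniform-elimination allocation SR is forced into. The right construction is the one flagged in the experiments as ``theoretically challenging for SH and Tracking'': I would build an instance where the $H_1$-optimal allocation $\bwHone$ concentrates on very few arms, so Almost Tracking's effective complexity is $\Theta(H_1)$, while SR's staged elimination wastes a $\log K$ factor \emph{and} an additional $\log\log K / $ -- let me instead just take SR's rate upper bound $\Theta(1/((\olog K) H_3))$ with $H_3 = \Theta(H_1 / \log K)$ on a carefully chosen instance (this is achievable since $H_3 \le H_2 \le H_1$ and on some instances $H_1/H_2 = \Theta(\log K / \log\log K)$), whence SR's rate is $\Theta(\log K/((\log K) H_1)) = \Theta(1/H_1)$ compared with Almost Tracking's $\Theta(1/((\log K) H_1))$ — again the wrong direction.

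Given the sign confusion above, the cleanest correct plan is: (i) recall from Theorem \ref{thm:hthree_main} the \emph{exact} SR exponent $1/((\olog K)H_3(\bP))$ and from Theorem \ref{thm_rateopt_hone} plus Theorem 1 of \cite{Carpentier2016} the Almost Tracking exponent lower bound $C/((\log K)H_1(\bP))$; (ii) exhibit an instance $\bP^{(K)}$ on which $H_3(\bP^{(K)}) = \Theta\bigl((\log K/\log\log K)\,H_1(\bP^{(K)})\bigr)$ — this is the key combinatorial content and is where the real work lies: one needs the $H_1$-complexity to be small (optimal adaptive allocation effective) while the $H_3$-complexity, which tracks SR's batch schedule, is inflated by the number of elimination rounds $\approx \log_2 K$ interacting with a gap profile that makes the worst surviving arm hard; the standard candidate is the profile $\Delta_{(i)} = \Delta/\sqrt{\log(i+1)}$ or a two-scale profile with $\Theta(K/\log K)$ arms at the boundary; (iii) plug these into the ratio, so that SR's rate becomes $\Theta\bigl(1/((\log K)\cdot(\log K/\log\log K)H_1)\bigr) = \Theta\bigl(\log\log K/((\log K)^2 H_1)\bigr)$ while Almost Tracking's is $\Theta(1/((\log K)H_1))$, giving a rate ratio of $\Theta(\log K/\log\log K)$ as claimed. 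The main obstacle is step (ii): constructing and verifying the instance where $H_3/H_1$ achieves the order $\log K/\log\log K$, which requires computing $H_3$ (equivalently, analyzing the worst round in SR's elimination schedule via the Appendix~\ref{sec:tight_sr_analysis} characterization) on the candidate gap profile and showing $H_1$ on the same profile is smaller by exactly that factor; everything else is bookkeeping with the two error exponents already established.
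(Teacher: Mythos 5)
There is a genuine gap: your final plan hinges on step (ii), exhibiting an instance with $H_3(\bP)=\Theta\bigl((\log K/\log\log K)\,H_1(\bP)\bigr)$, but this is impossible because $H_3(\bP)\le H_2(\bP)\le H_1(\bP)$ always (the paper itself states $H_3\le H_1\le H_3\log K$, and you even note this obstruction earlier in your own write-up). Consequently no instance can make the two \emph{generic} exponent formulas, $1/((\olog K)H_3)$ for SR and $1/((\log K)H_1)$ for Almost Tracking, differ by more than a constant in the desired direction, so the entire strategy of comparing those two closed-form bounds cannot prove the lemma. You correctly diagnosed this dead end in your first paragraph and then circled back to it anyway.

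The missing idea is that the separation does not come from $H_1$ versus $H_2$ (or $H_3$) at all; it comes from the fact that the worst-case guarantee $1/((\log K)H_1)$ for Almost Tracking is not tight on every instance. The paper takes $P_1=2$, $P_2=\dots=P_{\log K}=1$, $P_{\log K+1}=\dots=P_K=0$, on which $H_1=\Theta(H_2)=\Theta(K)$ (exactly the ``$m$ arms at one gap, the rest at another'' family you considered and dismissed because $H_1=\Theta(H_2)$ there). It then lower-bounds Almost Tracking's rate \emph{instance-specifically} via Theorem \ref{thm_batchtrack}: the rate is $\inf_{\bQ}\sum_i w_i(\bQ)(Q_i-P_i)^2/2 = \inf_{\bQ}\Stbl(\bQ,\bP)/(Z(\bQ)H_1(\bP))$, and Lemma \ref{lem:rate_outwin} shows that on this instance $\inf_{\bQ}\Stbl(\bQ,\bP)/Z(\bQ)=\Omega(1/\log\log K)$ --- the normalization $Z(\bQ)=\sum_i 1/D_i(\bQ)$, which is $\Theta(\log K)$ in the worst case and is the source of the $\log K$ in Eq.~\eqref{eq_our_error}, collapses to $O(\log\log K)$ here because the harmonic-type sum is effectively truncated after about $\log K$ terms. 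This gives Almost Tracking a rate of $\Omega(1/(K\log\log K))$, while the tight SR characterization (Theorem \ref{thm:hthree_main}, via $H_3\ge H_2/2$) pins SR at $\Theta(1/((\log K)K))$, yielding the $\Omega(\log K/\log\log K)$ ratio. Without this instance-specific analysis of the allocation $\bwHone$ (or some substitute for it), the lemma cannot be reached.
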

\begin{lem}{\rm (Instance where SR may outperform Almost Tracking)}\label{lem_srwin}
There exists an instance such that the rate of SR is $O(\log K)$ times larger than that of Eq.~\eqref{eq_our_error}.
\end{lem}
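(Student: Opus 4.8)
The plan is to exhibit a single Gaussian instance $\bP^\star$ on which the complexity measure governing our certified rate, $H_1(\bP^\star)$, sits as far above the one governing SR's rate, $H_3(\bP^\star)$, as the general inequality $H_3(\bP)\le H_1(\bP)\le (\log K)H_3(\bP)$ (recorded just below Eq.~\eqref{eq_sr_error_b}) permits, and then to substitute $\bP^\star$ into Eq.~\eqref{eq_our_error} and into Theorem~\ref{thm:hthree_main}. Concretely, I would take unit-variance Gaussian arms with a unique best arm of mean $0$ and suboptimal means $\mu_2^\star>\cdots>\mu_K^\star$ whose squared gaps grow linearly in rank, for instance $(\Delta_i^\star)^2=i/K$ for $i=2,\dots,K$ (so every gap lies in $(0,1]$). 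This is the classical configuration that separates $H_1$ from $H_2$, and the one modelling choice that actually matters is to make the squared gaps grow exactly linearly in rank: faster growth would pull $H_1$ back toward $H_2$ and kill the separation.

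On this instance a short computation gives $H_1(\bP^\star)=\sum_{i=2}^K (\Delta_i^\star)^{-2}=\sum_{i=2}^K K/i=\Theta(K\log K)$ and $H_2(\bP^\star)=\max_{2\le i\le K} i\,(\Delta_{(i)}^\star)^{-2}=\Theta(K)$; the sandwich $(1/2)H_2(\bP)\le H_3(\bP)\le H_2(\bP)$ from Theorem~\ref{thm:hthree_main} then yields $H_3(\bP^\star)=\Theta(K)$, hence $H_1(\bP^\star)/H_3(\bP^\star)=\Theta(\log K)$.

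It then remains to compare the two guarantees in the limit $T\to\infty$. By Eq.~\eqref{eq_our_error} the error exponent we certify for Almost Tracking on $\bP^\star$ is $\big(C/((\log K)H_1(\bP^\star))\big)(1+o(1))$, whereas Theorem~\ref{thm:hthree_main} gives the exact error exponent of SR on $\bP^\star$ as $\big(1/((\olog K)H_3(\bP^\star))\big)(1+o(1))$. Dividing these and using $\olog K=\Theta(\log K)$ together with $H_1(\bP^\star)/H_3(\bP^\star)=\Theta(\log K)$ from the previous step, the ratio of SR's rate to the rate certified by Eq.~\eqref{eq_our_error} is $\Theta\!\big((\log K/\olog K)\,(H_1(\bP^\star)/H_3(\bP^\star))\big)=\Theta(\log K)$, which is the asserted factor. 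The argument itself has essentially no obstacle once $\bP^\star$ is fixed — it is pure substitution — so the only points needing care are the indexing convention for $H_2$ and the book-keeping of the $o(T)$ terms, so that the ratio of exponents comes out genuinely of order $\log K$. I would also note in passing that $\bP^\star$ is a natural hard case for the $\bwHone$ allocation, so one expects the \emph{true} rate of Almost Tracking, not merely the bound Eq.~\eqref{eq_our_error}, to trail SR by $\Theta(\log K)$ here; but the lemma as stated concerns only the certified rate, and nothing beyond the two displayed exponents is needed.
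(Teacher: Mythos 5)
Your proof is correct and takes essentially the same route as the paper: the paper's witness instance is $P_i=-\sqrt{i}$, which is exactly your ``squared gaps linear in rank'' configuration up to rescaling, and it likewise yields $H_1(\bP)=\Theta((\log K)H_2(\bP))$ with $H_3(\bP)=\Theta(H_2(\bP))$, hence a $\Theta(\log K)$ ratio between SR's exponent and the exponent of Eq.~\eqref{eq_our_error}. The only cosmetic difference is that the paper evaluates the allocation-specific quantity $\inf_{\bQ}\Stbl(\bQ,\bP)/Z(\bQ)$ via an explicit witness $\bQ$ (flipping arm $2$ above arm $1$) rather than substituting directly into Eq.~\eqref{eq_our_error}; both computations give the same $O(1/(\log K)^2)$ certified rate, so your argument suffices for the lemma as stated.
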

Since Theorem \ref{thm:hthree_main} includes the performance lower bound of SR,  Lemma \ref{lem_ourwin} is 
tight; namely, SR's performance is actually worse than Almost Tracking. On the other hand, Lemma \ref{lem_srwin} states that SR outperforms the rate of Eq.~\eqref{eq_our_error}. Since Eq.~\eqref{eq_our_error} may not be tight, this result does not exclude the possibility that our algorithm performs as well as SR for all instances. 

\section{Conclusion}\label{sec:conclusion}
We have considered the fixed budget best arm identification problem. 
We have proposed algorithmic framework that achieves the rate-optimality in view of any given risk measure $H(\bP)$. Based on the framework, we have proposed a closed-form algorithm that has rate-optimality for the standard risk measure $H_1(\bP)$.
An interesting future work is to provide an efficient optimization algorithm to achieve the rate-optimality for a larger class of complexity measures.
Unlike elimination-based methods, our algorithm does not require prior knowledge of $T$ and still outperforms both anytime and fixed-budget baselines, which motivates a redesign of algorithms for many structured pure exploration problems.

\section{For practitioners}\label{sec_practitioners}

Empirically, both Almost Tracking and Simple Tracking have strong performance across synthetic and real-world instances. When one desires the theoretically justified constant-factor guarantees with improved stability in adversarial instances, Almost Tracking is preferable. However, for most applied scenarios we recommend Simple Tracking because it is easy to implement and requires no horizon tuning. Practitioners can therefore start with Simple Tracking for $H_1$. 
For the sake of exposition, we provide an explicit expression of Simple Tracking for $H_1$ in Algorithm \ref{alg:R_track_prac}.

\begin{algorithm}[h]
\caption{Practical Recommendation: Simple Tracking for $H_1$}
\label{alg:R_track_prac}
\begin{algorithmic}[1] 
\STATE Draw each arm once.
\FOR{$t = K+1, K+2, \ldots, T$}
    \STATE Calculate number of draws $N_i(t-1)$ and empirical mean $Q_i(t-1)$ for each arm $i$.
    \STATE Calculate weights 
\[
w_i(\hatbQ(t-1)) := \frac{1}{D_i(\bQ)\ZHone(\bQ)},
\]
where 
\begin{align}
\Delta_i^{Q} &= \max_j Q_j - Q_i \\
H_1'(i, \bQ) &= \sum_{j \ne i} \frac{1}{(\Delta_j^{Q} + \Delta_i^{Q})^2},\\
D_i(\bQ) &= 
\begin{cases} %
(\Delta_i^Q)^2 H_1'(i, \bQ) & (i \notin i^*(\bQ))\\
\min_{i \notin i^*(\bQ)} D_i(\bQ) & (i \in i^*(\bQ))\\
\end{cases}\\
\ZHone(\bQ) &= \sum_{i \in [K]} (1/D_i(\bQ)).
\end{align}
    \STATE Draw arm $I(t) = \argmax_{i\in[K]} \left\{ w_i(\hatbQ(t-1)) - N_i(t-1)/(t-1) \right\}$, where $N_i(t-1)$ be the number of samples for arm $i$ up to round $t-1$.
\ENDFOR
\RETURN $J(T) = $ empirical best arm at the end.
\end{algorithmic}
\end{algorithm}%

\section*{Acknowledgement}
K.~Jang was supported by the Institute of Information \& Communications Technology Planning \& Evaluation (IITP) grant funded by the Korea government (MSIT) [RS-2021-II211341, Artificial Intelligence Graduate School Program (Chung-Ang University)].
J.~Komiyama was supported by the MBZUAI Start-up Fund [BF0121, Machine Learning Department].
J.~Honda was supported by
JSPS KAKENHI (Grant Number JP25K03184).

\clearpage
\bibliography{biblio}
\appendix
\thispagestyle{empty}

\section{Notation}

Table \ref{tab:symbols} summarizes major notation used in this paper.

We use $\gkldiv{Q_i}{P_i}$ to denote the KL divergence between two distributions $Q_i$ and $P_i$. Since we assume the reward distributions are Gaussian, we have
\begin{align}
    \gkldiv{Q_i}{P_i} &= \frac{(P_i - Q_i)^2}{2}.
\end{align}
During the lemmas, we use $\gkldiv{Q_i}{P_i}$ for the results that does not specifically require the Gaussian assumption. 
For lemmas where the Gaussianity is required, we use $\frac{(P_i - Q_i)^2}{2}$ instead of $\gkldiv{Q_i}{P_i}$. 
Appendix \ref{sec:limitations} describes the reward assumptions required for each lemma.

\newcolumntype{L}[1]{>{\raggedright\arraybackslash}p{#1}} %
\renewcommand{\arraystretch}{1.05}  %
\begin{table}[htbp]
  \small                                  
  \caption{Notation table}
  \label{tab:symbols}
  \setlength{\tabcolsep}{4pt}              %
  \begin{tabular}{l p{0.6\linewidth}}  %
    \toprule
    \text{Symbol} & Meaning \\
    \midrule
        $[K] = \{1,2,\dots,K\}$ & Set of arms\\
        $T$ & Total number of samples\\
        $\EP$ & Set of true distributions ($= \Real$ for Gaussian)\\
        $\EQ$ & Set of empirical distributions ($= \Real$ for Gaussian)\\
        $\bP = (P_1,P_2,\dots,P_K) \in \EP^K$ & True means (unknown).\\
        $\bQ(t) = (Q_1(t),Q_2(t),\dots,Q_K(t)) \in \EQ^K$ & Empirical means at time $t$\\
        $N_i(t)$ & \# of pulls of arm $i$ at time $t$\\
        $B$ & \# of batches\\
        $N$ & \# of samples per batch (Algorithm \ref{alg:R_track_almost}). $B=\lfloor T/N \rfloor$.\\
        $\sigma^2 = 1$ & Variance of the reward (known, unit)\\
        $H(\bP)$ & General complexity measure (risk measure)\\
        $H_1(\bP)$ & $\sum_{i \ne i^*} (\max_j P_j - P_i)^{-2}$\\
        $H_2(\bP)$ & $\max_i i \Delta_{(i)}^{-2}$\\
    $H_3(\bP)$ & Characterizing complexity of SR. Defined in Eq.~\eqref{def_h3_a}\\
        $D(Q_i || P_i) = (P_i-Q_i)^2/2$ & Gaussian KL divergence\\
        $i^*(\bP) := \argmax_{i\in[K]} P_i$ & We assume uniqueness for true $\bP$. Namely, $|i^*(\bP)|=1$\\
        $i^*(\bQ) := \argmax_{i\in[K]} Q_i$ & Possibly non-unique for discrete cases (e.g., Bernoulli).\\
        $P^*, Q^*$ & $\max_i P_i$, $\max_i Q_i$\\
        $D_i(\bQ)$ & Defined in Eq.~\eqref{ineq_Di}\\
        $\bQ_b = (Q_{b,1},Q_{b,2},\dots,Q_{b,K})$ & Empirical mean vector at $b$-th batch\\
        $Q_{b,i}$ & Empirical mean of the i-th arm at $b$-th batch\\        
        $N_{b,i}$ & \# of i-th arm pulls in $b$-th batch\\
        $\simplex{K}$ & $K$-dimensional simplex\\
        $\bw(\bQ) = (w_1(\bQ),w_2(\bQ),\dots,w_K(\bQ))$ & Weight function (allocation)\\
        $\bw_b = (w_{b,1},w_{b,2},\dots,w_{b,K})$ & Realized batch weights at batch $b$ on arm $i$\\
        $\Delta_i$ & $P^* - P_i$ \\
        $\Delta_{(i)}$ & $\Delta_j$ of the $i$-th best arm $j$\\
        $\Delta_i^Q$ & $(\max_j Q_j) - Q_i$ \\
    $R_\EA$ & Rate of Algorithm $\EA$ (Definition \ref{def_rate})\\
        $\fracwrap{\bm{x}} = (\fracwrap{x_1},\fracwrap{x_2},\dots,\fracwrap{x_K})$ & Rounding fractional allocation to integer (Appendix \ref{sec:fractional_pulls}).\\
        $\bar{\bQ}_{b-1}$ & $(\bar{Q}_{b,1},\bar{Q}_{b,2},\dots,\bar{Q}_{b,K})$\\
        $\bar{Q}_{b,i}$ & $\frac{\sum_{b'=1}^b w_{b',i} Q_{b',i}}{\sum_{b'=1}^b w_{b',i}}$ (= empirical mean up to batch $b$)\\
        $\olog(K)$ & $\frac{1}{2} + \sum_{i=2}^K \frac{1}{i}$\\
    \bottomrule
  \end{tabular}
\end{table}

\clearpage

\section{Two-approximation algorithm}\label{sec:two_approx}

\begin{algorithm}[t]
    \caption{Pooled Allocation}
    \label{alg:pooled}
    \begin{algorithmic}[1]
    \REQUIRE Number of samples $T$, Number of batches $B$, optimal weight function $\bw^*(\bQ)$, optimal recommendation $i^*(\bQ)$ (any fixed tie-break).
    \STATE Draw each arm for $T/B$ times (equivalently, $w_{b,i} = \mathbf{1}[i=b]$ for $b\in[K]$. Let $Q_{1,i}$ denote the initial empirical mean of arm $i$ (first initialization batch)).
    \STATE Build empirical means $\bm{Q}^{\mathrm{pool}}_{1}$.
    \STATE Initialize the number of victories for each arm: $\bm{v} = (v_1,v_2,\dots,v_K) = (0,0,\dots,0)$.
    \FOR{$b = K+1, K+2, \dots, B$}
    \STATE Increase $v_{i^*(\bm{Q}^{\mathrm{pool}}_{b-K})}$ by one.
        \STATE Calculate optimal allocation based on the pooled means:
        \begin{equation}\label{ineq_allocation}
            \bw_b = \bw^*(\bm{Q}^{\mathrm{pool}}_{b-K})
        \end{equation}
    \STATE Draw each arm for $w_{b,i} \times (T/B)$ times and obtain empirical mean vector $\bm{Q}_b$.
        \STATE Update the delayed means:
        \begin{equation}\label{ineq_update_unused}
            \bm{Q}^{\mathrm{pool}}_{b-K+1} = (\bm{1}-\bm{w}_b) \odot \bm{Q}^{\mathrm{pool}}_{b-K} + \bm{w}_b \odot \bm{Q}_b
        \end{equation}
        where $\odot$ denotes element-wise multiplication.
    \ENDFOR
    \STATE \textbf{return} $J(T) = \arg\max_i v_i$
    \end{algorithmic}
\end{algorithm}
        
This section introduces Pooled Allocation (Algorithm \ref{alg:pooled}), which achieves the two-approximation rate of Theorem \ref{thm_upper}. Since we consider $K,B$ to be sufficiently large and we do not use this algorithm in practice, we assume as if $T/B$ is an integer for simplicity.
\begin{thm}\label{thm_pooled}
Assume that $K^2 = o(T)$.
Algorithm \ref{alg:pooled} achieves the following rate:
\begin{align}
    R_{\mathrm{Pooled Allocation}} \ge 
    \frac{\Cfull}{2}-\varepsilon
\end{align}
for any $\varepsilon > 0$.
\end{thm}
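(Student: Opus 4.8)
The plan is to bound $\Prob[J(T)\neq i^*(\bP)]$ by first reducing the error event to a statement about the delayed pooled means $\bQ^{\mathrm{pool}}_1,\dots,\bQ^{\mathrm{pool}}_{B-K}$ maintained by the algorithm, then showing that this event forces the total large-deviation cost of the realized sample path to be at least $(1-o(1))(\Cfull-\varepsilon)T/\bigl(2H(\bP)\bigr)$, and finally invoking the (sub-)Gaussian large-deviation upper bound. Write $M:=B-K$ for the number of voting rounds and fix a measurable mapping $\bw^*:\EQ^K\to\simplex{K}$ with $\inf_{\bQ,\bP:\,i^*(\bP)\notin i^*(\bQ)}H(\bP)\sum_i w^*_i(\bQ)D(Q_i\Vert P_i)\ge\Cfull-\varepsilon$, which exists by the definition of $\Cfull$ in \thmref{thm_lower}. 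Since $J(T)=\argmax_i v_i$ with $\sum_i v_i=M$, a pigeonhole step shows that $J(T)\neq i^*(\bP)$ forces at least $M/2$ of the rounds $r$ to satisfy $i^*(\bQ^{\mathrm{pool}}_r)\neq i^*(\bP)$; call these the flipped rounds.

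The heart of the proof is the implication ``at least $M/2$ flipped rounds $\Rightarrow$ large path cost''. Writing $\mathcal C:=\sum_b\sum_i N_{b,i}D(Q_{b,i}\Vert P_i)$ and discarding the nonnegative contribution of the $K$ initialization batches, we have $\mathcal C\ge\tfrac TB\sum_{r=1}^M\sum_i w^*_i(\bQ^{\mathrm{pool}}_r)D(Q_{r+K,i}\Vert P_i)$ since batch $r+K$ is drawn along $\bw^*(\bQ^{\mathrm{pool}}_r)$. A single batch mean $\bQ_{r+K}$ need not be atypical just because the vote at round $r$ was wrong (the vote depends only on batches $1,\dots,r+K-1$); what is constrained is the pooled mean, which is coordinatewise a convex combination of batch means over a geometric window of length at most of order $1/\Cweightlower$, all drawn along $\bw^*$-allocations. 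I would then use convexity of $D(\cdot\Vert P_i)$ together with the sum-of-squares decomposition of Eq.~\eqref{ineq_twoterms} — which absorbs the cost of batch-to-batch fluctuation into a nonnegative variance term — to charge the cost of each flipped pooled mean to the batch means feeding it; use the fact that the collection of batches underlying the flipped rounds carries at least half of the sampling budget; and invoke the one-shot near-optimality of $\bw^*$ (applicable because each such pooled mean lies in $\{\bQ:i^*(\bQ)\neq i^*(\bP)\}$ and is fed by samples drawn along a $\bw^*$-allocation) to conclude $\mathcal C\ge\bigl(1-\tfrac KB\bigr)\tfrac{(\Cfull-\varepsilon)T}{2H(\bP)}$. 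Conceptually, the factor $\tfrac12$ (instead of the full $\Cfull T/H(\bP)$) is intrinsic: because the recommendation is a majority vote over $M$ pooled-mean snapshots, an adversary only needs to corrupt half of them, so only half of the budget is ever pinned down while the rest may stay statistically typical.

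To close, I would pick the number of batches so that $K=o(B)$ and $B=o(T)$ — for instance $B=\lceil\sqrt{KT}\,\rceil$ up to the integrality conventions already adopted for Algorithm~\ref{alg:pooled}, which is admissible because $K^2=o(T)$ makes $B/K\to\infty$ and $B/T\to0$. Then $K/B\to0$, and the initialization cost, the integer rounding, and the polynomial large-deviation prefactors are all $o(T)$, so the (sub-)Gaussian large-deviation bound applied to the whole sample path gives
\[
\Prob\bigl[J(T)\neq i^*(\bP)\bigr]\ \le\ \Prob\Bigl[\mathcal C\ge\tfrac{(\Cfull-\varepsilon)T}{2H(\bP)}-o(T)\Bigr]\cdot\mathrm{poly}(T)\ \le\ \exp\!\Bigl(-\tfrac{(\Cfull-\varepsilon)T}{2H(\bP)}+o(T)\Bigr).
\]
Taking $H(\bP)\liminf_{T\to\infty}\tfrac1T\log\bigl(1/\Prob[\cdot]\bigr)$ and then the infimum over $\bP$ shows the rate of Algorithm~\ref{alg:pooled} is at least $(\Cfull-\varepsilon)/2\ge\Cfull/2-\varepsilon$; combined with \thmref{thm_lower} this also yields the two-sided bound $\Cfull/2\le R_\EA\le\Cfull$ stated in \thmref{thm_upper}.

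The step I expect to be the main obstacle is the path-cost lower bound of the second paragraph, and specifically getting the constant $\tfrac12$ sharp. The windows supporting consecutive flipped rounds overlap almost entirely (consecutive pooled means differ in only one batch out of a window of order $1/\Cweightlower$), and an individual batch inside such a window need not itself be flipped, so one cannot simply charge one flipped batch to each flipped round; a crude disjointification would lose a factor $\Cweightlower$, which is far too much. Converting ``the pooled-mean trajectory lay in the misidentification region for at least half of the rounds'' into the cost lower bound $\tfrac{\Cfull T}{2H(\bP)}$ with the correct constant is exactly where the sum-of-squares/variance decomposition of Eq.~\eqref{ineq_twoterms} does the work — here used to control the fluctuation of the pooled mean about the running empirical mean, rather than, as in \thmref{thm_batchtrack}, the tracking error of Almost Tracking. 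A secondary, purely bookkeeping point is to verify that $K^2=o(T)$ makes every lower-order term — the initialization cost, integer rounding, the polynomial prefactor, and the factor $1-K/B$ — negligible under the chosen scaling of $B$.
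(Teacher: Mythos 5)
Your skeleton matches the paper's proof in outline: the majority-vote pigeonhole giving the factor $\tfrac12$, the reduction of the error event to pooled means lying in the misidentification region, the large-deviation lemma (Lemma \ref{lem_abstpoe}), and the choice of $B$ with $K=o(B)$, $KB=o(T)$. But the step you yourself flag as the main obstacle --- converting ``at least half of the pooled-mean snapshots are flipped'' into the path-cost lower bound $\tfrac TB\sum_b\sum_i w^*_i(\bQ^{\mathrm{pool}}_b)\,D(Q^{\mathrm{pool}}_{b,i}\Vert P_i)\le \mathcal C$ with no constant loss --- is exactly the content of the theorem and is not supplied by your sketch. Moreover, the tool you propose for it is the wrong one: Eq.~\eqref{ineq_twoterms} is the Gaussian variance decomposition used for Almost Tracking (Theorem \ref{thm_batchtrack}), and your ``geometric window of length of order $1/\Cweightlower$'' picture does not describe Pooled Allocation at all --- there is no $\Cweightlower$ in Algorithm \ref{alg:pooled}; the pooled mean is the exponentially weighted average of Eq.~\eqref{ineq_update_unused} with the batch weights themselves. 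Relying on the squared-distance decomposition would also needlessly restrict the result to the Gaussian case, whereas Theorem \ref{thm_pooled} is proved for general KL divergences.

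The paper closes this gap with Lemma \ref{lem_convexity}, a telescoping induction: at every step the accumulated cost dominates $\sum_{b\in[B'-K]}\sum_i w_{b+K,i} D(Q^{\mathrm{pool}}_{b,i}\Vert P_i)$ \emph{plus} a full-weight reserve term $\sum_i D(Q^{\mathrm{pool}}_{B'-K+1,i}\Vert P_i)$; when the next batch arrives, Jensen's inequality applied to $(1-w_{B'+1,i})D(Q^{\mathrm{pool}}_{B'-K+1,i}\Vert P_i)+w_{B'+1,i}D(Q_{B'+1,i}\Vert P_i)\ge D(Q^{\mathrm{pool}}_{B'-K+2,i}\Vert P_i)$ regenerates the reserve while releasing the term $w_{B'+1,i}D(Q^{\mathrm{pool}}_{B'-K+1,i}\Vert P_i)$. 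This bookkeeping is precisely what avoids the double counting over overlapping windows that you worried about: each flipped pooled mean contributes $\sum_i w^*_i(\bQ^{\mathrm{pool}}_b)D(Q^{\mathrm{pool}}_{b,i}\Vert P_i)\ge(\Cfull-\varepsilon)/H(\bP)$ with weight and divergence evaluated at the same $\bQ^{\mathrm{pool}}_b$, and summing over the at-least-half flipped rounds gives $\tfrac{T(B-K)}{2B}\cdot\tfrac{\Cfull-\varepsilon}{H(\bP)}$ without any disjointification loss. A secondary slip: your choice $B=\lceil\sqrt{KT}\,\rceil$ gives $KB=\sqrt{K^3T}$, which is $o(T)$ only if $K^3=o(T)$, stronger than the assumed $K^2=o(T)$; take instead, e.g., $B=\lceil\sqrt{T}\,\rceil$ (or any $B$ with $K=o(B)$ and $KB=o(T)$, which exists precisely because $K^2=o(T)$) so that Lemma \ref{lem_abstpoe} applies.
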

Theorem \ref{thm_pooled} does not use the property of Gaussian KL divergence (i.e., squared form) and holds for a large class of distributions such as an exponential family of distributions. Therefore, we use KL divergence formula $\gkldiv{Q_i}{P_i}$. For the case of Gaussian, $\gkldiv{Q_i}{P_i} = (Q_i-P_i)^2/2$.
\begin{proof}[Proof of Theorem \ref{thm_pooled}]
    By assumption $K^2 = o(T)$, we can take $B$ such that $K = o(B), KB = o(T)$.
    For $\bP$ such that $i^*(\bP) \ne J(T)$, we have
    \begin{align}
    \lefteqn{
    \frac{T}{B} \sum_{b \in [B]} \sum_i w_{b,i} \gkldiv{Q_{b,i}}{P_i}
    }\\
    &\ge 
    \frac{T}{B} \sum_{b \in [B-K]} \sum_i w_i^*(\bQ^{\mathrm{pool}}_{b}) \gkldiv{Q^{\mathrm{pool}}_{b,i}}{P_i}
    \tag{Lemma \ref{lem_convexity} with $B'=B$ and $\bP' = \bP$}\\
    &\ge \frac{T(B-K)}{2B} \frac{R^{\mathrm{go}}}{H(\bm{P})},
\end{align}
where the last inequality follows from the facts that $w^*(\bQ), i^*(\bQ)$ are optimal weight and recommendation, and $i^*(\bP) \ne J(T)$ implies that at least half among $B-K$ batches have $i^*(\bP) \ne i^*(\bQ^{\mathrm{pool}}_{b})$.
By using this, we have
\begin{align}
\Prob[i^*(\bP) \ne J(T)]
&\le \Prob\left[
\frac{T}{B} \sum_{b \in [B]} \sum_i w_{b,i} \gkldiv{Q_{b,i}}{P_i}
\ge 
\frac{T(B-K)}{2B} \frac{R^{\mathrm{go}}}{H(\bm{P})}
\right]\\
&\le \exp\left(-
\frac{T(B-K)}{2B} \frac{R^{\mathrm{go}}}{H(\bm{P})} + o(T)
\right)
\tag{by $KB = o(T)$ and Theorem \ref{lem_abstpoe}}
\end{align}
and the proof is complete by using $K = o(B)$.
\end{proof} %

    \begin{lem}\label{lem_convexity}
    For all $B' \in \{K,K+1,\dots,B\}$ and any $\bP'$, we have the following:
    \begin{equation}\label{ineq_induction}
    \sum_{b \in [B']} \sum_i w_{b,i} \gkldiv{Q_{b,i}}{P_i'}
    \ge 
    \left(
    \sum_{b \in [B'-K]} \sum_i w_{b+K,i} \gkldiv{Q^{\mathrm{pool}}_{b,i}}{P_i'}
    +
    \sum_{i \in [K]} \gkldiv{Q^{\mathrm{pool}}_{B'-K+1,i}}{ P_i'} %
    \right).
    \end{equation}    
    \end{lem}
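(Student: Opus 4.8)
The plan is to prove Eq.~\eqref{ineq_induction} by induction on $B'$, the only inputs being: (i) the initialization convention, under which $w_{b,i}=\mathbf{1}[i=b]$ for $b\in[K]$ and $Q^{\mathrm{pool}}_{1,i}$ equals the empirical mean of arm $i$ formed during the initialization phase, so $Q^{\mathrm{pool}}_{1,i}=Q_{i,i}$; (ii) the pooled-mean update, which for $b\ge K+1$ reads $Q^{\mathrm{pool}}_{b-K+1,i}=(1-w_{b,i})\,Q^{\mathrm{pool}}_{b-K,i}+w_{b,i}\,Q_{b,i}$; and (iii) convexity of the map $Q\mapsto\gkldiv{Q}{P'_i}$ for each fixed $P'_i$ — for Gaussians this is $(Q-P'_i)^2/2$, manifestly convex, and for a general exponential family it is convexity of the large-deviation rate function in the mean parameter. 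In particular we never need $Q^{\mathrm{pool}}$ to be a genuine running empirical mean: Eq.~\eqref{ineq_induction} is a purely algebraic consequence of convexity and of the convex-combination structure of the update, so nothing beyond the integrality of $T/B$ (already assumed in \thmref{thm_pooled}) is needed.

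For the base case $B'=K$ the first sum on the right-hand side of Eq.~\eqref{ineq_induction} is empty, and the claim reduces to $\sum_{b\in[K]}\sum_i w_{b,i}\gkldiv{Q_{b,i}}{P'_i}\ge\sum_{i}\gkldiv{Q^{\mathrm{pool}}_{1,i}}{P'_i}$. By the initialization convention the left-hand side collapses to $\sum_i\gkldiv{Q_{i,i}}{P'_i}$, which equals the right-hand side since $Q^{\mathrm{pool}}_{1,i}=Q_{i,i}$; so the base case holds with equality.

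For the inductive step, assume Eq.~\eqref{ineq_induction} for some $B'\in\{K,\dots,B-1\}$ and add $\sum_i w_{B'+1,i}\gkldiv{Q_{B'+1,i}}{P'_i}$ to both sides. The left-hand side becomes precisely the left-hand side of Eq.~\eqref{ineq_induction} at index $B'+1$; using $w_{(B'-K+1)+K,i}=w_{B'+1,i}$ one sees that the right-hand side of Eq.~\eqref{ineq_induction} at index $B'+1$ is dominated by the bound we now have, provided that for each arm $i$
\[
(1-w_{B'+1,i})\,\gkldiv{Q^{\mathrm{pool}}_{B'-K+1,i}}{P'_i}+w_{B'+1,i}\,\gkldiv{Q_{B'+1,i}}{P'_i}\ \ge\ \gkldiv{Q^{\mathrm{pool}}_{B'-K+2,i}}{P'_i}.
\]
Since $\bw_{B'+1}\in\simplex{K}$ gives $w_{B'+1,i}\in[0,1]$, this displayed inequality is just Jensen's inequality for the convex function $\gkldiv{\cdot}{P'_i}$ applied to the update rule with $b=B'+1$, so the induction closes.

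Equivalently, one may bypass the induction by telescoping: for $b'\in\{K+1,\dots,B'\}$ convexity gives $w_{b',i}\gkldiv{Q_{b',i}}{P'_i}\ge\gkldiv{Q^{\mathrm{pool}}_{b'-K+1,i}}{P'_i}-(1-w_{b',i})\gkldiv{Q^{\mathrm{pool}}_{b'-K,i}}{P'_i}$; summing over $b'$ and $i$ makes the differences $\gkldiv{Q^{\mathrm{pool}}_{b'-K+1,i}}{P'_i}-\gkldiv{Q^{\mathrm{pool}}_{b'-K,i}}{P'_i}$ telescope down to $\gkldiv{Q^{\mathrm{pool}}_{B'-K+1,i}}{P'_i}-\gkldiv{Q^{\mathrm{pool}}_{1,i}}{P'_i}$, and combining with the initialization identity $\sum_{b\in[K]}\sum_i w_{b,i}\gkldiv{Q_{b,i}}{P'_i}=\sum_i\gkldiv{Q^{\mathrm{pool}}_{1,i}}{P'_i}$ reproduces Eq.~\eqref{ineq_induction} after reindexing $b\mapsto b'-K$. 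I do not foresee a real obstacle here: the argument is essentially one application of Jensen per arm per batch, and the only point requiring care is the bookkeeping of the $K$-step shift between the batch index $b$ and the pooled-mean index $b-K$, together with verifying that the initialization row indeed reproduces the $b\le K$ portion of the sum.
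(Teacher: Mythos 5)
Your proof is correct and follows essentially the same route as the paper's: induction on $B'$, with the inductive step closed by a per-arm application of Jensen's inequality to the convex-combination update $Q^{\mathrm{pool}}_{B'-K+2,i}=(1-w_{B'+1,i})Q^{\mathrm{pool}}_{B'-K+1,i}+w_{B'+1,i}Q_{B'+1,i}$, after splitting $\gkldiv{Q^{\mathrm{pool}}_{B'-K+1,i}}{P_i'}$ into its $w_{B'+1,i}$ and $(1-w_{B'+1,i})$ parts. Your explicit verification of the base case (which the paper dismisses as trivial) and the telescoped restatement are just more detailed packagings of the identical argument.
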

    \begin{proof}[Proof of Lemma \ref{lem_convexity}]
    We show Eq.\eqref{ineq_induction} by induction for $B'$. It trivially holds for $B=K$.
    Assume that it holds for $B'$. Then 
    \begin{align*}
    \sum_{b \in [B'+1]} &\sum_{i \in [K]} w_{b,i} \gkldiv{Q_{b,i}}{ P_i'}
    \\
    &\ge 
    \sum_{b \in [B'-K]} \sum_{i \in [K]} w_{b+K,i} \gkldiv{Q^{\mathrm{pool}}_{b,i} }{ P_i'}
    +
    \sum_{i \in [K]} \gkldiv{Q^{\mathrm{pool}}_{B'-K+1,i} }{ P_i'}
    \\
    &\ \ \ \ \ \ \ \ + 
    \underbrace{
    \sum_{i \in [K]} w_{B'+1,i} \gkldiv{Q_{B'+1,i} }{ P_i'}
    }_{\text{$B'+1$-th batch}}
    \tag{By the induction hypothesis}
    \\
    &=
    \sum_{b \in [B'-K]} \sum_{i \in [K]} w_{b+K,i} \gkldiv{Q^{\mathrm{pool}}_{b,i} }{ P_i'}
    +
    \sum_{i \in [K]} w_{B'+1,i} \gkldiv{Q^{\mathrm{pool}}_{B'-K+1,i} }{ P_i'}
    \\
    &\ \ \ \ \ \ \ \ 
    +
    \sum_{i \in [K]} (1 - w_{B'+1,i}) \gkldiv{Q^{\mathrm{pool}}_{B'-K+1,i} }{ P_i'}
    + 
    \sum_{i \in [K]} w_{B'+1,i} \gkldiv{Q_{B'+1,i} }{ P_i'}
    \\
    &=
    \sum_{b \in [B'-K+1]} \sum_{i \in [K]} w_{b+K,i} \gkldiv{Q^{\mathrm{pool}}_{b,i} }{ P_i'}
    \\
    &\ \ \ \ \ \ \ \ 
    +
    \sum_{i \in [K]} (1 - w_{B'+1,i}) \gkldiv{Q^{\mathrm{pool}}_{B'-K+1,i} }{ P_i'}
    + 
    \sum_{i \in [K]} w_{B'+1,i} \gkldiv{Q_{B'+1,i} }{ P_i'}
    \\
    &\ge
    \sum_{b \in [B'-K+1]} \sum_{i \in [K]} w_{b+K,i} \gkldiv{Q^{\mathrm{pool}}_{b,i} }{ P_i'}
    +
    \sum_{i \in [K]} \gkldiv{Q^{\mathrm{pool}}_{B'-K+2,i} }{ P_i'},
    \end{align*}
    where the last inequality follows from   Jensen's inequality and $Q^{\mathrm{pool}}_{B'-K+2,i} = w_{b+K,i} Q_{B'+1,i} + (1-w_{b+K,i}) Q^{\mathrm{pool}}_{B'-K+1,i}$.
    Therefore, the inequality holds for $B'+1$.
\end{proof} %

\section{A constant-ratio ceiling}\label{sec:fractional_pulls}

For a practical implementation, the size of $B$ is limited, and thus considering the fractional part on the number of pulls is important.
This section introduces a constant-ratio ceiling function $\fracwrap{\bm{x}} = (\fracwrap{x_1},\fracwrap{x_2},\dots,\fracwrap{x_K})$ that converts a fractional number of pulls to an integer number of pulls. 
This ceiling is used in Algorithm \ref{alg:R_track_almost}.
\begin{algorithm}[t]
    \caption{From fractional number of pulls to integer number of pulls} 
    \label{alg:fracwrap}
    \begin{algorithmic}[1] 
    \STATE Input: Weight function $\bw = (w_1,w_2,\dots,w_K) \in \simplex{K}$, $\Nbatch \in \Natural $.
    \STATE $\Nbatch' = \Nbatch - \sum_{i \in [K]} \Ind[w_i > 0]$
    \FOR{$i \in [K]$}
        \IF{$w_i > 0$}
            \STATE $N_i = 1 + \lfloor w_i \Nbatch' \rfloor$
        \ELSE
            \STATE $N_i = 0$
        \ENDIF
    \ENDFOR
    \WHILE{$\sum_{i \in [K]} N_i < \Nbatch$}
        \STATE Choose $i$ with probability $w_i$ and set $N_i = N_i + 1$.
    \ENDWHILE
    \RETURN $\fracwrap{w_i \Nbatch} = N_i$.
    \end{algorithmic}
\end{algorithm}

\begin{lem}\label{lem:fracwrap}
    Assume that $\Nbatch \ge 2 K$.
    Then, the number of pulls $\fracwrap{w_i \Nbatch}$ computed by Algorithm \ref{alg:fracwrap} satisfies the followings:
    \begin{itemize}
        \item For each $i \in [K]$, $\fracwrap{w_i \Nbatch} \in \Natural$.
        \item $\sum_{i \in [K]} \fracwrap{w_i \Nbatch} = \Nbatch$.
        \item For each $i \in [K]$, $\fracwrap{w_i \Nbatch} \ge w_i \Nbatch / 4$.
    \end{itemize}
\end{lem}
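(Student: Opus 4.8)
The plan is to verify each of the three bullet points by direct inspection of Algorithm \ref{alg:fracwrap}. First I would establish integrality and the sum constraint, which are essentially bookkeeping. For integrality: by construction $N_i = 0$ when $w_i = 0$, and $N_i = 1 + \lfloor w_i N_B' \rfloor$ when $w_i > 0$, which is an integer since $\lfloor \cdot \rfloor$ returns an integer; the \textbf{while} loop only adds $1$ to some $N_i$, preserving integrality. For the sum: let $S = \sum_{i \in [K]} \Ind[w_i > 0]$ be the support size, so $N_B' = N_B - S$. After the \textbf{for} loop, $\sum_i N_i = S + \sum_{i: w_i>0} \lfloor w_i N_B' \rfloor \le S + N_B' = N_B$, since $\sum_i w_i = 1$. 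Also $\sum_i N_i \ge S + \sum_i (w_i N_B' - 1) \ge S + N_B' - S = N_B'$ is a lower bound but the key point is $\sum_i N_i \le N_B$, so the \textbf{while} loop executes a finite (at most $N_B - N_B' = S \le K$) number of times, each time incrementing the total by one until it equals exactly $N_B$. This requires checking that the loop never overshoots, which holds because the loop condition $\sum_i N_i < N_B$ is tested before each increment, and each arm chosen has $w_i > 0$ so it is eligible. This gives the second bullet.

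For the third bullet, the lower bound $\fracwrap{w_i N_B} \ge w_i N_B / 4$, I would argue as follows. Fix $i$ with $w_i > 0$ (the case $w_i = 0$ is trivial since then $w_i N_B = 0$). After the \textbf{for} loop already $N_i \ge 1 + w_i N_B' - 1 = w_i N_B'$ is false because of the floor; rather $N_i = 1 + \lfloor w_i N_B' \rfloor \ge 1 + w_i N_B' - 1 = w_i N_B' \ge w_i(N_B - K)$. Using the assumption $N_B \ge 2K$, we have $N_B - K \ge N_B/2$, hence already after the \textbf{for} loop $N_i \ge w_i N_B / 2 \ge w_i N_B / 4$. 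Since the \textbf{while} loop only increases $N_i$, the final value satisfies the same bound. Actually the constant $2$ would suffice here; the factor $4$ in the statement leaves a comfortable margin, presumably to absorb the edge case where $S$ could be as large as $K$ together with rounding, so one can also phrase it more crudely as $N_i \ge \lfloor w_i N_B' \rfloor \ge w_i N_B' - 1$ and then verify $w_i N_B' - 1 \ge w_i N_B/4$ whenever $w_i N_B \ge $ some small constant, handling tiny $w_i$ separately by noting $N_i \ge 1$.

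I do not expect any serious obstacle here; the only mildly delicate point is confirming that the \textbf{while} loop terminates and lands exactly on $N_B$ rather than overshooting, which follows from the pre-test loop condition together with the already-established upper bound $\sum_i N_i \le N_B$ after the \textbf{for} loop. A secondary point of care is the interaction between the support-size correction $N_B' = N_B - S$ and the per-arm ``$+1$'': one must track that the $+1$ terms across the support sum to exactly $S$, which is what makes the total telescope correctly to at most $N_B$. Once these are in place, all three bullets follow, and in particular the third — the one actually used in the analysis of Algorithm \ref{alg:R_track_almost} — holds with room to spare under $N_B \ge 2K$.
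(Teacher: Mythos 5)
Your proposal is correct, and for the first two bullets it coincides with what the paper treats as trivial (you simply spell out the telescoping of the ``$+1$'' terms and the fact that the pre-tested while loop fills the deficit without overshooting, which is fine). The only genuine difference is in the third bullet: you keep the ``$+1$'' from the algorithm and bound
$\fracwrap{w_i \Nbatch} = 1 + \lfloor w_i \Nbatch' \rfloor \ge w_i \Nbatch' \ge w_i(\Nbatch - K) \ge w_i \Nbatch/2$,
which needs no case analysis and in fact yields the stronger constant $1/2$. The paper instead discards the ``$+1$'', writes $\fracwrap{w_i \Nbatch} \ge \lfloor w_i \Nbatch'\rfloor \ge w_i \Nbatch/2 - 1$, and then splits into the cases $w_i\Nbatch \in (0,4]$ (where $\fracwrap{w_i\Nbatch}\ge 1$ suffices) and $w_i\Nbatch > 4$ (where the $-1$ is absorbed), which is why its statement carries the factor $1/4$. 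Your route is marginally cleaner and sharper; the paper's route is equally elementary. One cosmetic remark: the sentence ``$N_i \ge 1 + w_i \Nbatch' - 1 = w_i \Nbatch'$ is false because of the floor'' is garbled, but the corrected chain you give immediately afterwards is exactly right since $\lfloor x\rfloor \ge x-1$, so there is no gap.
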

In other words, $\fracwrap{w_i \Nbatch}$ an integer allocation that is at least the constant ($1/4$) times of the optimal allocation for each arm.
\begin{proof}[Proof of Lemma \ref{lem:fracwrap}]
The first two properties are trivial.
For the third property, 
For $i: w_i = 0$, we have $\fracwrap{w_i \Nbatch} = 0 = w_i \Nbatch$.
For $i: w_i \Nbatch \in (0, 4]$, we have $\fracwrap{w_i \Nbatch} \ge 1 \ge \frac{1}{4} w_i \Nbatch$.
For $i: w_i \Nbatch > 4$, we have
\begin{align}
\fracwrap{w_i \Nbatch} 
&\ge \lfloor w_i \Nbatch' \rfloor\\
&\ge w_i \Nbatch' - 1\\
&\ge w_i \Nbatch / 2  - 1 \tag{by $\Nbatch \ge 2K$}
\\
&\ge w_i \Nbatch / 4
\end{align}
and the proof is complete.
\end{proof}

\section{Proof of Theorem \ref{thm_batchtrack}}
\label{sec_batchtrack}

For convenience, we restate the theorem here.
\restatementtrue
\batchtrack* %
\begin{proof}[Proof of Theorem \ref{thm_batchtrack}]

We denote $\bw(\bQ) = (w_1,w_2,\dots,w_K) \in \simplex{K}$ to represent the ideal allocation of one-shot game.

We consider a batched algorithm and let $\bw_{b,i}, Q_{b,i}$ be corresponding allocation and empirical means. 
We denote averaged quality over $B$ batches as:
\begin{align}
\bar{w}_{B,i} 
&= 
\sum_b \frac{w_{b,i}}{B}\\
\bar{Q}_{B,i} 
&= 
\frac{\sum_b w_{b,i} Q_{b,i}}{\sum_b w_{b,i}}.
\end{align}

Let us use a widetilde operator to denote amortized\footnote{Intuitively speaking, we move part of weights during the first $b-1$ rounds to the weight of round $b$. This is purely analytical technique and does not change the algorithmic design.} weights and amortized means as follows:
\begin{align}
\widetilde{w}_{b,i}
&= w_{b,i} + \frac{\Csuf}{(b-1)} \sum_{b'=1}^{b-1} w_{b',i}\\
\widetilde{Q}_{b,i} 
&= \frac{w_{b,i} Q_{b,i} + \frac{\Csuf}{(b-1)} \sum_{b'=1}^{b-1} w_{b',i} \bar{Q}_{b-1,i}}{w_{b,i} + \frac{\Csuf}{(b-1)} \sum_{b'=1}^{b-1} w_{b',i}}.
\end{align}

We consider a sampling rule such that
\begin{equation}\label{eq_samplingone}
w_{b,i} \ge \frac{w_i(\bar{\bQ}_{b-1})}{4}
\end{equation}
holds for all $i$ such that 
\begin{equation}\label{eq_samplingtwo}
\frac{1}{b-1} \sum_{b'=1}^{b-1} w_{b',i} \le \frac{1}{\Csuf} w_i(\bar{\bQ}_{b-1})
\end{equation}
for some $\Csuf \in (0,1)$. 
Here, the factor $1/4$ in Eq.~\eqref{eq_samplingone} is derived from the fractional allocation ($\fracwrap{\cdot}$) and is non-essential.
Algorithm \ref{alg:R_track_almost}, combined with a fractional allocation (Lemma \ref{lem:fracwrap}), belongs to this class of sampling rule.

\eqref{eq_samplingone},\eqref{eq_samplingtwo} imply
\begin{align}\label{eq_samplingmerge}
\widetilde{w}_{b,i}  
&\ge \frac{w_i(\bar{\bQ}_{b-1})}{4}
\end{align}
for any $i$.

Moreover, we have
\begin{align}
\lefteqn{
\sum_{b=1}^{B} \sum_i w_{b,i} (Q_{b,i} - P_i)^2
}\\
&= \sum_i w_{B,i} (Q_{B,i} - P_i)^2
+ 
\frac{\Csuf}{B-1} \sum_{b=1}^{B-1} \sum_i w_{b,i} (Q_{b,i} - P_i)^2
+
\frac{B-1 - \Csuf}{B-1} \sum_{b=1}^{B-1} \sum_i w_{b,i} (Q_{b,i} - P_i)^2\\
&\ge \sum_i \widetilde{w}_{B,i} (\widetilde{Q}_{B,i} - P_i)^2
+
\frac{B-1-\Csuf}{B-1} \sum_{b=1}^{B-1} \sum_i w_{b,i} (Q_{b,i} - P_i)^2
\tag{Jensen's inequality}
\\
\label{ineq_bplustemp}
\end{align}

Repeatedly applying \eqref{ineq_bplustemp} yields%
\begin{align}
\lefteqn{
\sum_{b=1}^{B} w_{b,i} (Q_{b,i} - P_i)^2
}\\
&\ge 
\widetilde{w}_{B,i} (\widetilde{Q}_{B,i} - P_i)^2
+
\frac{B-1-\Csuf}{B-1}\widetilde{w}_{B-1,i} (\widetilde{Q}_{B-1,i} - P_i)^2
+\\
&\ \ \ \ \ \ 
\frac{B-1-\Csuf}{B-1} \frac{B-2-\Csuf}{B-2} \widetilde{w}_{B-2,i}  (\widetilde{Q}_{B-2,i} - P_i)^2
+ \dots\\
&= 
\sum_b C_b \widetilde{w}_{b,i} (\widetilde{Q}_{b,i} - P_i)^2,
\label{ineq_bplustemp_amortized}
\end{align}
where
\begin{equation}\label{ineq_cbdef}
C_b = \prod_{b'=b}^{B-1} \frac{b'-\Csuf}{b'} 
\end{equation}
is a increasing sequence in $b$.\footnote{Note that $C_B = 1$.}
Here, $C_b$ is the weight of the process such that, moving some weight from $\sum_{b'}^{b-1} w_{b'}$ to $w_b$ to build $\widetilde{w}_{b,i}$. This fact implies the weighted average of \updatedduringaistats{$(w_{b',i})_{b'\in[b]}$ and $(C_b' \widetilde{w}_{b',i})_{b' \in [b]}$} is identical, namely, 
\begin{align}
\sum_{b' \le b} w_{b',i}  &= \sum_{b' \le b} C_{b'} \widetilde{w}_{b',i} 
\\
\bar{Q}_{b,i}
&= \frac{
\sum_{b' \le b} C_{b'} \widetilde{w}_{b',i} \widetilde{Q}_{b',i}
}{
\sum_{b' \le b} C_{b'} \widetilde{w}_{b',i} 
}.\label{ineq_barQtildeavg}
\end{align}
Moreover, $\sum_b C_b \ge b(1-\Csuf) = \Omega(B)$.\footnote{Sequence $A_b:$ s.t. $A_1=1$, $A_b = ((b-1-\Csuf)/(b-1)) A_{b-1} + 1$ satisfies $A_b \ge b(1-\Csuf) $.}

\updatedduringaistats{For ease of discussion, we use $1/\Cweightlower$ as if it were an integer.}\footnote{We can easily formalize this by introducing a floor operator.}
The variance term $V(B)$ is 
\begin{align}
V(B) 
&:= \sum_b \sum_i C_b \widetilde{w}_{b,i}
\left(
(\widetilde{Q}_{b,i} - P_i)^2
-
(\bar{Q}_{B,i} - P_i)^2
\right)\\
&\ge \frac{1}{2} \sum_{b=1/\Cweightlower } \sum_i  C_b \widetilde{w}_{b,i} 
(\widetilde{Q}_{b,i} - \bar{Q}_{b-1,i})^2 %
\tag{by Lemma \ref{lem_trans1}}
\\
&\ge \frac{3}{40} \sum_{b=1/\Cweightlower } \sum_i C_b \widetilde{w}_{b,i} (\bar{Q}_{B,i} - \bar{Q}_{b-1,i})^2 
\tag{by Lemma \ref{lem_trans2}}\\
\label{ineq_maindiv_new}
\end{align}
and finally, we have 
\begin{align}
\text{Eq.~\eqref{ineq_bplustemp_amortized}} 
&\ge 
\sum_{b=1/\Cweightlower } \sum_i C_b \widetilde{w}_{b,i} %
(\bar{Q}_{B,i} - P_i)^2
+ V(B)\\
&\ge \frac{3}{40} 
\sum_{b=1/\Cweightlower } C_b \sum_i \widetilde{w}_{b,i}  
\left(
(\bar{Q}_{B,i} - P_i)^2
+  (\bar{Q}_{B,i} - \bar{Q}_{b-1,i})^2
\right)
\tag{by \eqref{ineq_maindiv_new}}
\\ 
&\ge \frac{3}{80}
\sum_{b=1/\Cweightlower } C_b
\sum_i \widetilde{w}_{b,i} (\bar{Q}_{b-1,i} - P_i)^2
\tag{$X^2+Y^2 \ge \frac{1}{2}(X-Y)^2$}\\
&\ge \frac{3}{320}
\sum_{b=1/\Cweightlower } C_b
\sum_i w_i(\bar{\bQ}_{b-1}) (\bar{Q}_{b-1,i} - P_i)^2
\tag{by \eqref{eq_samplingmerge}}\\
&\ge \frac{3}{320}\left(\sum_{b=1/\Cweightlower } C_b\right)
\inf_{\bQ} \sum_i w_i(\bQ) (Q_i - P_i)^2
\end{align}
and the fact $\sum_b C_b = \Omega(B)$ completes the proof of Theorem \ref{thm_batchtrack}.
\end{proof} %

\subsection{Transformation 1}\label{subsec_easytrans}

\begin{lem}{(Transformation 1)}\label{lem_trans1}
For any $i \in [K]$, it holds that
\begin{align}\label{ineq_trans1}
\sum_{b=1}^B C_b \widetilde{w}_{b,i}
\left(
    (\widetilde{Q}_{b,i} - P_i)^2
    -
    (\bar{Q}_{B,i} - P_i)^2
\right)
    \ge \frac{1}{2} \sum_{b=1/\Cweightlower }^B C_b \widetilde{w}_{b,i} 
    (\widetilde{Q}_{b,i} - \bar{Q}_{b-1,i})^2
\end{align}
\end{lem}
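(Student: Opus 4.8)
The plan is to reduce the claim to a weighted variance identity applied to the sequence of amortized means $(\widetilde{Q}_{b,i})_{b\in[B]}$ with weights $(C_b\widetilde{w}_{b,i})_{b\in[B]}$. Fix an arm $i$ and write $a_b := C_b\widetilde{w}_{b,i}$, $\mu := \bar{Q}_{B,i}$. By Eq.~\eqref{ineq_barQtildeavg}, $\mu$ is precisely the $a_b$-weighted average of the $\widetilde{Q}_{b,i}$, so the classical bias--variance decomposition gives
\[
\sum_{b=1}^B a_b\bigl((\widetilde{Q}_{b,i}-P_i)^2 - (\mu-P_i)^2\bigr)
= \sum_{b=1}^B a_b(\widetilde{Q}_{b,i}-\mu)^2 .
\]
Thus the left-hand side of \eqref{ineq_trans1} equals $\sum_b a_b(\widetilde{Q}_{b,i}-\mu)^2$, a nonnegative quantity. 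It remains to lower-bound this full weighted variance (around the global mean $\mu=\bar{Q}_{B,i}$) by half of the tail sum $\sum_{b\ge 1/\Cweightlower} a_b(\widetilde{Q}_{b,i}-\bar{Q}_{b-1,i})^2$, i.e.\ the ``incremental'' squared jumps measured against the \emph{running} mean $\bar{Q}_{b-1,i}$.

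First I would relate the running means to the amortized means: since $\bar{Q}_{b,i}$ is the $(C_{b'}\widetilde{w}_{b',i})_{b'\le b}$-weighted average of $\widetilde{Q}_{b',i}$ (again \eqref{ineq_barQtildeavg}), the one-step update satisfies $\bar{Q}_{b,i} - \bar{Q}_{b-1,i} = \frac{a_b}{A_b}(\widetilde{Q}_{b,i} - \bar{Q}_{b-1,i})$ where $A_b := \sum_{b'\le b} a_{b'}$. A standard telescoping identity for weighted variances then yields
\[
\sum_{b=1}^B a_b(\widetilde{Q}_{b,i}-\bar{Q}_{B,i})^2
= \sum_{b=2}^{B} \frac{a_b A_{b-1}}{A_b}\,(\widetilde{Q}_{b,i}-\bar{Q}_{b-1,i})^2 .
\]
(This is the weighted analogue of the recursion $\mathrm{SS}_b = \mathrm{SS}_{b-1} + \frac{a_b A_{b-1}}{A_b}(\widetilde{Q}_{b,i}-\bar Q_{b-1,i})^2$ for the sum of squares about the running mean; it is proved by induction on $B$.) So the left side of \eqref{ineq_trans1} is exactly $\sum_{b\ge 2}\frac{a_b A_{b-1}}{A_b}(\widetilde{Q}_{b,i}-\bar{Q}_{b-1,i})^2$, and the claim reduces to showing $A_{b-1}/A_b \ge 1/2$ for all $b \ge 1/\Cweightlower$, after which the tail $\sum_{b\ge 1/\Cweightlower}$ is dominated term by term.

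The key estimate $A_{b-1}/A_b \ge \tfrac12$, equivalently $a_b \le A_{b-1}$, is where the threshold $b\ge 1/\Cweightlower$ enters. Using $a_b = C_b\widetilde{w}_{b,i}$ with $C_b\le 1$ and $\widetilde{w}_{b,i}\le 1$ (it is essentially a convex combination of simplex weights up to the $\Csuf$ amortization factor, so $\widetilde{w}_{b,i}\le 1+\Csuf\le 2$), we get $a_b = O(1)$; meanwhile $A_{b-1} = \sum_{b'\le b-1} C_{b'}\widetilde{w}_{b',i} \ge \sum_{b'\le b-1} C_{b'}\,\tfrac{w_i(\bar{\bQ}_{b'-1})}{4} \ge \tfrac{\Cweightlower}{4}\sum_{b'\le b-1} C_{b'} = \Omega(\Cweightlower\,(b-1))$ by Eq.~\eqref{eq_samplingmerge}, the hypothesis $w_i(\cdot)\ge\Cweightlower$, and $\sum_{b'} C_{b'} = \Omega(B)$. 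Hence $A_{b-1}\ge 1 \ge a_b$ once $b-1 \gtrsim 1/\Cweightlower$, giving $A_{b-1}/A_b\ge 1/2$ on the stated range (the leading constants $1/4$ and $1-\Csuf$ can be absorbed by adjusting the threshold to a suitable constant multiple of $1/\Cweightlower$, which is what the sum lower bound $b(1-\Csuf)$ in the text permits). I expect the main obstacle to be bookkeeping the amortization: carefully verifying the telescoping identity for the \emph{amortized} weights $\widetilde{w}_{b,i}$ and running means $\bar{Q}_{b-1,i}$ (rather than the raw $w_{b,i}$), and checking that $\bar{Q}_{b-1,i}$ as defined in the notation table coincides with the running $a$-weighted mean that makes the recursion exact — this is guaranteed by \eqref{ineq_barQtildeavg}, but the index shifts ($b$ vs.\ $b-1$, and the initial $1/\Cweightlower$ batches dropped on the right) require care.
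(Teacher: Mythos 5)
Your proposal is correct and is essentially the paper's own argument in a cleaner packaging: the bias--variance decomposition followed by the weighted Welford identity produces exactly the increment $\tfrac{a_b A_{b-1}}{A_b}(\widetilde{Q}_{b,i}-\bar{Q}_{b-1,i})^2$ that the paper computes directly as $\tfrac{AB}{A+B}(X-Y)^2$ in its telescoping of $X_{b'}$, and your key estimate $A_{b-1}\ge a_b$ (hence $A_{b-1}/A_b\ge 1/2$) for $b\gtrsim 1/\Cweightlower$ is precisely the paper's $B\ge A$ step, proved with the same bounds $\widetilde{w}_{b,i}\ge \Cweightlower$ and $\sum_{b'}C_{b'}=\Omega(b)$. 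The only differences are presentational (you are slightly more careful in noting $\widetilde{w}_{b,i}\le 1+\Csuf$ rather than $\le 1$), so no further comparison is needed.
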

\begin{proof}[Proof of Lemma \ref{lem_trans1}]
During the proof, we omit index $i$ for ease of notation. 
For the ease of notation we use $v_b = C_b \widetilde{w}_b$.
Let us represent the LHS minus the RHS of \eqref{ineq_trans1} as
\begin{align}
    X_{b'} &:= \sum_{b=1}^{b'} v_b \left(
    (\widetilde{Q}_b - P)^2 
    -
    (\bar{Q}_{b'} - P)^2 
    \right)
    - \frac{1}{2} \sum_{b=1}^{b'} v_b (\widetilde{Q}_b - \bar{Q}_{b-1})^2
\end{align}
We first show
\begin{equation}\label{ineq_Xbdiff}
X_{b'} - X_{b'-1} \ge 0
\end{equation}
for $b' \ge 1/\Cweightlower +1$.

Letting 
\begin{align}
A&:= v_{b'} \\
B&:= \sum_{b=1}^{b'-1} v_b\\
X&:= \widetilde{Q}_{b'} - P\\
Y&:= \bar{Q}_{b'-1} - P,
\end{align}
we have
\begin{align}
\lefteqn{
X_{b'} - X_{b'-1}
}\\
&= A X^2 
 -  \left( 
    \sum_{b=1}^{b'} v_b (\bar{Q}_{b'} - P)^2 
    -
    \sum_{b=1}^{b'-1} v_b (\bar{Q}_{b'-1} - P)^2 
    \right)
 - \frac{1}{2} A (X-Y)^2 
 \\
 &= A X^2  
 -  \left( 
    (A+B) \left(\frac{AX + BY}{A+B}\right)^2
    -
    B Y^2
    \right)
 - \frac{1}{2} A (X-Y)^2 
 \tag{by \eqref{ineq_barQtildeavg}}
 \\
&= A X^2 + B Y^2 - (A+B) \left(\frac{AX + BY}{A+B}\right)^2 - \frac{1}{2} A (X-Y)^2 \\
&= \frac{1}{A+B} AB (X-Y)^2 - \frac{1}{2} A (X-Y)^2
\label{ineq_ABXY}
\end{align}
Eq.\eqref{ineq_ABXY} is $\ge 0$ if $A \le B$ holds. In the following, we show $A \le B$ for $b'$ is sufficiently large.

First, it is easy to see that
\begin{equation}\label{ineq_cbratio}
\frac{
    \sum_{b=1}^{b'} C_b
}{
    C_{b'+1}
}
\ge b'-1.
\end{equation}
By \eqref{eq_samplingmerge}, we have $\Cweightlower  \le \widetilde{w}_b \le 1$ uniformly. Therefore,
\begin{align}
B &= \sum_{b=1}^{b'-1} C_b \widetilde{w}_b\\
&\ge \Cweightlower \sum_{b=1}^{b'-1} C_b \\
&\ge \Cweightlower (b' - 1) \tag{by \eqref{ineq_cbratio}}\\
&\ge 1 \ge A \tag{if $b' = 1/\Cweightlower +1$}.
\end{align}

In summary,
\begin{align}
X_B 
&= \sum_{b=1}^B (X_b - X_{b-1}) \\
&= \sum_{b=1/\Cweightlower +1}^B (X_b - X_{b-1})
+ X_{1/\Cweightlower }\\
&\ge X_{1/\Cweightlower } \\
\tag{by \eqref{ineq_Xbdiff} $\ge 0$ for $b \ge 1/\Cweightlower +1$}
&= \underbrace{\sum_{b=1}^{1/\Cweightlower } v_b \left(
    (\widetilde{Q}_b - P)^2 
    -
    (\bar{Q}_{\updatedduringaistats{1/\Cweightlower}} - P)^2 
    \right)}_{\ge 0}
    - \frac{1}{2} \sum_{b=1}^{1/\Cweightlower } v_b (\widetilde{Q}_b - \bar{Q}_{b-1})^2\\
&\ge\updatedduringaistats{ - \frac{1}{2} \sum_{b=1}^{1/\Cweightlower } v_b (\widetilde{Q}_b - \bar{Q}_{b-1})^2,}
\end{align}
which is Eq.~\eqref{ineq_trans1}.

\end{proof}

\subsection{Transformation 2}\label{subsec_trans}

We omit index $i$ for ease of notation. For the ease of notation we use $v_b = C_b \widetilde{w}_b$. 

\begin{lem}{(Transformation 2)}\label{lem_trans2}
Let $\Ctranstwo = \frac{20}{3}$. 
For any $c > 1$, it holds that
\begin{equation}
    \Ctranstwo \sum_{b=c}^B v_b (\widetilde{Q}_b - \bar{Q}_{b-1})^2
    -
    \sum_{b=c}^B v_b (\bar{Q}_B - \bar{Q}_{b-1})^2 \ge 0
    \label{ineq_transtwo_goal}
    \end{equation}
\end{lem}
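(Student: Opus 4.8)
The plan is to prove the inequality by a telescoping / induction argument on the upper index $B$, exactly mirroring the strategy used for Transformation~1. Fix the arm index (suppressed) and write $v_b = C_b \widetilde{w}_b$, together with the running weighted means $\bar{Q}_{b}$ satisfying the martingale-type identity $\bar{Q}_b = \big(\sum_{b'\le b} v_{b'} \widetilde{Q}_{b'}\big)/\big(\sum_{b'\le b} v_{b'}\big)$ from Eq.~\eqref{ineq_barQtildeavg}. Define, for $b' \ge c$,
\[
Y_{b'} := \Ctranstwo \sum_{b=c}^{b'} v_b (\widetilde{Q}_b - \bar{Q}_{b-1})^2 \;-\; \sum_{b=c}^{b'} v_b (\bar{Q}_{b'} - \bar{Q}_{b-1})^2,
\]
so the goal is $Y_B \ge 0$. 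First I would check the base case $b' = c$: there the second sum is $v_c(\bar{Q}_c - \bar{Q}_{c-1})^2$ and the first is $\Ctranstwo\, v_c(\widetilde{Q}_c - \bar{Q}_{c-1})^2$; since $\bar{Q}_c$ is a convex combination of $\bar{Q}_{c-1}$ and $\widetilde{Q}_c$ with weight $v_c/\sum_{b'\le c} v_{b'} \le 1$ on $\widetilde{Q}_c$, we get $(\bar{Q}_c - \bar{Q}_{c-1})^2 \le (\widetilde{Q}_c - \bar{Q}_{c-1})^2 \le \Ctranstwo (\widetilde{Q}_c - \bar{Q}_{c-1})^2$, so $Y_c \ge 0$.

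The heart is the increment $Y_{b'} - Y_{b'-1}$. Writing $A := v_{b'}$, $\Sigma := \sum_{b=c}^{b'-1} v_b$, and noting $\bar{Q}_{b'}$ is obtained from $\bar{Q}_{b'-1}$ by mixing in $\widetilde{Q}_{b'}$, one expands
\[
Y_{b'} - Y_{b'-1} = \Ctranstwo\, A\,(\widetilde{Q}_{b'} - \bar{Q}_{b'-1})^2 \;+\; \Big(\sum_{b=c}^{b'-1} v_b (\bar{Q}_{b'-1} - \bar{Q}_{b-1})^2 - \sum_{b=c}^{b'-1} v_b (\bar{Q}_{b'} - \bar{Q}_{b-1})^2\Big) \;-\; A(\bar{Q}_{b'} - \bar{Q}_{b-1})^2\Big|_{b=b'}.
\]
Using $\bar{Q}_{b'} - \bar{Q}_{b'-1} = \frac{A}{A+\Sigma'}(\widetilde{Q}_{b'} - \bar{Q}_{b'-1})$ for the appropriate total mass $\Sigma'$ (the full running sum through $b'-1$, including the initialization batches below $c$), the middle parenthesis is a difference of squared distances to a common shifted center and simplifies — after a completion-of-squares step like Eq.~\eqref{ineq_ABXY} — to a bound of the form $-(\text{something})\cdot(\widetilde{Q}_{b'}-\bar{Q}_{b'-1})^2$ where the coefficient is controlled because the shift $\bar{Q}_{b'} - \bar{Q}_{b'-1}$ is small relative to $A+\Sigma'$. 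Collecting terms, one wants to show the net increment is $\ge 0$ provided $\Ctranstwo = 20/3$ is large enough to dominate the constants produced; this is where the numeric constant gets pinned down, and I would expect the clean identity $\sum_{b\le b'} v_b (\bar{Q}_{b'}-P)^2 - \sum_{b\le b'-1} v_b(\bar{Q}_{b'-1}-P)^2 = A\,\widetilde{Q}_{b'}^2 + \cdots$ style manipulation (as in Transformation~1) to do most of the algebraic work, together with the ratio bound $\sum_{b\le b'} v_b \ge$ const$\cdot A$ coming from $\widetilde{w}_b \ge \Cweightlower$ and the growth of $C_b$.

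The main obstacle I anticipate is bounding the cross term: $\bar{Q}_B - \bar{Q}_{b-1}$ telescopes as a sum of the per-batch drifts $\bar{Q}_{b'} - \bar{Q}_{b'-1}$, each of which is a $v_{b'}$-weighted fraction of $(\widetilde{Q}_{b'}-\bar{Q}_{b'-1})$, so $(\bar{Q}_B - \bar{Q}_{b-1})^2$ expands into a double sum of products of these drifts; controlling the off-diagonal cross terms (via Cauchy–Schwarz or the identity that the $\bar Q$'s are a weighted martingale, so cross terms telescope rather than accumulate) against the diagonal "energy" $\sum_b v_b(\widetilde{Q}_b - \bar{Q}_{b-1})^2$ is the delicate part, and it is precisely what forces the constant $\Ctranstwo = 20/3$ rather than something smaller. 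I would handle this by the same monotone-$X_{b'}$ / telescoping device as Lemma~\ref{lem_trans1} so that the double sum never has to be estimated directly — instead each incremental step only sees one new drift, keeping the algebra to a single completion of squares per step.
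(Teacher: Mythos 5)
Your plan has a genuine gap at its core step. You propose to mimic Lemma~\ref{lem_trans1} by showing that the partial quantity
\[
Y_{b'} := \Ctranstwo \sum_{b=c}^{b'} v_b (\widetilde{Q}_b - \bar{Q}_{b-1})^2 - \sum_{b=c}^{b'} v_b (\bar{Q}_{b'} - \bar{Q}_{b-1})^2
\]
is monotone in $b'$, but the increment $Y_{b'}-Y_{b'-1}$ is \emph{not} nonnegative in general. Writing $\delta=\bar{Q}_{b'}-\bar{Q}_{b'-1}$ and $d_b=\bar{Q}_{b'-1}-\bar{Q}_{b-1}$, the change in the subtracted sum is $v_{b'}\delta^2+\sum_{b=c}^{b'-1} v_b(2 d_b\delta+\delta^2)$, while the new positive contribution is $\Ctranstwo\,v_{b'}(\widetilde{Q}_{b'}-\bar{Q}_{b'-1})^2=\Ctranstwo (V_{b'}^2/v_{b'})\,\delta^2$ with $V_{b'}$ the running mass. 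The cross term $2\delta\sum_b v_b d_b$ is \emph{linear} in the new drift $\delta$ but carries the \emph{accumulated} past displacements $d_b$; if all past drifts have the same sign and $\delta$ is small of that sign, this linear term dominates every quadratic-in-$\delta$ term, and $Y_{b'}-Y_{b'-1}<0$ no matter how large $\Ctranstwo$ is. This is exactly the structural difference from Transformation~1: there the reference center $P$ is fixed and the variance decomposition of the weighted mean absorbs the cross term at each step, whereas here the endpoint $\bar{Q}_{b'}$ reappears inside every earlier summand and the reference points $\bar{Q}_{b-1}$ vary with $b$, so the single-step completion of squares you invoke (in the style of Eq.~\eqref{ineq_ABXY}) cannot bound the increment by a multiple of $(\widetilde{Q}_{b'}-\bar{Q}_{b'-1})^2$.

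The paper's proof avoids per-step monotonicity entirely. It substitutes $\widetilde{Q}_b-\bar{Q}_{b-1}=\bigl((v_b+v_{:b})/v_b\bigr)x_b$ and $\bar{Q}_B-\bar{Q}_{b-1}=x_{b:}$ with $x_b=\bar{Q}_b-\bar{Q}_{b-1}$, rewrites the whole left-hand side as a quadratic form $\bx^\top M\bx$ in the drift vector $\bx=(x_c,\dots,x_B)$, and proves $M$ is positive semidefinite for every choice of the $v_b>0$ by Gaussian elimination and Sylvester's criterion. The accumulation of cross terms that defeats your induction is handled there by a different induction: the Schur-complement corrections $Z_b$ produced by elimination are shown to stay bounded as $Z_b=\sum_i v_i/\xi_i$ with every $\xi_i>3$, and it is this bookkeeping (not a per-step energy comparison) that pins down $\Ctranstwo=20/3$. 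If you want to salvage a telescoping argument, you would need to carry an additional potential tracking the signed accumulated drift $\sum_b v_b d_b$, which essentially re-derives the matrix argument.
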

\begin{proof}[Proof of Lemma \ref{lem_trans2}]
Let 
\begin{align}
v_{:b} &= \sum_{b'=c}^{b-1} v_{b'}\\
v_{b:} &= \sum_{b'=b}^B v_{b'}\\
x_b &= \bar{Q}_b - \bar{Q}_{b-1}\\
x_{b:} &= \sum_{b'=b}^B x_{b'}
\end{align}

By using
\[
\frac{\sum_b v_b \widetilde{Q}_b}{\sum_b v_b} = \bar{Q}_b,
\]
Eq.~\eqref{ineq_transtwo_goal} is equivalent to
\begin{align}\label{ineq_vxc}
\Ctranstwo \sum_{b=c}^B v_b \left(\frac{v_b + v_{:b}}{v_b}\right)^2 x_b^2
-
\sum_{b=c}^B v_b x_{b:}^2 
\ge 0.
\end{align}
We will show that this holds for any $(v_b)_{b\in[B]} > 0$, $(x_b)_{b \in [B]} \in \mathbb{R}$. This equation is a quadratic formula for $(x_b)_{b \in [B]}$. 

Namely, letting $\bx = (x_c, x_{c+1},\dots,c_{B})^\top$ be a size $B-c+1$ vector, 
Eq.~\eqref{ineq_vxc} = $\bx^T M \bx$, where
\begin{align*}
    M = \begin{bmatrix}
        D_c & -O_c & -O_c & \cdots & -O_c\\
        -O_c & D_{c+1} & -O_{c+1} & \cdots & -O_{c+1}\\
        -O_c & -O_{c+1} & D_{c+2} & \cdots & -O_{c+2}\\
        \vdots & \vdots & \vdots & \ddots & \vdots\\
        -O_c & -O_{c+1} & -O_{c+2} & \cdots & D_B
    \end{bmatrix} \in \mathbb{R}^{(B-c+1) \times (B-c+1)}
\end{align*}
where $D_i = \Ctranstwo \frac{v_{:i+1}^2}{v_i}-v_{:i+1}$ and $O_i = v_{:i+1}$. 

The problem of showing the positivity of Eq.~\eqref{ineq_vxc} for all $(v_b)_{b\in[B]} > 0$, $(x_b)_{b \in [B]} \in \mathbb{R}$ is equivalent to showing the positive-definiteness of $M$ for any $(v_b)_{b\in[B]}$.
Sylvester's criterion states that, a symmetric matrix $M$ is positive definite if and only if the Gaussian elimination transforms it to a triangular matrix with positive diagonal using only the elementary operation of adding a multiple of the column with another column \citep{kucera2017linear}.  
Namely, we will derive the diagonals of: 
\begin{align*}
    M' = \begin{bmatrix}
        D_c' & 0 & 0 & \cdots & 0\\
        -O_c' & D_{c+1}' & 0 & \cdots & 0\\
        -O_c' & -O_{c+1}' & D_{c+2}' & \cdots & 0\\
        \vdots & \vdots & \vdots & \ddots & \vdots\\
        -O_c' & -O_{c+1}' & -O_{c+2}' & \cdots & D_B'
    \end{bmatrix} 
\end{align*}
The $O_i'$ and $D_i'$ above follows the following recurrence relation:
\begin{align*}
    D_b' &= D_b - Z_{b-1}\\
    O_b' &= O_b + Z_{b-1}
\end{align*}
where $Z_b = \sum_{i=c+1}^{b} \frac{O_i'^2}{D_i'}$. 
\begin{claim}\label{claim_Z} %
Let $b \ge c$ be arbitrary. There exists $\xi_1, \cdots, \xi_b \in (3, \infty]$ such that $Z_b = \sum_{i=1}^b \frac{v_i}{\xi_i}$, where $x/\infty = 0$.
\end{claim} 
\begin{proof} We will prove it by induction. 
    
\paragraph{Base case $b=c$:} $Z_c = 0$ and thus the claim holds.
\paragraph{Induction step:} Suppose that the induction hypothesis holds for all $b<b_0$. For $b=b_0$, 

$$Z_{b_0} = Z_{b_0-1} + \frac{O_b'^2}{D_b'} = \sum_{i=1}^{b-1} \frac{v_i}{\xi_i} + \frac{O_b'^2}{D_b'}$$

Which means, if we prove that $\frac{O_{b_0}'^2}{D_{b_0}'}=\frac{v_{b_0}}{\xi_{b_0}}$ for some $\xi_{b_0} > 3$, then it verifies the induction step.

\begin{align*}
    \frac{O_{b_0}'^2}{D_{b_0}'} &= \frac{\left( O_{b_0} + Z_{b_0-1}\right)^2}{(D_{b_0}-Z_{b_0-1})}\\
    &=\frac{\left( O_{b_0} + \sum_{i=1}^{b_0-1} \frac{v_i}{\xi_i}\right)^2}{(D_{b_0}-\sum_{i=1}^{b_0-1} \frac{v_i}{\xi_i})} \tag{Induction hypothesis}\\
    &=\frac{\left( v_{:b_0+1} + \sum_{i=1}^{b_0-1} \frac{v_i}{\xi_i}\right)^2}{(\Ctranstwo\frac{v_{:b_0+1}^2}{v_{b_0}} - v_{:b_0+1}-\sum_{i=1}^{b_0-1} \frac{v_i}{\xi_i})}\\
    &\leq \frac{\left( 1+\frac{1}{3}
    \right)^2 v_{:b_0+1}^2}{(\Ctranstwo{v_{:b_0+1}^2} - v_{b_0} \cdot v_{:b_0+1}-\frac{1}{3} v_{b_0}\cdot v_{:b_0+1})} \cdot v_{b_0}\tag{$\xi_i >3$ for all $i\leq b_0-1$, $v_{b_0}\geq 0$}\\
    &\leq \frac{\left( 1+\frac{1}{3}
    \right)^2 v_{:b_0+1}^2}{\Ctranstwo{v_{:b_0+1}^2} - (1+\frac{1}{3})v_{:b_0+1}^2} \cdot v_{b_0} \tag{$v_{b_0}<v_{:b_0+1}$}\\
    &=\frac{\frac{16}{9}}{\Ctranstwo-\frac{4}{3}}v_{b_0} = \frac{v_{b_0}}{3} \tag{$\Ctranstwo = \frac{20}{3}$}
\end{align*}
Therefore, since $0\leq \frac{O_{b_0}'^2}{D_{b_0}'} \leq \frac{v_{b_0}}{3}$, there exists a constant $\xi_{b_0}>3$ such that $\frac{O_{b_0}'^2}{D_{b_0}'}=\frac{v_{b_0}}{\xi_{b_0}}$ and therefore one can write $Z_{b_0}= \sum_{i=1}^{b_0} \frac{v_i}{\xi_i}$, which proves the induction step, which derives the claim.
\end{proof}

Now, based on Claim \ref{claim_Z}, one can see $D_b'>0$ for all $b\leq B$ since 
\begin{align*}
    D_b' &= D_b - Z_{b-1}\\
    &= \frac{\Ctranstwo v_{:b+1}^2}{v_b} - v_{:b+1}-\sum_{i=1}^{b-1}\frac{v_i}{\xi_i}\\
    &\geq (\Ctranstwo-1) v_{:b+1}-\frac{1}{3}v_{:b} \tag{$\xi_i \geq 3$ for all $i \in [B]$}\\
    &\geq \left(\Ctranstwo-\frac{4}{3}\right) v_{:b+1} \geq 0
\end{align*}
Therefore, $M'$ has only positive diagonal entries.
By Sylvester's criterion $M$ is a positive definite matrix, which completes the proof.
\end{proof}

\section{Large deviation lemma}\label{sec_thm_constant}

This section introduces the following Lemma, which uses the large deviation bound. We apply this lemma to analyze Almost Tracking (Algorithm \ref{alg:R_track_almost}) and Pooled Allocation (Algorithm \ref{alg:pooled}).
\begin{lem}{\rm (PoE of the batched algorithm, large deviation)}\label{lem_abstpoe}
    Let $x > 0$ be arbitrary. %
    A BAI algorithm is batch-based if it splits $T$ rounds into $B$ predefined batches of consecutive subset of rounds. Let $N_{b,i}$ and $Q_{b,i}$ be corresponding statistics on the $b$-th batch.
    For any batch-based adaptive algorithm, it always holds that
    \begin{equation}\label{ineq_abstpoe}
    \Prob \left[\sum_{b\in [B]}
    \sum_{i \in [K]} N_{b,i} \frac{(Q_{b,i}-P_i)^2}{2}
     \ge x \right]\leq 
    \exp\left( - x +
    \updatedduringaistats{\frac{TK}{N} \max\left\{\log \left( \frac{x N}{TK}, 0\right)\right\} + \frac{TK}{N} (\log N + \Ccounting)} \right),
    \end{equation}
    where $\Ccounting > 0$ is an absolute constant.
\end{lem}

Lemma \ref{lem_abstpoe} does not use the property of Gaussian KL divergence (i.e., squared form) and holds in many cases with large deviation such as exponential family of distributions. Therefore, we use KL divergene formula $\gkldiv{Q_i}{P_i}$.

Since the algorithm is adaptive, $N_{b,i}$ is a random variable depending on the history. To use the concentration inequality, we consider the following equivalent formulation of a batched algorithm.
\begin{itemize}
\item First, the environment (nature) draws $T$ arms for each batch $b \in [B]$ and arm $i \in [K]$, which are independent samples from $P_i$.
\item Second, the batched algorithm runs. Based on the history up to batch $b-1$, it selects the number of samples to draw and receives the first $N_{b,i}$ observations.
\end{itemize}
The main advantage of this procedure is we can deal the samples independently from the algorithm. For example, we can apply Hoeffding inequality to the first $n_{b,i}$ samples of arm $i$ at batch $b$, even though in some cases algorithm does not draw $n_{b,i}$ samples.

\begin{proof}[Proof of Lemma \ref{lem_abstpoe}]
For $s \in \mathbb{N}$, let 
\[
\mathcal{V}_s
:= \left\{
(v_{b,i})_{b\in[B],i\in[K]} \in \mathbb{N}^{B\times K}: \sum_{b,i} v_{b,i} \in 
\left[x+s, x+s+1\right)
\right\},
\]
and $\mathcal{V} := \bigcup_{s = \{0,1,2,\dots\}} \mathcal{V}_s$.
Note that $\mathcal{V}_s$ is a finite set of combination and we can derive the following:
\updatedduringaistats{
\begin{align}
|\mathcal{V}_s| 
&= 
\binom{x+s+BK-1}{BK-1}\\
&\le \binom{x+s+BK}{BK}\\
&\le 
\left( \frac{e (x+s+BK)}{BK} \right)^{BK}
\text{\ \ \ \ (By $\binom{n}{k} \le (en/k)^k$)}
\label{ineq_vssize}
\end{align}
}

\updatedduringaistats{Let $\mathcal{N}(T) = (\{0,1,2,\dots,N\})^{BK}$.}
Let $\bm{n} = \{n_{b,i}\}_{b\in[B],i\in[K]} \in \mathcal{N}(T)$.
\updatedafterneurips{
Let $Q_{b,i}'$ be the empirical mean of the first $n_{b,i}$ samples drawn for arm $i$ in batch $b$.
}
Then, 
\begin{align*}
\mathbb{P}&\left[
\sum_{b\in[B]} \sum_{i \in [K]} N_{b,i} \frac{(Q_{b,i}-P_i)^2}{2} \ge x
\right]\\
&\leq 
\sum_{\bm{n} \in \EN(T)}
\mathbb{P}\left[
\bigcap_{b,i} \{N_{b,i} = n_{b,i}\}, \sum_b \sum_{i} N_{b,i} \frac{(Q_{b,i}-P_i)^2}{2} \ge x\right]\\
&\le 
\sum_{\bm{v} \in \mathcal{V}}
\sum_{\bm{n} \in \EN(T)}
\mathbb{P}\left[
\bigcap_{b,i}\ \left\{
N_{b,i} = n_{b,i}, 
n_{b,i} \gkldiv{Q_{b,i}}{P_i}  \in [v_{b,i}, v_{b,i}+1)
\right\}
\right]\\
&\updatedafterneurips{\le
\sum_{\bm{v} \in \mathcal{V}}
\sum_{\bm{n} \in \EN(T)}
\mathbb{P}\left[
\bigcap_{b,i}\ \left\{
n_{b,i} \gkldiv{Q_{b,i}'}{P_i}  \in [v_{b,i}, v_{b,i}+1)
\right\}
\right]
}
\\
&\updatedafterneurips{=
\sum_{\bm{v} \in \mathcal{V}}
\sum_{\bm{n} \in \EN(T)}
\prod_{b,i}
\mathbb{P}\left[
n_{b,i} \gkldiv{Q_{b,i}'}{P_i}  \in [v_{b,i}, v_{b,i}+1)
\right]\tag{independence}
}\\
&\updatedafterneurips{=
\sum_{s \ge 0} 
\sum_{\bm{v} \in \mathcal{V}_s}
\sum_{\bm{n} \in \EN(T)}
\prod_{b,i}
\mathbb{P}\left[
n_{b,i} \gkldiv{Q_{b,i}'}{P_i}  \in [v_{b,i}, v_{b,i}+1)
\right]}\\
&\le 
\sum_{s \ge 0} 
\sum_{\bm{v} \in \mathcal{V}_s}
\sum_{\bm{n} \in \EN(T)}
\exp\left(- 
\sum_{b,i} v_{b,i}
\right) 
\tag{Hoeffding's inequality}
\\
&\le 
\sum_{s \ge 0} 
\sum_{\bm{v} \in \mathcal{V}_s}
\sum_{\bm{n} \in \EN(T)}
\exp\left(- 
x-s
\right)
\tag{by $\bm{v} \in \mathcal{V}_s$}\\
&\le 
\sum_{s \ge 0} 
\sum_{\bm{v} \in \mathcal{V}_s}
\updatedduringaistats{(N+1)^{KB}}
\exp\left(- 
x-s
\right)\\
&\le 
\sum_{s \ge 0} 
\updatedduringaistats{
e^{BK}\left(\frac{x+s+BK}{BK}\right)^{BK}
}
\updatedduringaistats{(N+1)^{KB}}
\exp\left(- 
x-s
\right)
\tag{Eq.~\eqref{ineq_vssize}}\\
&\updatedduringaistats{
\le 
\left(\frac{e}{BK}\right)^{BK}
(N+1)^{KB}
\sum_{s \ge 0} (x+s+BK)^{BK} \exp(-x-s)
}
\\
&\updatedduringaistats{
\le 
\left(\frac{e}{BK}\right)^{BK}
(N+1)^{KB}
\sum_{s \ge 0} 3^{BK}(x^{BK}+s^{BK}+(BK)^{BK}) \exp(-x-s)
}
\\
&\updatedduringaistats{
\le 
\left(\frac{e}{BK}\right)^{BK}
(3(N+1))^{KB}
\exp(-x)
\sum_{s \ge 0} (x^{BK}+s^{BK}+(BK)^{BK})
\exp(-s)
}
\\
&\updatedduringaistats{
\le 
\left(\frac{e}{BK}\right)^{BK}
(3(N+1))^{KB}
\exp(-x)
O\left(x^{BK}+(BK)^{BK}\right)
}
\\
&\updatedduringaistats{
\le 
e^{BK}
(3(N+1))^{KB}
\exp(-x)
O(\exp(BK\max\{\log (x/BK), 0\}))
}
\\
&\updatedduringaistats{
\le 
\exp\left(- 
x+\updatedduringaistats{BK \max\{\log (x/(BK)),0\} + BK ((\log N) + \Ccounting)}
\right)
}
\end{align*}
\updatedduringaistats{
where $\Ccounting > 0$ is a constant that does not depend on $T,N,K$.
In the final transformation, we have used
$(N+1)^{BK} = \exp( BK \log (N + 1) ), \Ccounting^{BK} = \exp(O(BK))$ for any $\Ccounting>0$, $\sum_s s^{BK} \exp(-s) = O((BK)^{BK})$. By using $N = T/B$, we obtain Eq.~\eqref{ineq_abstpoe}.
}

\updatedduringaistats{
By using Lemma \ref{lem_abstpoe} and simple calculation, we obtain the following lemma:
\begin{lem}\label{lem_abstpoe_choosen}
Let $x = \frac{C \Cfull T}{ H_1(\bm{P})} = O(T/\log K)$.
Assume that $N= \Omega(K (\log K)^3)$. Then,
\begin{equation}\label{ineq_abstpoe_choosen}
\Prob \left[\sum_{b\in [B]}
\sum_{i \in [K]} N_{b,i} \frac{(Q_{b,i}-P_i)^2}{2}
 \ge x \right]\leq 
\exp\left( - C \left(1 - O\left(\frac{1}{\log K}\right)\right) \frac{\Cfull T}{ H_1(\bm{P})} + o(T) \right).
\end{equation}
\end{lem}
\begin{proof}[Proof of Lemma \ref{lem_abstpoe_choosen}]
The exponent of Lemma \ref{lem_abstpoe} is
\begin{align}
- x + \frac{TK}{N} \max\left\{\log \left( \frac{x N}{TK}\right), 0\right\} + \frac{TK}{N} (\log N + \Ccounting)
&= -\frac{C \Cfull T}{ H_1(\bm{P})} + o(T) + \frac{TK}{N} ((\log N) + \Ccounting)\\
&= -\frac{C \Cfull T}{ H_1(\bm{P})} + o(T) + (1 + \Ccounting) O\left( \frac{T}{(\log K)^2}
\right)
\end{align}
and by using $\frac{C \Cfull T}{ H_1(\bm{P})} = O(T/(\log K))$, we have Eq.~\eqref{ineq_abstpoe_choosen}.
\end{proof}
}

\end{proof}

\section{Proof of Theorem \ref{thm:approxopt}}
\label{sec:proof_approxopt}

This section proves the approximate optimality of the allocation for $H_1$.
We restate the theorem here for convenience.
\approxopt*
The core of Theorem \ref{thm:approxopt} is the ``stability lemma'' that we introduce in Appendix \ref{sec:stability}.
\begin{proof}[Proof of Theorem \ref{thm:approxopt}]
Let 
\[
\Stbl(\bQ, \bP) := 
\sum_i \frac{(Q_i - P_i)^2}{D_i(\bQ)} H_1(\bP).
\]
We have
\begin{align}  
\inf_{\bQ\in\EQ^K,\bP: i^*(\bP) \notin i^*(\bQ)} H_1(\bP) \sum_{i\in[K]} w_i(\bQ) D(Q_i\Vert P_i)
&= \inf_{\bQ\in\EQ^K,\bP: i^*(\bP) \notin i^*(\bQ)} \frac{S(\bQ, \bP)}{\ZHone(\bQ)} \\
&\ge \inf_{\bQ\in\EQ^K,\bP: i^*(\bP) \notin i^*(\bQ)}
\frac{\Cstabilityinner}{\inf_{\bQ} \ZHone(\bQ)}
\tag{Lemma \ref{lem_stability}}\\
&\ge \inf_{\bQ\in\EQ^K,\bP: i^*(\bP) \notin i^*(\bQ)}
\frac{\Cstabilityinner}{\log K + 1}
\tag{by Lemma \ref{lem_ratio}}\\
\label{ineq_rateopt_approxopt}
\end{align}
Results in \cite{Carpentier2016} implies the optimal rate is $\Cfull = \Theta(\frac{1}{\log K})$ for $H_1(\bP)$, which matches the rate of Eq.~\eqref{ineq_rateopt_approxopt}.
\end{proof}
    
\subsection{Stability lemma}\label{sec:stability}

\begin{lem}{\rm (Stability)}\label{lem_stability}
    Let $\bP, \bQ \in \Real^K$ be two arbitrary vectors such that $\bP$ has unique best arm $i^*(\bP)$ and $i^*(\bP) \notin i^*(\bQ)$.
    Then, there exists a universal constant $\Cstabilityinner >0$ such that
    $\Stbl(\bQ, \bP) \ge \Cstabilityinner.$ 
\end{lem}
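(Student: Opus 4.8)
The plan is to prove the equivalent statement $\sum_{i\in[K]}(Q_i-P_i)^2/D_i(\bQ)\ \ge\ \Cstabilityinner/H_1(\bP)$, i.e.\ a lower bound on a $D_i(\bQ)$-weighted sum of squared empirical deviations. I would begin by normalizing: $\Stbl(\bQ,\bP)$ is unchanged when a common constant is added to $(\bP,\bQ)$ and when both are rescaled, so I may assume $Q_{i^*}=0$ for $i^*:=i^*(\bP)$; write $\Delta_j:=P_{i^*}-P_j>0$ for $j\neq i^*$ and $\Delta_j^Q:=Q^*-Q_j$ with $Q^*:=\max_k Q_k$, noting $Q^*>0$ since $i^*\notin i^*(\bQ)$, hence $\Delta_{i^*}^Q=Q^*>0$. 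The algebraic identity $D_i(\bQ)=(\Delta_i^Q)^2\sum_{j\neq i}(\Delta_j^Q+\Delta_i^Q)^{-2}=\sum_{j\neq i}(1+\Delta_j^Q/\Delta_i^Q)^{-2}$ (valid for $i\notin i^*(\bQ)$; for $i\in i^*(\bQ)$ one has $D_i(\bQ)\le D_{i^*}(\bQ)$ by definition) shows $1\le D_i(\bQ)\le K-1$, that $D_i(\bQ)$ is scale-invariant, and — crucially — that $D_i(\bQ)$ is comparable to $|\{j\neq i:Q_j\ge Q_i\}|$ plus a tail $(\Delta_i^Q)^2\sum_{j:Q_j<Q_i}(\Delta_j^Q)^{-2}$ contributed by arms packed just below arm $i$ in $\bQ$.

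The core elementary fact comes from misidentification. Let $\mathcal A:=\{j\neq i^*:Q_j\ge 0\}$, which is nonempty because every $m\in i^*(\bQ)$ lies in it. For each $j\in\mathcal A$ we have $(Q_j-P_j)+P_{i^*}=Q_j+\Delta_j\ge\Delta_j>0$, hence $\max\{|Q_j-P_j|,\,|P_{i^*}|\}\ge\Delta_j/2$; also $|Q_{i^*}-P_{i^*}|=|P_{i^*}|$. In particular if $P_{i^*}\le 0$ then $Q_j-P_j\ge\Delta_j$ for all $j\in\mathcal A$, so dropping the other nonnegative terms gives $\sum_i(Q_i-P_i)^2/D_i(\bQ)\ge\sum_{j\in\mathcal A}\Delta_j^2/D_j(\bQ)$, and pairing this with $H_1(\bP)\ge\sum_{j\in\mathcal A}\Delta_j^{-2}$ via Cauchy--Schwarz yields $\Stbl(\bQ,\bP)\ge\bigl(\sum_{j\in\mathcal A}D_j(\bQ)^{-1/2}\bigr)^2$. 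If instead $P_{i^*}>0$ (the true best arm underperformed empirically), the single coordinate $i=i^*$ already contributes $(Q_{i^*}-P_{i^*})^2/D_{i^*}(\bQ)=P_{i^*}^2/D_{i^*}(\bQ)$, which together with $H_1(\bP)\ge\Delta_m^{-2}$ for $m:=\arg\min_{j\in\mathcal A}\Delta_j$ (and, when $P_{i^*}\ge\Delta_m/2$, the bound $P_{i^*}\ge\Delta_m/2$) closes the estimate as soon as $D_{i^*}(\bQ)$ is of constant order. So in all regimes the argument is reduced to a situation where some relevant $D_j(\bQ)$ is large.

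The main obstacle is exactly this last point: $D_{i^*}(\bQ)$ (or $\max_{j\in\mathcal A}D_j(\bQ)$) can be as large as $K-1$ through its tail term, and then any bound that uses a single coordinate of the sum only yields $\Stbl=\Omega(1/K)$, not a universal constant. To recover a constant I would decompose the arms dyadically by the value of $\Delta_j^Q/\Delta_{i^*}^Q$ (equivalently by the layer of $Q_j$ below $Q^*$), so that within each layer the only quantity entering the crude bound $D_j(\bQ)\le(\text{number of arms in that layer and above})$ is the layer size rather than $K$. For the arms in a fixed layer one establishes a dichotomy: either such an arm's empirical mean is displaced from $P_j$ by an amount commensurate with its distance to $Q_{i^*}$, so that its $(Q_j-P_j)^2$ term divided by its layer-sized $D_j(\bQ)$ already contributes $\Omega(1)$; or its true gap $\Delta_j$ is correspondingly small, so it inflates $H_1(\bP)$ by a matching amount. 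Summing the per-layer estimates — and either absorbing an $O(\log(\text{range}))$ loss into constants or, better, arguing that a single dominant layer already suffices — gives $\Stbl(\bQ,\bP)\ge\Cstabilityinner$. The delicate part I expect to be the bookkeeping in this layered dichotomy, namely ensuring the bound $D_j(\bQ)\le K-1$ is never applied globally, only layer by layer.
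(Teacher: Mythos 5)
Your reduction steps are sound and you have correctly diagnosed the crux of the lemma: the single-coordinate or Cauchy--Schwarz bounds you derive give only $\Stbl(\bQ,\bP) = \Omega(1/K)$ whenever the relevant $D_j(\bQ)$ is of order $K$, which genuinely happens (e.g.\ $Q$ with one arm at the top and $K-1$ arms packed in a narrow band just below it, where $D_j(\bQ) = \Theta(K)$ even for the empirical runner-up). The problem is that your proposed resolution of this case --- the dyadic layering of arms by $\Delta_j^Q/\Delta_{i^*}^Q$ with a per-layer dichotomy --- is left as a sketch, and neither of the two ways you offer to close it works as stated. Absorbing an ``$O(\log(\text{range}))$ loss into constants'' is not legitimate: the number of dyadic layers is governed by the dynamic range of the gaps $\Delta_j^Q$, which is unbounded and is not a function of $K$ alone, let alone a universal constant. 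The alternative, that ``a single dominant layer already suffices,'' is precisely the hard quantitative claim and is asserted without proof. Moreover the dichotomy itself (``either $(Q_j-P_j)^2$ is large or $\Delta_j$ is small, inflating $H_1$'') is additive in spirit, whereas the two effects live in different factors of the product $H_1(\bP)\cdot\sum_i (Q_i-P_i)^2/D_i(\bQ)$ and must be combined multiplicatively; making that combination produce exactly the tail $\sum_{l>j}(\Delta_j^Q)^2/(\Delta_l^Q)^2$ that appears in $D_j(\bQ)$ is the real content of the lemma.

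For comparison, the paper's proof first restricts the minimizing $\bP$ through three reductions (the analogue of your sign observations, plus a cap $P_j \le 2Q_1 - Q_j$ on how far the true best arm may overshoot, costing a factor $4$), and then lower-bounds the two factors $W^1 = \sum_i (Q_i-P_i)^2/D_i(\bQ)$ and $W^2 = \sum_{i\ne j}(P_j-P_i)^{-2}$ separately so that their product, after a single per-arm Cauchy--Schwarz over all $l>j$ (splitting on whether $|Q_l-P_l| \ge \Delta_l^Q$), reproduces $j + \sum_{l>j}(\Delta_j^Q)^2/(\Delta_l^Q)^2$ term by term; this matches $D_j(\bQ)$ up to the constant $4$ by the two-sided estimate $D_j(\bQ) = \Theta\bigl(j+\sum_{l>j}(\Delta_j^Q)^2/(\Delta_l^Q)^2\bigr)$, with no layering and no logarithmic loss. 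Your argument would need an analogous exact matching between the product's cross terms and the tail of $D_j(\bQ)$; without it, the proof does not close.
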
    

\begin{proof}[Proof of Lemma \ref{lem_stability}]

\noindent\textbf{Minimize $P$ given $Q$:}
Let
\begin{align}
\Stbl_{2,j}(\bQ) &:= \min_{\bP: i^*(\bP)=j} \Stbl(\bQ, \bP).%
\end{align}
We assume the uniqueness of the best arm in $\bQ$ and $\bP$. The case where the empirical best arm is not unique is discussed in Appendix \ref{subseq_multibest}.
Without loss of generality, assume that $Q_1 > Q_2 \ge \dots \ge Q_K$. Let $\Delta_i^Q = Q_1 - Q_i$. Let $j = \argmax_i P_i$ and $\Delta_i = P_j - P_i$. By definition, $j \ne 1$.

\begin{figure}[htbp]
    \centering
        \includegraphics[width=\textwidth]{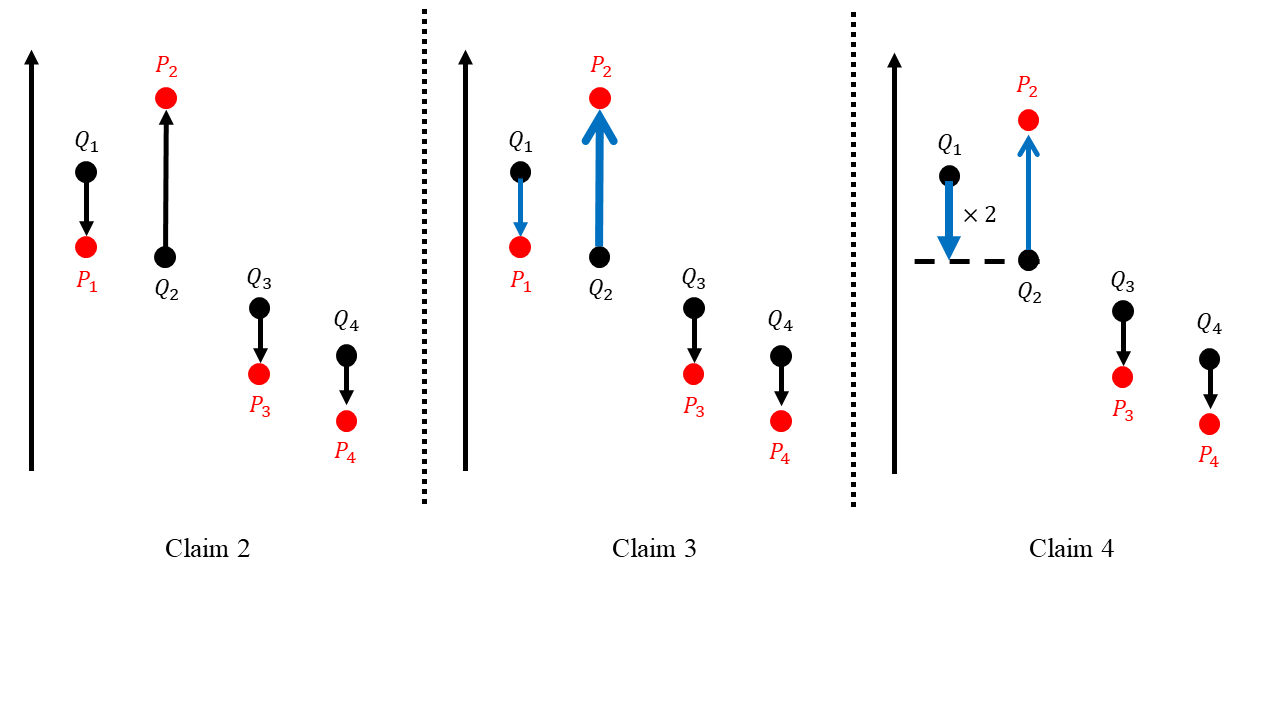}
    \caption{Illustration of the added constraints by Claims 2--4. We use a four-armed instance where the best arm in $\bP$ is arm $j=2$. Claim 2 (left) states that $P_2 > Q_2$ and $P_i \le Q_i$ for $i \ne 2$. Claims 3 (middle) states that bold blue arrow of arm $2$ is longer than the standard-size blue arrow of arm $1$. Claim 4 (right) states that blue arrow of arm $2$ is at most twice as the blue arrow of arm $1$.}
    \label{fig:claims}
\end{figure}

We have
\begin{align*} 
\lefteqn{
\Stbl_{2,j}(\bQ) 
}\\
&= \min_{\bP: i^*(\bP)=j} 
\left(
\sum_i \frac{(Q_i - P_i)^2}{D_i(\bQ)} H_1(\bP)
\right)\\
&= \min_{\bP: i^*(\bP)=j} 
\left(
\sum_i \frac{(Q_i - P_i)^2}{D_i (\bQ)} \sum_{i \ne j} \frac{1}{(P_j - P_i)^2}
\right) \tag{By definition}\\ 
&= \min_{\bP: i^*(\bP)=j, P_j \ge Q_j, \forall_{i \ne j} P_i \le Q_i} 
\left(
\sum_i \frac{(Q_i - P_i)^2}{D_i (\bQ)} \sum_{i \ne j} \frac{1}{(P_j - P_i)^2}
\right) \tag{ Claim \ref{claim: obvious}}\\ 
&= \min_{\bP: i^*(\bP)=j, P_j \ge Q_j, 
\forall_{i < j}\,Q_i - (P_j - Q_j) \le P_i \le Q_i, 
\forall_{i > j}\,P_i \le Q_i
} 
\left(
\sum_i \frac{(Q_i - P_i)^2}{D_i (\bQ)} \sum_{i \ne j} \frac{1}{(P_j - P_i)^2}
\right) \tag{Claim \ref{claim: cprev}}\\
&\geq \frac{1}{4} \min_{\bP: i^*(\bP)=j, Q_j \leq P_j \leq 2Q_1 - Q_j, 
\forall_{i < j}\,Q_i - (P_j - Q_j) \le P_i \le Q_i, 
\forall_{i > j}\,P_i \le Q_i
} 
\left(
\sum_i \frac{(Q_i - P_i)^2}{D_i (\bQ)} \sum_{i \ne j} \frac{1}{(P_j - P_i)^2}
\right) \tag{Claim \ref{claim: pjconst}}\\
&= \Omega(1) \tag{Claim \ref{claim: final lower bound of V2q}}
\end{align*}
and the proof of Lemma \ref{lem_stability} is completed.
\end{proof}

\subsection{Proof of Claims}

\begin{claim}\label{claim: obvious}
    $\min\limits_{\bP: i^*(\bP)=j} \Stbl(\bQ, \bP) = \min\limits_{\bP: i^*(\bP)=j, P_j \ge Q_j, \forall_{i \ne j} P_i \le Q_i} \Stbl(\bQ, \bP)$
\end{claim}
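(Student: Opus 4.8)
The plan is to prove Claim~\ref{claim: obvious} by a coordinatewise ``clipping'' argument. One direction is immediate: the right-hand side minimizes $\Stbl(\bQ,\bP)$ over a smaller feasible set, so its value is at least that of the left-hand side. For the reverse inequality I would show that every $\bP$ feasible for the left-hand problem --- i.e.\ with unique best arm $i^*(\bP)=j$ --- admits a $\bP'$ feasible for the right-hand problem with $\Stbl(\bQ,\bP')\le\Stbl(\bQ,\bP)$; taking the infimum over such $\bP$ then forces the two values to coincide (so existence of minimizers need not be discussed separately).

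Concretely, given $\bP$ with $i^*(\bP)=j$, I would set $P_j':=\max\{P_j,Q_j\}$ and $P_i':=\min\{P_i,Q_i\}$ for $i\neq j$. First I would check feasibility. Since $j$ is the \emph{unique} best arm of $\bP$ we have $P_j>P_i$ for all $i\neq j$; the clipping raises only the $j$-th coordinate and lowers all the others, so $P_j'\ge P_j>P_i\ge P_i'$, and hence $j$ is still the unique best arm of $\bP'$. The inequalities $P_j'\ge Q_j$ and $P_i'\le Q_i$ for $i\neq j$ hold by construction, so $\bP'$ lies in the constrained feasible set.

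Next I would show the objective does not increase. Writing $\Stbl(\bQ,\bP)=\bigl(\sum_i (Q_i-P_i)^2/D_i(\bQ)\bigr)H_1(\bP)$, the weights $D_i(\bQ)>0$ depend only on $\bQ$, and since $i^*(\bP')=j$ we still have $H_1(\bP')=\sum_{i\neq j}(P_j'-P_i')^{-2}$. For the first factor each numerator can only shrink: $(Q_j-P_j')^2\le (Q_j-P_j)^2$ because $P_j'$ is $P_j$ moved toward $Q_j$ from below, and $(Q_i-P_i')^2\le (Q_i-P_i)^2$ for $i\neq j$ by the analogous one-sided clip. For the second factor, $P_j'-P_i'=\max\{P_j,Q_j\}-\min\{P_i,Q_i\}\ge P_j-P_i>0$, hence $(P_j'-P_i')^{-2}\le(P_j-P_i)^{-2}$ and $H_1(\bP')\le H_1(\bP)$. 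Both factors are nonnegative and nonincreasing under the clip, so $\Stbl(\bQ,\bP')\le\Stbl(\bQ,\bP)$, completing the reduction.

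I do not expect a genuine obstacle. The only points worth care are that clipping all coordinates at once keeps $j$ strictly on top --- the chain $P_j'\ge P_j>P_i\ge P_i'$ takes care of this --- and that the argument uses nothing about $D_i(\bQ)$ beyond positivity, so it never touches the precise form in~\eqref{ineq_Di}. Since the later claims repeatedly tighten the feasible region of $\bP$ in exactly this style, I would keep the write-up short and reusable.
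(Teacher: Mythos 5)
Your proposal is correct and matches the paper's argument in essence: the paper also clips the offending coordinates onto $\bQ$ (it adjusts $P_j$ to $Q_j$ first and handles the other coordinates ``similarly''), observing that each clip keeps $j$ the unique best arm while decreasing both the weighted sum of squares and $H_1(\bP)$. Your simultaneous one-shot clip with the chain $P_j'\ge P_j>P_i\ge P_i'$ is just a cleaner packaging of the same idea.
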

\begin{proof}
    For a $\bP \in \{\bP: i^*(\bP)=j, P_j < Q_j\}$, define $\bP'$ as $$P_s'=\begin{cases}
        Q_j & s=j\\
        P_s & \text{Otherwise}
    \end{cases}.
$$ Then, one can check $i^*(\bP')=j$ and $\Stbl(\bQ, \bP)>\Stbl(\bQ,\bP')$. Therefore, we can assume that the minimum argument should satisfy $P_j\geq Q_j$. Similarly, we can prove that the optimal argument satisfies $P_i \leq Q_i$ for all $i\neq j$. 
\end{proof}

\begin{claim}\label{claim: cprev}
    $$\min\limits_{\bP: i^*(\bP)=j, P_j \ge Q_j, \forall_{i \ne j} P_i \le Q_i} \Stbl(\bQ, \bP) = \min\limits_{\bP: i^*(\bP)=j, P_j \ge Q_j, 
\forall_{i < j}\,Q_i - (P_j - Q_j) \le P_i \le Q_i, 
\forall_{i > j}\,P_i \le Q_i
}  \Stbl(\bQ, \bP)$$
\end{claim}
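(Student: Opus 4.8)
The inequality ``$\ge$'' is immediate, since the right-hand feasible set is the left-hand one with the extra constraints $P_i \ge Q_i - (P_j - Q_j)$ for $i<j$ imposed. For ``$\le$'' the plan is to show that every $\bP$ in the left-hand feasible set can be replaced by some $\bP'$ in the right-hand feasible set with $\Stbl(\bQ,\bP') \le \Stbl(\bQ,\bP)$; passing to infima then gives equality. Since $\Stbl(\bQ,\bP)=\infty$ whenever $\bP$ has a non-unique argmax (a zero gap $P_j-P_i$ makes $H_1(\bP)$ infinite), we may assume $\Stbl(\bQ,\bP)<\infty$, so $j$ is the strict argmax.

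The engine is a \emph{synchronized shift}. If some $i_0<j$ violates the new constraint, i.e. $u := Q_{i_0}-P_{i_0} > b := P_j-Q_j\ (\ge 0)$, I would raise both $P_{i_0}$ and $P_j$ by $t := (u-b)/2>0$. Then the gap $P_j-P_{i_0}$ is unchanged, so $j$ is still the strict argmax and the $H_1$ summand $1/(P_j-P_{i_0})^2$ is untouched; $P_{i_0}$ stays $\le Q_{i_0}$ and $P_j$ stays $\ge Q_j$, so the point is still left-feasible; and $P_{i_0}$ ends up exactly at $Q_{i_0}-(P_j-Q_j)$, so arm $i_0$ now satisfies the new constraint. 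Writing $\Stbl = A\cdot B$ with $A=\sum_k (Q_k-P_k)^2/D_k(\bQ)$ and $B=H_1(\bP)=\sum_{k\ne j}(P_j-P_k)^{-2}$: the other summands $(P_j-P_k)^{-2}$ of $B$ do not increase (and strictly decrease when $K\ge 3$), so $B$ does not increase; and only the $i_0$ and $j$ summands of $A$ change, by $\tfrac{u-b}{4}\big(\tfrac{u+3b}{D_j(\bQ)}-\tfrac{3u+b}{D_{i_0}(\bQ)}\big)$, which is $\le 0$ once $D_{i_0}(\bQ)\le D_j(\bQ)$ (using also $u+3b\le 3u+b$, i.e. $b\le u$). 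Hence $\Stbl$ does not increase. Because the shift only raises $P_j$, the lower bounds $Q_{i'}-(P_j-Q_j)$ for the other $i'<j$ can only drop, so no already-satisfied constraint is broken; iterating over the finitely many violating indices then produces $\bP'$.

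The one substantive ingredient is the monotonicity $D_{i_0}(\bQ)\le D_j(\bQ)$ for $i_0<j$. If $i_0 = i^*(\bQ)$ (WLOG arm $1$) this is the definition $D_1(\bQ)=\min_{k\ne 1}D_k(\bQ)$. If $1<i_0<j$, then $p:=\Delta_{i_0}^Q \le q:=\Delta_j^Q$ because $Q_{i_0}\ge Q_j$; isolate the common cross term to write $D_{i_0}(\bQ)=p^2\big(\tfrac{1}{(p+q)^2}+\sum_{k\ne i_0,j}\tfrac{1}{(\Delta_k^Q+p)^2}\big)$ and $D_j(\bQ)=q^2\big(\tfrac{1}{(p+q)^2}+\sum_{k\ne i_0,j}\tfrac{1}{(\Delta_k^Q+q)^2}\big)$, and compare termwise using the elementary fact that $x\mapsto x^2/(a+x)^2=(1-a/(a+x))^2$ is nondecreasing on $x\ge 0$ for every $a=\Delta_k^Q\ge 0$. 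This yields $D_{i_0}(\bQ)\le D_j(\bQ)$.

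The main obstacle is feasibility: the naive move of resetting $P_{i_0}$ straight to its boundary value $Q_{i_0}-(P_j-Q_j)$, in the style of the proof of Claim~\ref{claim: obvious}, can push $P_{i_0}$ above $P_j$ and destroy $i^*(\bP)=j$ (it is genuinely possible that $Q_{i_0}-(P_j-Q_j)\ge P_j$ when $P_j$ is just above $Q_j$). Coupling the move to an equal rise of $P_j$ is what freezes the gap $P_j-P_{i_0}$ — hence feasibility and its $H_1$ term — while still shrinking the remaining $H_1$ terms, and the delicate point to write out carefully is that this rise of $P_j$ cannot inflate $A$ beyond what the rise of $P_{i_0}$ saves, which is exactly where $D_{i_0}(\bQ)\le D_j(\bQ)$ is used.
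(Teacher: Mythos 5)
Your proposal is correct and matches the paper's own proof essentially step for step: the synchronized shift raising $P_{i_0}$ and $P_j$ by $(u-b)/2$ is exactly the paper's transformation, the change in $\sum_k (Q_k-P_k)^2/D_k(\bQ)$ is the same expression controlled by the same monotonicity $D_{i_0}(\bQ)\le D_j(\bQ)$ for $i_0<j$ (which the paper isolates in a separate lemma, proved by a slightly different but equivalent elementary comparison), and the $H_1$ factor is handled identically via the frozen gap $P_j-P_{i_0}$. The only blemish is cosmetic: the labels ``$\ge$'' and ``$\le$'' in your opening paragraph are swapped (set inclusion gives $\min_{\mathrm{LHS}}\le\min_{\mathrm{RHS}}$, and the replacement argument gives the reverse), but both arguments are sound and together yield the claimed equality.
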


\begin{proof}
    For $\bP \in \{\bP: i^*(\bP)=j, P_j \ge Q_j, \forall_{i \ne j} P_i \le Q_i\}$ suppose that there exists a coordinate $t<j$ such that $Q_t -P_t \geq P_j - Q_j$. Then, define $P'$ as $$P_s'=\begin{cases}
        Q_j + \frac{Q_t-P_t +P_j-Q_j}{2} & s=j\\
        Q_t - \frac{Q_t-P_t +P_j-Q_j}{2} & s=t\\
        P_s & \text{Otherwise}
    \end{cases}$$. Then, since $Q_t -P_t \geq P_j - Q_j$, $P_j' > Q_j + P_j-Q_j= P_j$. In addition, $P_j' - P_t' =  P_j - P_t$. This implies $H_1(\bP')<H_1(\bP)$. Now let's compare $\sum_i \frac{(Q_i - P_i)^2}{D_i(\bQ)}$ and $\sum_i \frac{(Q_i - P_i')^2}{D_i(\bQ)}$. 
    \begin{align*}
        \sum_i \frac{(Q_i - P_i)^2}{D_i(\bQ)}-\sum_i \frac{(Q_i - P_i')^2}{D_i(\bQ)}&=\frac{(Q_j - P_j)^2 - (Q_j-P_j')^2}{D_j(\bQ)}+\frac{(Q_t-P_t)^2 - (Q_t - P_t')^2}{D_t(\bQ)}\\
        &=\frac{(P_j' - P_j)(2Q_j-P_j-P_j')}{D_j(\bQ)}+\frac{(P_t'-P_t)(2Q_t-P_t-P_t')}{D_t(\bQ)}\\
        &=\frac{\frac{Q_t -P_t +Q_j - P_j}{2} \cdot \frac{3Q_j - 3P_j -Q_t + P_t}{2}}{D_j(\bQ)}+\frac{\frac{Q_t -P_t +Q_j-P_j}{2}\cdot\frac{3Q_t -3P_t +P_j -Q_j}{2}}{D_t(\bQ)}\\
        &\geq\frac{\frac{Q_t -P_t +Q_j - P_j}{2} \cdot \frac{3Q_j - 3P_j -Q_t + P_t}{2}}{D_j(\bQ)}+\frac{\frac{Q_t -P_t +Q_j-P_j}{2}\cdot\frac{3Q_t -3P_t +P_j -Q_j}{2}}{D_j(\bQ)} \tag{$D_j(\bQ)>D_t(\bQ)$ and numerator is positive}\\
        &\geq\frac{\frac{Q_t -P_t +Q_j - P_j}{2} \cdot \frac{2(Q_j - P_j) -2(Q_t - P_t)}{2}}{D_j(\bQ)}>0, \tag{by assumption $Q_t -P_t \geq P_j - Q_j$}
    \end{align*}
    and thus $\sum_i \frac{(Q_i - P_i)^2}{D_i(\bQ)}>\sum_i \frac{(Q_i - P_i')^2}{D_i(\bQ)}$. Combined with $H_1(\bP') < H_1(\bP)$, we have $\Stbl(\bQ,\bP')<\Stbl(\bQ, \bP)$. 
\end{proof}

\begin{claim}\label{claim: pjconst}
\begin{align*}
   &\min\limits_{\bP: i^*(\bP)=j, Q_j \leq P_j, 
\forall_{i < j}\,Q_i - (P_j - Q_j) \le P_i \le Q_i, 
\forall_{i > j}\,P_i \le Q_i
}\Stbl(\bQ,\bP) \\\geq &\frac{1}{4} \min\limits_{\bP: i^*(\bP)=j, Q_j \leq P_j \leq 2Q_1 - Q_j, 
\forall_{i < j}\,Q_i - (P_j - Q_j) \le P_i \le Q_i, 
\forall_{i > j}\,P_i \le Q_i
} \Stbl(\bQ,\bP)\end{align*}  
\end{claim}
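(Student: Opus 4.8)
The plan is to read the inequality as saying that imposing the extra cap $P_j \le 2Q_1 - Q_j$ — equivalently $P_j - Q_j \le 2\Delta_j^Q$, where $\Delta_j^Q := Q_1 - Q_j > 0$ since the sorting gives $Q_1 > Q_j$ and $j\ne 1$ — inflates the minimum of $\Stbl(\bQ,\cdot)$ by at most a factor $4$. Since shrinking a feasible set can only raise a minimum, it suffices to show: for every $\bP$ feasible for the left-hand minimization there is a $\bP'$ feasible for the right-hand one with $\Stbl(\bQ,\bP') \le 4\,\Stbl(\bQ,\bP)$. Taking the infimum over $\bP$ then gives $\min_{\mathrm{RHS}} \le 4\min_{\mathrm{LHS}}$, which is exactly the claim (and en route this exhibits the right-hand set as nonempty, so its minimum is finite, as needed since the left-hand minimum is finite by Claim~\ref{claim: cprev}).

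For the construction, fix $\bP$ in the left-hand set. If $P_j \le 2Q_1 - Q_j$ already, take $\bP' := \bP$. Otherwise $P_j - Q_j > 2\Delta_j^Q$, so $\lambda := 2\Delta_j^Q/(P_j-Q_j) \in (0,1)$; define $P_i' := Q_i + \lambda(P_i - Q_i)$ for all $i$, i.e. shrink every deviation $P_i - Q_i$ by the common factor $\lambda$. Then $P_j' = Q_j + \lambda(P_j-Q_j) = 2Q_1 - Q_j$, meeting the cap with equality. The feasibility checks are routine: $P_i \le P_i' \le Q_i$ for $i\ne j$ (from $P_i \le Q_i$ and $\lambda\in(0,1)$); $P_j' = 2Q_1 - Q_j > Q_1 \ge Q_i \ge P_i'$ for $i\ne j$, so $j$ is the unique best arm of $\bP'$ and $P_j' \ge Q_j$; and for $i<j$ the constraint $Q_i - (P_j'-Q_j) \le P_i'$ becomes, after substituting $P_j'-Q_j = \lambda(P_j-Q_j)$, simply $\lambda(Q_i-P_i) \le \lambda(P_j-Q_j)$, i.e. the original $i<j$ constraint of Claim~\ref{claim: cprev} rescaled. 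Hence $\bP'$ lies in the right-hand feasible set. Comparing the two values is then easy in one factor: since $D_i(\bQ)$ depends on $\bQ$ only and $(Q_i-P_i')^2 = \lambda^2(Q_i-P_i)^2$, the first factor of $\Stbl$ scales exactly by $\lambda^2$, so everything reduces to the pointwise gap bound
\[
P_j' - P_i' \ \ge\ \tfrac{\lambda}{2}\,(P_j - P_i)\qquad\text{for all } i\ne j,
\]
which gives $H_1(\bP') = \sum_{i\ne j}(P_j'-P_i')^{-2} \le (4/\lambda^2)H_1(\bP)$ and hence $\Stbl(\bQ,\bP') \le \lambda^2\cdot(4/\lambda^2)\,\Stbl(\bQ,\bP) = 4\,\Stbl(\bQ,\bP)$.

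The gap bound is the only real content, and it is where I expect the work to be. Writing $P_j'-P_i' = \lambda(P_j-P_i) + (1-\lambda)(Q_j-Q_i)$: for $i>j$ the sortedness of $\bQ$ gives $Q_j \ge Q_i$, so $P_j'-P_i' \ge \lambda(P_j-P_i) \ge \tfrac{\lambda}{2}(P_j-P_i)$ and nothing more is needed. The delicate case is $i<j$, where $Q_j-Q_i \le 0$ works against us; I would close it with three elementary facts: (a) $Q_i - Q_j \le Q_1 - Q_j = \Delta_j^Q$, since $Q_1$ is the maximum; (b) $P_i \le Q_i \le Q_1$, whence $P_j - P_i \ge P_j - Q_1 = (P_j-Q_j) - \Delta_j^Q = \tfrac{2-\lambda}{\lambda}\Delta_j^Q$, using $P_j-Q_j = 2\Delta_j^Q/\lambda$; (c) $\tfrac{2-\lambda}{2(1-\lambda)} \ge 1$ because $2-\lambda \ge 2(1-\lambda)$ (i.e. $\lambda\ge 0$). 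Chaining these, $\tfrac{\lambda}{2(1-\lambda)}(P_j-P_i) \ge \tfrac{2-\lambda}{2(1-\lambda)}\Delta_j^Q \ge \Delta_j^Q \ge Q_i - Q_j$; multiplying by $(1-\lambda)>0$ gives $\tfrac{\lambda}{2}(P_j-P_i) \ge (1-\lambda)(Q_i-Q_j)$, so $P_j'-P_i' = \lambda(P_j-P_i) - (1-\lambda)(Q_i-Q_j) \ge \tfrac{\lambda}{2}(P_j-P_i)$, as claimed. The one subtlety one must not get wrong is that for $i<j$ the usable bound on $Q_i-Q_j$ is the $\bP$-independent quantity $\Delta_j^Q$ of fact (a), not anything that scales with the possibly huge deviation $P_j-Q_j$; that is precisely what keeps the loss at a bounded constant rather than blowing up, and it is the feasibility constraint $Q_i-P_i \le P_j-Q_j$ (used in the feasibility check of $\bP'$ above, not here) together with $P_i\le Q_1$ that makes the numerology line up at the constant $4$.
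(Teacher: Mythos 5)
Your proposal is correct and follows essentially the same route as the paper: when the cap $P_j \le 2Q_1 - Q_j$ is violated, construct a rescaled instance $\bP'$ that meets the cap by shrinking the deviations $P_i - Q_i$, verify feasibility, and bound the two factors of $\Stbl$ separately so that the product inflates by at most $4$. The only differences are cosmetic — you use a single uniform shrinkage factor $\lambda = 2\Delta_j^Q/(P_j - Q_j)$ and absorb the constant $4$ into the $H_1$ factor via the gap bound $P_j' - P_i' \ge \tfrac{\lambda}{2}(P_j - P_i)$, whereas the paper scales the non-$j$ coordinates by $\Delta_j^Q/E$ with $E = P_j - Q_1$ and places the factor $4$ on the first factor $W^1$ — but the construction, the feasibility checks, and the accounting are the same in substance.
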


\begin{proof}
    Let $\Stbl_{2,j}(\bP ; \bQ):= \sum_i \frac{(Q_i - P_i)^2}{D_i(\bQ)} H_1(\bP)$, and let $\bP^0:= \arg \min_{\bP: i^* (\bP)=j} \Stbl_{2,j}(\bP)$. If $\bP_j^0\leq 2Q_1 - Q_j$, then the Claim holds trivially. Suppose that $\bP_j^0> 2Q_1 - Q_j$, and let $E:=\bP_j^0-Q_1 > Q_1 -Q_j$. Now, let's define $\bP'$ as
    $$P_s'=\begin{cases}
        Q_1 + \Delta_j^Q & s=j\\
        Q_s - (Q_s-P_s)\cdot \frac{\Delta_j^Q}{E}& \text{Otherwise}
    \end{cases}$$

Let's check how this operation changes the value $\Stbl_{2,j}(\bP,\bQ)$, especially for each $W_{2,j}^1 (\bP;\bQ):=  \sum_i \frac{(Q_i -P_i)^2}{D_i(\bQ)}$ and $W_{2,j}^2 (\bP;\bQ):=  {\sum_i \frac{1}{(P_j - P_i)^2}}$ (Note that $\Stbl_{2,j}(\bP;\bQ)=W_{2,j}^1 (\bP;\bQ) \cdot W_{2,j}^2 (\bP;\bQ))$. 

\paragraph{First term $W_{2,j}^1 (\bP;\bQ)$} Since $E>\Delta_j^Q$, $Q_s - P_s' < Q_s - P_s$ for all $s\neq j$, it holds that $W_{2,j}^1 (\bP';\bQ)=\sum_i \frac{(Q_i -P_i')^2}{D_i(Q)} \leq \sum_i \frac{(Q_i -P_i^0)^2 }{D_i(Q)}=W_{2,j}^1 (\bP^0;\bQ)$. Namely, this operation decreases $W_{2,j}^1$. 
In particular, for each $i \neq j$, $Q_s - P_s' = (Q_s - P_s)\cdot \frac{\Delta_j^Q}{E}$, and $P_j' - Q_j = 2\Delta_j^Q = \frac{2\Delta_j^Q}{E+\Delta_j^Q} \cdot (P_j^0-Q_j) \leq \frac{2\Delta_j^Q}{E} \cdot (P_j^0-Q_j)$. Therefore, $W_{2,j}^1 (\bP';\bQ) \leq \frac{(\Delta_j^Q)^2}{E^2}\left(\sum_{i\neq j} \frac{P_i^0-Q_i}{D_i(\bQ)}\right) + \frac{4(\Delta_j^Q)^2}{E^2} \frac{P_j^0-Q_j}{D_j (\bQ)}< \frac{4(\Delta_j^Q)^2}{E^2} W_{2,j}^1 (\bP^0;\bQ)$.

\paragraph{Second term $W_{2,j}^2 (\bP;\bQ)$} By Claim \ref{claim: obvious}, $P_j^0>P_j' > Q_1 > P_s'>P_s^0$ for all $s\neq j$ so $W_{2,j}^2 (\bP^0;\bQ)=\sum_{i \neq j} \frac{1}{(P_j^0 - P_i^0)^2} \leq \sum_{i\neq j}\frac{1}{(P_j' - P_i')^2}=W_{2,j}^2 (\bP';\bQ)$,
which means this operation increase the function $W_{2,j}^2$. More specifically, for each $i \neq j$, 
\begin{align*}
    {(P_j' - P_i')} &= {Q_1 +E \cdot \frac{\Delta_j^Q}{E} - Q_s + (Q_s-P_s^0)\cdot \frac{\Delta_j^Q}{E}}\\
    &= {E \cdot \frac{\Delta_j^Q}{E} +\Delta_s^Q + (Q_s-P_s^0)\cdot \frac{\Delta_j^Q}{E}} \\
    &\geq {E \cdot \frac{\Delta_j^Q}{E} +\Delta_s^Q \frac{\Delta_j^Q}{E} + (Q_s-P_s^0)\cdot \frac{\Delta_j^Q}{E}} \tag{$\frac{\Delta_j^Q}{E}\leq 1$}\\
    &= \frac{\Delta_j^Q}{E} \cdot \left(E+\Delta_s^Q + Q_s-P_s^0\right) = \frac{\Delta_j^Q}{E} \cdot \left(P_j^0 - P_s^0\right). \tag{$E=P_j^0-Q_1, \Delta_s^Q +Q_s = Q_1$}
\end{align*}

Therefore, we can conclude $W_{2,j}^2 (\bP';\bQ) \leq W_{2,j}^2 (\bP^0;\bQ) \cdot \frac{E^2}{(\Delta_j^Q)^2}$.

Overall, $\Stbl_{2,j}(\bQ) = W_{2,j}^1 (\bP^0;\bQ) \cdot W_{2,j}^2 (\bP^0;\bQ) \geq \frac{E^2}{4(\Delta_j^Q)^2} W_{2,j}^1(\bP';\bQ) \cdot \frac{(\Delta_j^Q)^2}{E^2} W_{2,j}^2 (\bP';\bQ) = \frac{1}{4} \Stbl_{2,j}(\bP';\bQ)$. Also, one can check that $P'$ satisfies the constraint of the second optimization. 
Therefore, in {both cases} of $\bP^0$, the claim holds. 
\end{proof}

\begin{claim} \label{claim: final lower bound of V2q}
    $\min\limits_{\bP: i^*(\bP)=j, Q_j \leq P_j \leq 2Q_1 - Q_j, 
\forall_{i < j}\,Q_i - (P_j - Q_j) \le P_i \le Q_i, 
\forall_{i > j}\,P_i \le Q_i
} \Stbl(\bQ, \bP)= \Omega(1)$
\end{claim}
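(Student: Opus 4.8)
The plan is to read the displayed expression as a constrained minimization over $\bP$ and to bound $\Stbl(\bQ,\bP)=W^1\cdot W^2$ from below, where $W^1:=\sum_i (Q_i-P_i)^2/D_i(\bQ)$ and $W^2:=H_1(\bP)=\sum_{i\ne j}(P_j-P_i)^{-2}$, with $j:=i^*(\bP)\ne 1$. I would parametrize by $\delta:=P_j-Q_j\in[0,2\Delta_j^Q]$ (the upper endpoint is precisely the surviving constraint $P_j\le 2Q_1-Q_j$) and by $r_i:=Q_i-P_i\ge0$ for $i\ne j$, recalling that $r_i\le\delta$ for $i<j$, and set $\beta_i:=\Delta_i^Q+\Delta_j^Q$, so that $H_1'(j,\bQ)=\sum_{i\ne j}\beta_i^{-2}$ and $D_j(\bQ)=(\Delta_j^Q)^2\,H_1'(j,\bQ)$.

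First I would record two elementary facts. (a) For every $i\ne j$, $P_j-P_i=\delta+(\Delta_i^Q-\Delta_j^Q)+r_i\le 3(\beta_i+r_i)$, using $\delta\le2\Delta_j^Q\le2\beta_i$ and $\Delta_i^Q-\Delta_j^Q\le\Delta_i^Q\le\beta_i$; hence $W^2\ge\frac19\sum_{i\ne j}(\beta_i+r_i)^{-2}$. (b) For $i>j$ one has $\Delta_i^Q\ge\Delta_j^Q$, so each summand of $H_1'(i,\bQ)$ is dominated by the corresponding summand of $H_1'(j,\bQ)$, and accounting for the single change of excluded index gives $H_1'(i,\bQ)\le H_1'(j,\bQ)+(2\Delta_j^Q)^{-2}\le 2H_1'(j,\bQ)$ (since the $\ell=1$ term alone gives $H_1'(j,\bQ)\ge(\Delta_j^Q)^{-2}$); therefore $D_i(\bQ)\le 2(\Delta_i^Q)^2H_1'(j,\bQ)\le 2\beta_i^2H_1'(j,\bQ)$ for $i>j$. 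Then I would split the arms into $\mathcal S:=\{i\ne j: r_i\le 2\beta_i\}$ and $\mathcal L:=\{i\ne j: r_i>2\beta_i\}$, noting $\mathcal L\subseteq\{i>j\}$ because $r_i\le\delta\le2\Delta_j^Q\le2\beta_i$ already forces $i\in\mathcal S$ whenever $i<j$.

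With this, two lower bounds drop out. Keeping only the arm‑$j$ term of $W^1$ and using (a) together with $\beta_i+r_i\le3\beta_i$ on $\mathcal S$,
\[
\Stbl(\bQ,\bP)\ \ge\ \frac{\delta^2}{D_j(\bQ)}\,W^2\ \ge\ \frac{\delta^2}{81\,(\Delta_j^Q)^2}\cdot\frac{\sum_{i\in\mathcal S}\beta_i^{-2}}{\sum_{\ell\ne j}\beta_\ell^{-2}} .
\]
If $\delta<\Delta_j^Q/2$ this is weak, but then $P_j<Q_1$, and $i^*(\bP)=j$ with $P_1\le Q_1$ forces $\gamma:=Q_1-P_1>Q_1-P_j>\Delta_j^Q/2$; replacing the arm‑$j$ term of $W^1$ by its arm‑$1$ term $\gamma^2/D_1(\bQ)$ and using $D_1(\bQ)=\min_i D_i(\bQ)\le D_j(\bQ)$ recovers the same right‑hand side up to a constant. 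Separately, retaining $\sum_{i\ne j}r_i^2/D_i(\bQ)$ in $W^1$, applying Cauchy–Schwarz to $W^1W^2$, restricting to $\mathcal L$ (where $r_i/(\beta_i+r_i)>2/3$), and inserting (b) to replace $D_i(\bQ)^{-1/2}$ by $(\beta_i\sqrt{2H_1'(j,\bQ)})^{-1}$, then using $(\sum_{\mathcal L}\beta_i^{-1})^2\ge\sum_{\mathcal L}\beta_i^{-2}$, yields $\Stbl(\bQ,\bP)\ge \frac{2}{81}\cdot\frac{\sum_{i\in\mathcal L}\beta_i^{-2}}{\sum_{\ell\ne j}\beta_\ell^{-2}}$. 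Since $\mathcal S$ and $\mathcal L$ partition $\{i\ne j\}$, one of the two mass‑fractions is at least $1/2$, so in every case $\Stbl(\bQ,\bP)\ge c$ for an absolute constant $c>0$, which is the claim (the non‑unique empirical best arm being handled in Appendix \ref{subseq_multibest}).

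The main obstacle is the block of arms $i>j$: the constraints impose no lower bound on $P_i$, so any single such arm can blow up $W^1$ or shrink its $W^2$‑contribution, and only the product is controlled. The quantitative heart is fact (b) — that the normalizer $D_i(\bQ)$ attached to such an arm is comparable to $\beta_i^2 H_1'(j,\bQ)$ — which is exactly what lets the Cauchy–Schwarz bookkeeping close with a constant independent of $K$; lining up the regime split on $\delta$, the arm partition, and the constants is the bulk of the remaining work.
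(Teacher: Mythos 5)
Your proof is correct, and while it runs on the same engine as the paper's argument---factoring $\Stbl(\bQ,\bP)=W^1\cdot W^2$, bounding $P_j-P_i$ by a constant multiple of $\Delta_i^Q+\Delta_j^Q$ plus the perturbation, using the comparison $D_i(\bQ)\lesssim(\Delta_i^Q+\Delta_j^Q)^2H_1'(j,\bQ)$ for $i>j$ (your fact (b) is exactly the content of Lemmas \ref{lem_asdf}, \ref{lem_dlower} and \ref{lem_dupper}), and closing the large-perturbation arms with Cauchy--Schwarz---the bookkeeping is organized quite differently. The paper lower-bounds each factor by a two-term sum, expands the product, splits on $M_l\ge\Delta_l$ versus $M_l<\Delta_l$, and finishes with the two-sided estimate $D_j(\bQ)=\Theta\bigl(j+\sum_{l>j}(\Delta_j^Q)^2/(\Delta_l^Q)^2\bigr)$; you instead normalize everything by $H_1'(j,\bQ)$ so that the small- and large-perturbation arms contribute complementary mass fractions summing to one, which collapses the case analysis to ``one of the two fractions is at least $1/2$'' and yields an explicit absolute constant. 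Your handling of the regime $\delta=P_j-Q_j<\Delta_j^Q/2$ by switching to the arm-$1$ term of $W^1$ is also a genuine (if minor) repair: the paper's annotation that ``for $l=j$, we can use $P_j\ge Q_1$'' is not literally implied by the constraint set---from $P_j>P_1\ge Q_1-(P_j-Q_j)$ one only gets $P_j>(Q_1+Q_j)/2$, so the arm-$j$ term should carry an extra factor of $1/4$ (in fact this inequality shows your ``weak'' case is vacuous under the full constraints, but your bound there is valid regardless). Both routes deliver the claim with a universal constant; yours is arguably the cleaner write-up of the same underlying compensation phenomenon between $W^1$ and $W^2$.
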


\begin{proof}
    Let's focus on $W_{2,j}^1(\bP;\bQ)$ first.
    \begin{align*}
        W_{2,j}^1 (\bP;\bQ)&=\sum_l \frac{(Q_l -P_l)^2}{D_l(\bQ)}\\
        &\geq \sum_{l\geq j} \frac{(Q_l -P_l)^2}{D_l(\bQ)} \\
        &\geq \frac{(\Delta_j^Q)^2}{D_j(\bQ)} + \sum_{l>j}\frac{(P_l - Q_l)^2}{D_l(\bQ)} \tag{for $l=j$, we can use $P_j \ge Q_1$}
    \end{align*}
    Next, we can lower bound $W_{2,j}^2 (\bP;\bQ)$ as follows:
    \begin{align*}
        W_{2,j}^2(\bP;\bQ)&=\sum_{i \neq j} \frac{1}{(P_i - P_j)^2}\\
        &= \sum_{i < j} \frac{1}{(P_i - P_j)^2} + \sum_{i>j} \frac{1}{(P_i - P_j)^2}\\
        &\geq \sum_{i<j} \frac{1}{9(\Delta_j^Q)^2} + \sum_{i>j} \frac{1}{(P_i - P_j)^2} \tag{for $i<j$, $P_j - P_i = P_j - Q_1 + Q_1 -Q_i + Q_i- P_i\leq 3\Delta_j$}\\
        &\geq \frac{j}{18(\Delta_j^Q)^2} + \sum_{i>j} \frac{1}{(P_i - P_j)^2} \tag{$j\geq 2 \rightarrow j-1\geq j/2$}
    \end{align*}
    Now, let $M_i := |Q_i - P_i|$ for notational convenience. 
    \begin{align*}
        \lefteqn{
        \Stbl_{2,j}(\bP;\bQ)
        }\\&= W_{2,j}^1 (\bP;\bQ) \cdot W_{2,j}^2(\bP;\bQ)\\
        &\geq \paren{\frac{(\Delta_j^Q)^2}{D_j(\bQ)} + \sum_{l>j}\frac{(P_l - Q_l)^2}{D_l(\bQ)}}\cdot \paren{\frac{j}{18(\Delta_j^Q)^2} + \sum_{l>j} \frac{1}{(P_l - P_j)^2}} \\
        &\geq \frac{1}{D_j(\bQ)}\paren{1 + \sum_{l>j} \frac{M_l^2}{(\Delta_l^Q)^2}}\cdot \paren{\frac{j}{18} + \sum_{l>j} \frac{(\Delta_j^Q)^2}{(M_j +Q_j -Q_l + M_l )^2}} \tag{Lemma \ref{lem_asdf}}\\
        &\geq \frac{1}{D_j(\bQ)}\paren{1 + \sum_{l>j} \frac{M_l^2}{(\Delta_l^Q)^2}}\cdot \paren{\frac{j}{18} + \sum_{l>j} \frac{(\Delta_j^Q)^2}{(\Delta_j^Q + \Delta_l^Q+ M_l )^2}} \tag{$M_j \leq 2\Delta_j$}\\
        &= \frac{1}{D_j(\bQ)}\paren{1 + \sum_{l>j} \frac{M_l^2}{(\Delta_l^Q)^2}}\cdot \paren{\frac{j}{18} + \sum_{l>j, M_l \geq \Delta_l} \frac{(\Delta_j^Q)^2}{(\Delta_j^Q + \Delta_l^Q+ M_l )^2}+\sum_{l>j, M_l < \Delta_l} \frac{(\Delta_j^Q)^2}{(\Delta_j^Q + \Delta_l^Q+ M_l )^2}} \\
        &\geq \frac{1}{D_j(\bQ)}\paren{1 + \sum_{l>j} \frac{M_l^2}{(\Delta_l^Q)^2}}\cdot \paren{\frac{j}{18} + \sum_{l>j, M_l \geq \Delta_l} \frac{(\Delta_j^Q)^2}{9M_l^2}+\sum_{l>j, M_l < \Delta_l} \frac{(\Delta_j^Q)^2}{9(\Delta_l^Q)^2}} 
        \tag{by $j > l$ implies $\Delta_j^Q \le \Delta_l^Q$}
        \\
        &\geq  \frac{1}{D_j(\bQ)}\paren{1 + \sum_{l>j} \frac{M_l^2}{(\Delta_l^Q)^2}}\cdot \paren{\frac{j}{18} + \sum_{l>j, M_l \geq \Delta_l} \frac{(\Delta_j^Q)^2}{9M_l^2}} +\frac{1}{D_j(\bQ)}\sum_{l>j, M_l < \Delta_l} \frac{(\Delta_j^Q)^2}{9(\Delta_l^Q)^2}\\
        &\geq \frac{1}{D_j(\bQ)}\paren{1 + \sum_{l>j, M_l \geq \Delta_l} \frac{M_l^2}{(\Delta_l^Q)^2}}\cdot \paren{\frac{j}{18} + \sum_{l>j, M_l \geq \Delta_l} \frac{(\Delta_j^Q)^2}{9M_l^2}} +\frac{1}{D_j(\bQ)}\sum_{l>j, M_l < \Delta_l} \frac{(\Delta_j^Q)^2}{9(\Delta_l^Q)^2}\\
        &\geq \frac{1}{D_j(\bQ)} \paren{\sqrt{\frac{j}{18}}+ \sum_{l>j, M_l \geq \Delta_l}\frac{\Delta_j^Q}{3\Delta_l^Q}}^2+\frac{1}{D_j(\bQ)}\sum_{l>j, M_l < \Delta_l} \frac{(\Delta_j^Q)^2}{9(\Delta_l^Q)^2}\tag{Cauchy}\\
        &\geq \frac{1}{D_j(\bQ)} \paren{\frac{j}{18}+ \sum_{l>j}\frac{(\Delta_j^Q)^2}{9(\Delta_l^Q)^2}}\tag{$(a+b)^2>a^2+b^2$ when $a,b>0$}\\
        &=\Omega(1) \tag{From Lemma \ref{lem_dlower} and \ref{lem_dupper}, $D_j(\bQ)= \Theta\paren{j+ \sum_{l>j}\frac{(\Delta_j^Q)^2}{(\Delta_l^Q)^2}}$}
    \end{align*}
\end{proof}

\subsection{Other lemmas}

\begin{lem}
    For $s<j$, $D_s(\bP)<D_j(\bP)$. For $u>j$, $D_u(\bP)<\frac{\Delta_u^2}{\Delta_j^2} D_j(\bP)$.
\end{lem}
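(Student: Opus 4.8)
The plan is to reduce both inequalities to a termwise comparison of the sums $H_1'(i,\bP) := \sum_{l\ne i}(\Delta_l+\Delta_i)^{-2}$, where $\Delta_i = P^* - P_i$. First I would fix the ordering: assume without loss of generality that $P_1 > P_2 \ge \dots \ge P_K$, so that $i^*(\bP)=\{1\}$ and $0=\Delta_1<\Delta_2\le\dots\le\Delta_K$, and $D_i(\bP)=\Delta_i^2 H_1'(i,\bP)$ for $i\ne 1$ while $D_1(\bP)=\min_{i\ne 1}D_i(\bP)$. With this normalization the cases touching the optimal index are immediate: $D_1(\bP)\le D_i(\bP)$ for every $i$ by definition, and when $j=i^*(\bP)=1$ the claimed bound $D_u(\bP) < (\Delta_u^2/\Delta_j^2)D_j(\bP)$ is vacuous since $\Delta_j=0$. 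So the real content is the comparison between two suboptimal indices, and both parts will follow from the elementary fact that $x\mapsto x/(x+c)$ is strictly increasing on $x\ge 0$ for any $c>0$.

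For the first part with $2\le s<j$, I would rewrite $D_i(\bP)=\sum_{l\ne i}\bigl(\Delta_i/(\Delta_i+\Delta_l)\bigr)^2$ and match the $K-1$ summands of $D_s(\bP)$ to those of $D_j(\bP)$ through the identity map on the indices $l\notin\{s,j\}$ together with the transposition $s\leftrightarrow j$. Because $\Delta_s\le\Delta_j$, each off-diagonal matched pair satisfies $\bigl(\Delta_s/(\Delta_s+\Delta_l)\bigr)^2\le\bigl(\Delta_j/(\Delta_j+\Delta_l)\bigr)^2$, and the swapped pair satisfies $\bigl(\Delta_s/(\Delta_s+\Delta_j)\bigr)^2\le\bigl(\Delta_j/(\Delta_s+\Delta_j)\bigr)^2$; summing gives $D_s(\bP)\le D_j(\bP)$, with strict inequality whenever $\Delta_s<\Delta_j$.

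For the second part with $2\le j<u$ it suffices to prove $H_1'(u,\bP)\le H_1'(j,\bP)$, since then $D_u(\bP)=\Delta_u^2 H_1'(u,\bP)\le(\Delta_u^2/\Delta_j^2)\,\Delta_j^2 H_1'(j,\bP)=(\Delta_u^2/\Delta_j^2)D_j(\bP)$. Matching summands once more by the identity on $l\notin\{u,j\}$ and the swap $u\leftrightarrow j$: the two swapped terms are \emph{exactly equal}, both being $(\Delta_j+\Delta_u)^{-2}$, while for each $l\notin\{u,j\}$ the inequality $\Delta_u\ge\Delta_j$ yields $(\Delta_u+\Delta_l)^{-2}\le(\Delta_j+\Delta_l)^{-2}$; hence $H_1'(u,\bP)\le H_1'(j,\bP)$, strict as soon as $\Delta_u>\Delta_j$ (and, since $u>j\ge 2$ forces $K\ge 3$, there is always at least one off-diagonal index present to realize the strict gain).

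The step that I expect to require the most care is the bookkeeping of the ``diagonal swap'' summand, which plays opposite roles in the two parts: in the first part it is (together with $\Delta_s\le\Delta_j$) the source of the strict gain, whereas in the second part it contributes only equality and all of the decrease comes from the off-diagonal terms. Keeping that distinction straight, and recording the harmless caveats about ties ($\Delta_s=\Delta_j$ or $\Delta_u=\Delta_j$ turns ``$<$'' into ``$=$'') and about the optimal index, where $D$ is defined as a minimum rather than by the product formula, is essentially all there is; the rest is the one-line monotonicity of $x\mapsto x/(x+c)$.
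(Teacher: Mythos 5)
Your proof is correct, and it actually does something the paper's own argument does not quite accomplish. The paper factors the prefactor out, writing $D_i = (\Delta_i+x)^2\sum_{z\ne i}(\Delta_z+x)^{-2}$ with a single shift $x$, and compares the two sums as wholes (they differ in exactly one term); this gives the first part cleanly, but for the second part the paper only establishes $D_u < \tfrac{2\Delta_u^2}{\Delta_j^2}D_j$ --- a factor of $2$ weaker than the stated inequality. Your route instead distributes the prefactor into the summands, $D_i = \sum_{l\ne i}\bigl(\Delta_i/(\Delta_i+\Delta_l)\bigr)^2$, and matches terms bijectively via the transposition $s\leftrightarrow j$ (resp.\ $u\leftrightarrow j$) plus the identity elsewhere. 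This buys two things: (i) for the second part the swapped terms of $H_1'(u,\cdot)$ and $H_1'(j,\cdot)$ are \emph{identical}, so you get the constant $1$ as stated in the lemma rather than $2$; (ii) your comparison respects the fact that the additive shift inside the denominators of $D_i$ is $\Delta_i$ itself (varying with $i$) per Eq.~\eqref{ineq_Di}, whereas the paper's uniform $x$ glosses over this. Your caveats about ties ($\Delta_s=\Delta_j$ turning strict inequality into equality) and about the optimal index (where $D$ is a minimum and the $u>j$ bound is vacuous for $\Delta_j=0$) are the same implicit assumptions the paper makes ($\Delta_s<\Delta_j$ strictly), so flagging them is a harmless improvement rather than a gap.
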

\begin{proof}
    Now for $s<j$, 
        $\Delta_s<\Delta_j$, and therefore
        \begin{align*} 
   D_j (\bP) - D_s (\bP)
    &= (\Delta_j + x)^2\sum_{z \ne j} \frac{1}{(\Delta_z + x)^2}-(\Delta_s + x)^2 \sum_{z \ne s} \frac{1}{(\Delta_z + x)^2}\\
    &\geq (\Delta_j + x)^2\sum_{z \ne j} \frac{1}{(\Delta_z + x)^2}-(\Delta_j + x)^2 \sum_{z \ne s} \frac{1}{(\Delta_z + x)^2}\\
    &= (\Delta_j + x)^2\left(\frac{1}{(\Delta_s + x)^2}-\frac{1}{(\Delta_j + x)^2}\right)>0
    \end{align*}
    Now for $u>j$, 
        $\Delta_u>\Delta_j$, and therefore
        \begin{align*} 
    \frac{2\Delta_u^2}{\Delta_j^2}D_j (\bP) - D_u (\bP) &=     \frac{2\Delta_u^2}{\Delta_j^2}(\Delta_j + x)^2\sum_{z \ne j} \frac{1}{(\Delta_z + x)^2}-(\Delta_u + x)^2 \sum_{z \ne u} \frac{1}{(\Delta_z + x)^2}\\
    &\geq     2(\Delta_u + x)^2\sum_{z \ne j} \frac{1}{(\Delta_z + x)^2}-(\Delta_u + x)^2 \sum_{z \ne u} \frac{1}{(\Delta_z + x)^2}\\
    &> (\Delta_u + x)^2\left(\frac{1}{(\Delta
_j + x)^2}-\frac{2}{(\Delta_u + x)^2} + \frac{1}{x^2} \right)>0
    \end{align*}
\end{proof}

\begin{lem}\label{lem_asdf}
    For $l>j$, $\frac{D_j(\bQ)}{D_l(\bQ)}\geq \frac{1}{4}\frac{\Delta_j^2}{\Delta_l^2}$
\end{lem}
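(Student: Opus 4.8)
The plan is to strip the problem down to a statement about the normalizing factors $H_1'(\cdot,\bQ)$ and then use only the monotonicity of the empirical gaps. Recall from Eq.~\eqref{ineq_Di} that, for $i\notin i^*(\bQ)$, $D_i(\bQ)=(\Delta_i^Q)^2\,H_1'(i,\bQ)$ with $H_1'(i,\bQ)=\sum_{z\ne i}(\Delta_z^Q+\Delta_i^Q)^{-2}$; here, as in the surrounding proof of Lemma~\ref{lem_stability}, we take $Q_1>Q_2\ge\cdots\ge Q_K$, so $0=\Delta_1^Q<\Delta_2^Q\le\cdots\le\Delta_K^Q$, and the $\Delta_j,\Delta_l$ in the statement are the empirical gaps $\Delta_j^Q,\Delta_l^Q$. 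If $j=1$ the right-hand side vanishes and there is nothing to prove (the non-unique $i^*(\bQ)$ case being handled separately, as in the proof of Lemma~\ref{lem_stability}), so I may assume $j\ge2$; then $l>j\ge2$, both $D_j(\bQ)$ and $D_l(\bQ)$ are given by the displayed formula, and after cancelling the $(\Delta^Q)^2$ factors the claim reduces to
\[
H_1'(j,\bQ)\ \ge\ \frac14\,H_1'(l,\bQ).
\]

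First I would use $\Delta_l^Q\ge\Delta_j^Q$ to compare the two sums termwise: $(\Delta_z^Q+\Delta_l^Q)^{-2}\le(\Delta_z^Q+\Delta_j^Q)^{-2}$ for every $z$, hence $H_1'(l,\bQ)=\sum_{z\ne l}(\Delta_z^Q+\Delta_l^Q)^{-2}\le\sum_{z\ne l}(\Delta_z^Q+\Delta_j^Q)^{-2}$. Next I would reindex: $\sum_{z\ne l}$ and $\sum_{z\ne j}$ of the common summand $(\Delta_z^Q+\Delta_j^Q)^{-2}$ differ only by restoring the $z=j$ term and deleting the $z=l$ term, so
\[
\sum_{z\ne l}\frac{1}{(\Delta_z^Q+\Delta_j^Q)^2}=H_1'(j,\bQ)+\frac{1}{(2\Delta_j^Q)^2}-\frac{1}{(\Delta_l^Q+\Delta_j^Q)^2}\ \le\ H_1'(j,\bQ)+\frac{1}{4(\Delta_j^Q)^2}.
\]
The remaining step — the one that actually consumes the constant — is to absorb $\frac{1}{4(\Delta_j^Q)^2}$ back into $H_1'(j,\bQ)$: since $j\ge2$, the index $z=1$ occurs in the sum defining $H_1'(j,\bQ)$ and contributes $(\Delta_1^Q+\Delta_j^Q)^{-2}=(\Delta_j^Q)^{-2}$, so $H_1'(j,\bQ)\ge(\Delta_j^Q)^{-2}$ and therefore $\frac{1}{4(\Delta_j^Q)^2}\le\frac14 H_1'(j,\bQ)$. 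Chaining the three bounds yields $H_1'(l,\bQ)\le\frac54 H_1'(j,\bQ)$, which is in fact stronger than the factor $\frac14$ requested, and multiplying back by $(\Delta_j^Q)^2/(\Delta_l^Q)^2$ gives the lemma.

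There is no delicate estimate here; the only points requiring care are the index bookkeeping in the reindexing step and the repeated use of $j\notin i^*(\bQ)$ (equivalently $j\ge2$), both to legitimize the closed form of $D_j(\bQ)$ and to make the $z=1$ term of $H_1'(j,\bQ)$ available. I would also note that the companion monotonicity bound $D_s(\bQ)\le D_j(\bQ)$ for $s<j$ drops out of exactly the same termwise comparison (now with $\Delta_s^Q\le\Delta_j^Q$), so the two $D$-comparison statements can be established with a single proof skeleton.
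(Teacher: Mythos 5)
Your proof is correct, but it takes a genuinely different route from the paper. The paper first establishes the two-sided bounds $D_i(\bQ)\ge i+\sum_{s>i}(\Delta_i^Q)^2/(\Delta_s^Q)^2$ and $D_i(\bQ)\le 4\bigl(i+\sum_{s>i}(\Delta_i^Q)^2/(\Delta_s^Q)^2\bigr)$ (Lemmas \ref{lem_dlower} and \ref{lem_dupper}), takes the quotient of the lower bound for $D_j$ by the upper bound for $D_l$, and then controls that quotient with the mediant inequality $\frac{a+c}{b+d}\ge\min\bigl(\frac{a}{b},\frac{c}{d}\bigr)$ applied to a split of the index range at $l$; the factor $\frac14$ in the lemma is exactly the ratio between the two-sided bounds. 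You instead cancel the $(\Delta^Q)^2$ prefactors and compare the normalizers $H_1'(j,\bQ)$ and $H_1'(l,\bQ)$ directly via a termwise bound plus a reindexing, absorbing the extra $\tfrac{1}{4}(\Delta_j^Q)^{-2}$ term into the $z=1$ summand of $H_1'(j,\bQ)$. This is self-contained (it never invokes Lemmas \ref{lem_dlower}/\ref{lem_dupper}) and yields the sharper constant $H_1'(l,\bQ)\le\frac54 H_1'(j,\bQ)$, i.e.\ $D_j(\bQ)/D_l(\bQ)\ge\frac45\,(\Delta_j^Q)^2/(\Delta_l^Q)^2$; your handling of the $j=1$ and non-unique-maximizer cases is also appropriate. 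One small caveat: your closing remark that the companion bound $D_s(\bQ)\le D_j(\bQ)$ for $s<j$ ``drops out of exactly the same termwise comparison'' is too quick, since there the $(\Delta^Q)^2$ prefactors point in the opposite direction from the $H_1'$ comparison and must be handled jointly (as the paper's separate monotonicity lemma does); this does not affect the correctness of your proof of the stated lemma.
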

\begin{proof}
    First, note that for $a,b,c,d>0$, $\frac{a+c}{b+d} \geq \min \paren{\frac{a}{b}, \frac{c}{d}}$. To see this, WLOG $\frac{a}{b}>\frac{c}{d}$, which means $ad>bc$. Then
    \begin{align}\label{lemeqn: partial fraction}
        \frac{(a+c)}{b+d}-\frac{c}{d} &= \frac{ad+cd - bc-cd}{(b+d)d}>0
    \end{align}
    Now, from Lemma \ref{lem_dlower} and \ref{lem_dupper}, $D_j(\bQ) \geq j + \sum_{s>j} \frac{\Delta_j^2}{\Delta_s^2}$ and $D_l(\bQ) \leq 4(l + \sum_{s>l} \frac{\Delta_l^2}{\Delta_s^2})$. Therefore,
    \begin{align*}
        \frac{D_j(\bQ)}{D_l(\bQ)}&\geq \frac{1}{4} \cdot \frac{j + \sum_{s>j} \frac{\Delta_j^2}{\Delta_s^2}}{l + \sum_{s>l} \frac{\Delta_l^2}{\Delta_s^2}}\\
        &= \frac{1}{4} \cdot \frac{\overbrace{\paren{j + \sum_{s: j<s\leq l} \frac{\Delta_j^2}{\Delta_s^2}}}^{A} + \overbrace{\paren{\sum_{s>l} \frac{\Delta_j^2}{\Delta_s^2}}}^{C}}{\underbrace{l}_{B} +\underbrace{ \sum_{s>l} \frac{\Delta_l^2}{\Delta_s^2}}_{D}}\\
        &\geq \min\paren{\frac{\paren{j + \sum_{s: j<s\leq l} \frac{\Delta_j^2}{\Delta_s^2}}}{l}, \frac{\sum_{s>l} \frac{\Delta_j^2}{\Delta_s^2}}{\sum_{s>l} \frac{\Delta_l^2}{\Delta_s^2}}} \tag{Eq. \eqref{lemeqn: partial fraction}}\\
        &\geq \min\paren{\frac{\paren{j + (l-j)\cdot \frac{\Delta_j^2}{\Delta_l^2}}}{l}, \frac{\Delta_j^2}{\Delta_l^2}}\geq \frac{\Delta_j^2}{\Delta_l^2} \tag{$\Delta_s \leq \Delta_l$ for $s\leq l$}
    \end{align*}
    and the proof ends.
\end{proof}

\begin{lem}\label{lem_dlower}
Suppose that $P_1 \ge P_2 \ge ... \ge P_K$. Then
$D_i(\bP) \ge i + \sum_{j > i} \frac{\Delta_i^2}{\Delta_j^2}$ .
\end{lem}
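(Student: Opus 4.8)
The plan is to prove the bound termwise, exploiting that sorting $P_1 \ge P_2 \ge \cdots \ge P_K$ forces the gaps $\Delta_k = P^* - P_k$ into nondecreasing order $0 = \Delta_1 \le \Delta_2 \le \cdots \le \Delta_K$. I would first dispose of the best arm $i=1$: here the tail sum $\sum_{j>1}\Delta_1^2/\Delta_j^2$ vanishes, so the claim reduces to $D_1(\bP) \ge 1$. Since $D_1(\bP) = \min_{k \ne 1} D_k(\bP)$ by definition, it is enough to observe that each $D_k(\bP)$ with $k \ne 1$ already contains the summand $\Delta_k^2/(\Delta_1 + \Delta_k)^2 = 1$ arising from $j = 1$, whence $D_k(\bP) \ge 1$ for every such $k$.

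For a fixed $i \ne 1$ I would expand $D_i(\bP) = \sum_{j \ne i} \Delta_i^2/(\Delta_i + \Delta_j)^2$ and split the sum at $i$ into a head $\{j : j < i\}$ and a tail $\{j : j > i\}$. On the head the ordering gives $\Delta_j \le \Delta_i$, so the denominator $(\Delta_i + \Delta_j)^2$ is controlled from above; in particular the smallest indices, whose gaps sit near $\Delta_1 = 0$, contribute essentially full units, while only the indices with $\Delta_j$ close to $\Delta_i$ are discounted. The goal is to show the head summands account for the leading term $i$. On the tail the ordering gives $\Delta_j \ge \Delta_i$, so $\Delta_i + \Delta_j$ is governed by $\Delta_j$, and each summand is lower bounded by a multiple of $\Delta_i^2/\Delta_j^2$, matching $\sum_{j>i}\Delta_i^2/\Delta_j^2$. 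Adding the two contributions yields a bound of the stated shape.

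The main obstacle is the constant bookkeeping. The crude uniform estimates $\Delta_i + \Delta_j \le 2\Delta_i$ on the head and $\Delta_i + \Delta_j \le 2\Delta_j$ on the tail each cost a factor relative to the target, so applying them indiscriminately produces only a proportionally weaker inequality. The delicate part is therefore to treat the head carefully — handling the $j=1$ term exactly and using the monotone ordering of $\Delta_1 \le \cdots \le \Delta_{i}$ to group the remaining head gaps — so that the leading coefficient is the full count of indices $j \le i$ rather than a fraction of it, and likewise to confirm that the tail estimate does not leak a constant. Verifying that this refined accounting actually recovers the exact form $i + \sum_{j>i}\Delta_i^2/\Delta_j^2$, as opposed to a constant-factor-weaker statement, is the heart of the argument and the step I would scrutinize most closely.
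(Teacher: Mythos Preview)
Your termwise split into head ($j<i$) and tail ($j>i$) is the right instinct, and it matches the final bookkeeping in the paper. The gap is in your plan for recovering the constants. Working directly with the expansion $\sum_{j\ne i}\Delta_i^2/(\Delta_i+\Delta_j)^2$, every head term with $\Delta_j>0$ is \emph{strictly} below $1$, and every tail term is \emph{strictly} below $\Delta_i^2/\Delta_j^2$; no regrouping of the head gaps can therefore push the total up to $i+\sum_{j>i}\Delta_i^2/\Delta_j^2$. The factor you lose is not a bookkeeping artifact: with the lift fixed at $\Delta_i$, the per-term targets $1$ (on the head) and $\Delta_i^2/\Delta_j^2$ (on the tail) are simultaneously unattainable, so the ``delicate head treatment'' you propose cannot close the gap.

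The paper's device is different. It views $D_i(\bP)$ as one member of the one-parameter family
\[
f(x)=(\Delta_i+x)^2\sum_{j\ne i}\frac{1}{(\Delta_j+x)^2},
\]
lower-bounds by $\inf_{x>0}f(x)$, and then pushes the infimum \emph{inside} the sum:
\[
\inf_{x>0}f(x)\;\ge\;\sum_{j\ne i}\inf_{x>0}\frac{(\Delta_i+x)^2}{(\Delta_j+x)^2}
\;=\;\sum_{j\ne i}\min\Bigl\{1,\;\frac{\Delta_i^2}{\Delta_j^2}\Bigr\}.
\]
The point is that the per-term infimum is attained at \emph{different} $x$ for different $j$: at $x\to\infty$ when $\Delta_j\le\Delta_i$ (ratio $\to 1$) and at $x\to 0$ when $\Delta_j\ge\Delta_i$ (ratio $\to\Delta_i/\Delta_j$). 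This decoupling is exactly what your single-$x$ expansion cannot perform, and it is the missing idea. Once you have it, the head contributes a full $1$ per index and the tail contributes $\Delta_i^2/\Delta_j^2$ per index, with no constant lost.
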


\begin{proof}
\begin{align*}
D_i(\bP)
\ge \inf_{x > 0} (\Delta_i+x)^2 \sum_{j \ne i} \frac{1}{(\Delta_j + x)^2}
&\ge \sum_{j \ne i} \inf_{x > 0} \frac{(\Delta_i+x)^2}{(\Delta_j + x)^2}
\\
&\ge \sum_{j \ne i} \min\{1, \frac{\Delta_i^2}{\Delta_j^2}\}
\\
&= i + \sum_{j > i} \frac{\Delta_i^2}{\Delta_j^2}.
\end{align*}
\end{proof}

\begin{lem}\label{lem_dupper}
Suppose that $P_1 \ge P_2 \ge ... \ge P_K$. Then
$D_i(\bP) \le 4\left(i + \sum_{j > i} \frac{\Delta_i^2}{\Delta_j^2}\right)$ .
\end{lem}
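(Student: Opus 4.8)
The plan is to unwind the definition of $D_i(\bP)$ from Eq.~\eqref{ineq_Di} and then bound the two halves of the sum $\sum_{j\ne i}(\Delta_j+\Delta_i)^{-2}$ by the crudest available comparisons. Since this bound is only ever invoked for indices $i$ that are suboptimal under $\bP$ (so that $\Delta_i=P^*-P_i>0$), I would fix such an $i$; then Eq.~\eqref{ineq_Di} gives $D_i(\bP)=\Delta_i^2\sum_{j\ne i}(\Delta_j+\Delta_i)^{-2}$, with all $\Delta_j\ge 0$ and, because the arms are indexed so that $P_1\ge\cdots\ge P_K$, the gaps $\Delta_j$ nondecreasing in $j$.

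Next I would split the sum at $j=i$. The block $\{j<i\}$ has exactly $i-1$ indices; for each of them I simply discard $\Delta_j\ge 0$ inside the square, so $(\Delta_j+\Delta_i)^{-2}\le\Delta_i^{-2}$ and this block contributes at most $\Delta_i^2\cdot(i-1)\cdot\Delta_i^{-2}=i-1$. For the block $\{j>i\}$ I instead discard $\Delta_i\ge 0$, so $(\Delta_j+\Delta_i)^{-2}\le\Delta_j^{-2}$ and this block contributes at most $\sum_{j>i}\Delta_i^2/\Delta_j^2$. Summing the two estimates yields $D_i(\bP)\le (i-1)+\sum_{j>i}\Delta_i^2/\Delta_j^2\le i+\sum_{j>i}\Delta_i^2/\Delta_j^2$, which is already stronger than the claimed inequality; the extra factor $4$ in the statement is harmless slack that is convenient when this upper bound is paired with Lemma~\ref{lem_dlower} to conclude $D_i(\bP)=\Theta\!\big(i+\sum_{j>i}\Delta_i^2/\Delta_j^2\big)$.

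I do not expect a genuine obstacle here: the argument is a one-line estimate once the sum is split, and the only care needed is the bookkeeping of which of $\Delta_i,\Delta_j$ to drop in each block so that the surviving denominator is exactly $\Delta_i^2$ (for $j<i$) or $\Delta_j^2$ (for $j>i$), together with noting that the index $j=1$ has $\Delta_1=0$, which is precisely why the $\{j<i\}$ block cannot be pushed below $i-1$ and why the bound is stated in the form ``$i+\cdots$''. If one wanted the statement to literally cover the degenerate index $i\in i^*(\bP)$ as well, one would instead use $D_i(\bP)=\min_{k\notin i^*(\bP)}D_k(\bP)$ together with the preceding monotonicity lemma; but the paper only applies the bound to suboptimal $i$, so this is not needed.
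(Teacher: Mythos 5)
Your proof is correct and follows essentially the same route as the paper's: split $\sum_{j\ne i}(\Delta_j+\Delta_i)^{-2}$ at $j=i$, bound each term with $j<i$ by $\Delta_i^{-2}$ and each term with $j>i$ by $\Delta_j^{-2}$. The only cosmetic difference is the factor $4$: the paper's own proof evaluates $D_i(\bP)$ as $(\Delta_i+\Delta_i)^2\sum_{j\ne i}(\Delta_j+\Delta_i)^{-2}$ (so the $4$ appears from $(2\Delta_i)^2$), whereas you work from the literal normalization $\Delta_i^2 H_1'(i,\bP)$ of Eq.~\eqref{ineq_Di} and correctly note that the stated factor $4$ is then harmless slack, consistent with how the lemma is only used (jointly with Lemma~\ref{lem_dlower}) to get $D_i=\Theta\bigl(i+\sum_{j>i}\Delta_i^2/\Delta_j^2\bigr)$.
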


\begin{proof}
\begin{align*}
D_i(\bP)
&{=} (\Delta_i+\Delta_i)^2 \sum_{j \ne i} \frac{1}{(\Delta_j + \Delta_i)^2}\\
&= {4 \cdot \left(\sum_{j>i} \frac{\Delta_i^2}{(\Delta_i+\Delta_j)^2}+\sum_{j<i} \frac{\Delta_i^2}{(\Delta_i+\Delta_j)^2}\right)
{\le}
4 \cdot \left(\sum_{j>i} \frac{\Delta_i^2}{\Delta_j^2}+i\right)}\\
\: .
\end{align*}
\end{proof}

\begin{lem}\label{lem_ratio}
For any $\bP$, $Z(\bP) = \Omega(1)$ and $Z(\bP) = O(\log K)$.
\end{lem}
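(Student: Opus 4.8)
The plan is to reduce the statement to the bounds on $D_i(\bP)$ already proved in Lemmas~\ref{lem_dlower} and~\ref{lem_dupper}. Fix $\bP$ and relabel the arms so that $P_1 \ge P_2 \ge \cdots \ge P_K$; I first treat the generic case where the best arm $i^*(\bP) = \{1\}$ is unique, so the non-optimal arms are exactly $\{2,\dots,K\}$ and, by definition~\eqref{ineq_Di}, $D_1(\bP) = \min_{i \ge 2} D_i(\bP)$.

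For the upper bound $Z(\bP) = O(\log K)$, Lemma~\ref{lem_dlower} gives $D_i(\bP) \ge i$ for every $i \ge 2$, so $\sum_{i=2}^{K} 1/D_i(\bP) \le \sum_{i=2}^{K} 1/i$. For the optimal arm, $D_1(\bP) = \min_{i \ge 2} D_i(\bP) \ge 2$, hence $1/D_1(\bP) \le 1/2$. Summing the two contributions gives $Z(\bP) \le 1/2 + \sum_{i=2}^{K} 1/i = \olog(K)$, and bounding the harmonic tail gives $Z(\bP) \le \log K + 1$, which is the form invoked in the proof of Theorem~\ref{thm:approxopt}.

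For the lower bound $Z(\bP) = \Omega(1)$, Lemma~\ref{lem_dupper} gives $D_i(\bP) \le 4\bigl(i + \sum_{j>i} \Delta_i^2/\Delta_j^2\bigr)$; the relabeling forces $\Delta_i \le \Delta_j$ for $j > i$, so every ratio is at most $1$ and $D_i(\bP) \le 4(i + (K-i)) = 4K$ for each non-optimal $i$, whence also $D_1(\bP) = \min_{i \ge 2} D_i(\bP) \le 4K$. Therefore $Z(\bP) \ge \sum_{i \in [K]} 1/(4K) = 1/4$. (Keeping only the $K-1$ non-optimal terms already gives $(K-1)/(4K) \ge 1/8$, so uniqueness of the best arm is inessential here.)

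Finally the tie case: if $\bP$ has $m \ge 2$ arms achieving the maximum, the optimal set is $\{1,\dots,m\}$, each $D_i(\bP)$ with $i \le m$ equals $D_{m+1}(\bP)$, and Lemmas~\ref{lem_dlower}--\ref{lem_dupper} still control $D_i(\bP)$ for the non-optimal indices $i \in \{m+1,\dots,K\}$ (the same counting arguments yield $D_i(\bP) = \Omega(i)$ and $D_i(\bP) = O(K)$), so both estimates carry over with the same rates up to absolute constants. I do not expect a genuine obstacle: the only care required is to handle the best-arm term through its defining minimum rather than the closed form $(\Delta_i^Q)^2 H_1'(i,\bQ)$, and to confirm that the constant in the $\Omega(1)$ bound is independent of $K$ — both are routine.
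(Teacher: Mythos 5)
Your proof is correct and follows essentially the same route as the paper: the upper bound $Z(\bP)\le \log K+1$ comes from Lemma~\ref{lem_dlower} via $D_i(\bP)\ge i$, and the lower bound $Z(\bP)\ge 1/4$ comes from Lemma~\ref{lem_dupper} via $D_i(\bP)\le 4K$. Your extra care with the best-arm term (bounding it through the defining minimum, giving the $1/2$ contribution) and with ties is slightly more explicit than the paper's one-line treatment, but it is the same argument.
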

\begin{proof}
Lemma \ref{lem_dupper} that implies $1/D_i \ge 1/(4K)$, from which $Z(\bP) = \Omega(1)$ directly follows.

By Lemma \ref{lem_dlower}, we have
\begin{align}
Z(\bP) 
:= \sum_i \frac{1}{D_i}
\le \sum_i \frac{1}{i + \sum_{j > i} \frac{\Delta_i^2}{\Delta_j^2}}
\le \sum_i \frac{1}{i} \le \log(K)+1.
\end{align}
\end{proof}

\subsection{Proof of Lemma \ref{lem_stability} for multiple empirical best arms}
\label{subseq_multibest}

Let $\bQ$ be any empirical means with multiple best arm $|i^*(\bQ)| \ge 2$. Let $\bP: i^*(\bP) \in i^*(\bQ)$ be any true distribution with unique best arm (Note: if $i^*(\bP) \notin i^*(\bQ)$, then we can just remove all best arm ties in $i^*(\bQ)$ and just keep one, which is trivial).
Our goal here is to show
\begin{equation}\label{eq:multibestgoal}
\sum_i \frac{(Q_i - P_i)^2}{D_i(\bQ)} H(\bP) \ge C \sum_i \frac{(Q_i' - P_i)^2}{D_i(\bQ)} H(\bP)
\end{equation}
for some universal constant $C>0$.
Where $\bQ', \bP$ are distributions with unique best arm and $i^*(\bQ') \ne i^*(\bP)$.

Without loss of generality, we assume $Q_1 = Q_2 \ge Q_3 \ge \dots \ge Q_K$ and $P_1 > \{P_2, P_3, \dots, P_K\}$. Let $j > 2$ be the first index such that $Q_j < Q_1$.

Moreover, let $\bQ'$ be the distribution such that
\begin{align}
    Q_i' = \begin{cases}
    Q_i & i \ne 2\\ 
    Q_2 + \varepsilon & i = 2 %
    \end{cases},
\end{align}
for a small enough constant $\epsilon>0$. Then $i^*(\bQ') = 2$ is the unique best arm and for any $i$, we have
\[
(Q_i - P_i)^2 - (Q_i' - P_i)^2 = O(\varepsilon) 
\]
Moreover, by using Lemma \ref{lem_dlower} and Lemma \ref{lem_dupper}, we have the followings.
Namely, for any $i\ge j$, we have 
\begin{align}
D_i(\bQ) 
\le
4 \left( j + \sum_{l>j} \frac{(\Delta_i^Q)^2}{(\Delta_l^Q)^2} \right)
\end{align}
and
\begin{align}
    D_i(\bQ') 
    \ge
    j + \sum_{l>j} \frac{(\Delta_i^Q)^2}{(\Delta_l^Q  + \varepsilon)^2}.
\end{align}
For any $i < j$, we have 
\begin{align}
    D_i(\bQ) = D_j(\bQ)
    \le
    4 \left( j - 1 + \sum_{l \ge j} \frac{(\Delta_j^Q)^2}{(\Delta_l^Q)^2} \right)
\end{align}
and 
\begin{align}
    D_i(\bQ') = D_{j-1}(\bQ')
    \ge
    j - 1 + \sum_{l \ge j} \frac{(\Delta_j^Q)^2}{(\Delta_l^Q  + \varepsilon)^2} 
\end{align}

which together imply
\begin{equation}
    \sum_i \frac{(Q_i - P_i)^2}{D_i(\bQ)} H(\bP) \ge \frac{1}{4} \sum_i \frac{(Q_i' - P_i)^2}{D_i(\bQ')} H(\bP) - O(\varepsilon)
\end{equation}
        
Since $1 = i^*(\bP) \ne i^*(\bQ') = 2$, we obtained the desired $\bQ', \bP$ such that Eq.~\eqref{eq:multibestgoal} holds for $C > 1/4 - \varepsilon$ with any $\varepsilon > 0$.

\section{Tighter analysis of SR}
\label{sec:tight_sr_analysis}

\subsection{Upper bound on the probability of error of SR}
\label{subsec:junya_sr_lower_bound}

In this section, we show the rate of SR derived in \cite{Audibert10} is tight up to a constant factor. The next section shows the exact constant factor.

We assume that $P_1>P_2\ge\dots\ge P_K$ and consider Gaussian rewards with unit variance.
The original SR paper \citep{Audibert10} considered rewards over $[0,1]$.
If we run the same argument as the SR paper for unit-variance Gaussian rewards instead of rewards over $[0,1]$, 
we obtain
\begin{align}\label{eq:sr_upper_bound}
\limsup_{T\to\infty}\frac{1}{T}\log \perr(\bP)\le -\frac{1}{H_2(\bP)\olog K}
\end{align}
for
\begin{align}
\olog(K) &= \frac{1}{2} + \sum_{i=2}^K \frac{1}{i}\\
H_2(\bP) &= 4\max_{j\in[K]}j\Delta_j^{-2}
\end{align}
where there is an extra factor of 4.
This is because the Hoeffding inequality $\Prob[Q_{i,n}\ge P_i+x]\le \e^{-2n x^2}\;(x\ge 0)$ %
needs to be replaced with the Chernoff bound $\Prob[Q_{i,n}\ge P_i+x]\le \e^{-n x^2/2}$. %

Note that this Eq.~\eqref{eq:sr_upper_bound} is only an upper bound on the probability of error of SR, and there remains the possibility that the probability of error of SR is smaller. In the next section, we show that this rate SR is tight up to a factor of 2. 

\subsection{Tight bound on the probability of error of SR}
\label{subsec:tight_sr_lower_bound}

We first introduce the modified mean of top-$j$ arms and a complexity $H_3(\bP)$ for the lower bound of the probability of error. We then show that $H_3$ is tight up to a factor of 2.

Define a subset of top-$j$ arms consisting of the best arm and other arms with means at most $P$ as
\begin{align}
\iset{j}(P)=\{1\}\cup \{i\in\{2,3,\dots,j\}: P_i \le P\}.
\end{align}

The \textit{modified mean} of top-$j$ arms is defined by
\begin{align}
\modP{j}=\frac{1}{|\iset{j}(\modP{j})|}\sum_{i\in\iset{j}(\modP{j})}P_i.
\end{align}
Though this expression is written in an implicit form, we can easily see that
$\modP{k}$ satisfying this expression uniquely exists and it satisfies $\modP{j}\in [P_j,P_1]$.

Define 
\begin{align}
\Delta_i^{(j)}=
\begin{cases}
P_1-\modP{j}&i=1\\
(\modP{j}-P_i)_+&i=2,3,\dots,j
\end{cases}
\end{align}
and
\begin{align}\label{def_h3_a}
H_3(\bP)^{-1}
&=\min_{j\in\{2,3,\dots,K\}} \frac{1}{2j}\sum_{i=1}^j (\Delta_i^{(j)})^2,
\end{align}
that is,
\begin{align}
H_3(\bP)
&=\max_{j\in\{2,3,\dots,K\}} \frac{2j}{\sum_{i=1}^j (\Delta_i^{(j)})^2}.\label{def_h3_b}
\end{align}
We will show that $H_3$ is the tight exponent of the probability of error of SR.
\begin{thm}[Restatement of Theorem \ref{thm:hthree_main}]\label{thm:hthree}
    The error probability of successive reject for Gaussian arms satisfies
    \begin{align}
    \lim_{T\to\infty}\frac{1}{T}\log \perr=-\frac{1}{H_3(\bP)\olog K}.
    \end{align}
\end{thm}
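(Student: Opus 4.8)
The plan is to establish matching large-deviation upper and lower bounds on $\tfrac1T\log\perr$. Recall that SR runs $K-1$ phases; in the phase that keeps $j$ active arms (phase $p=K+1-j$) every active arm is pulled up to a cumulative count $n_{p}=\big\lceil \tfrac{1}{\olog(K)}\tfrac{T-K}{j}\big\rceil=\tfrac{T}{\olog(K)\,j}(1+o(1))$ and the arm with the smallest empirical mean is discarded. Fix $\bP$ with $P_{1}>P_{2}\ge\cdots\ge P_{K}$; since $K$ is fixed, all prefactors polynomial in the $n_{p}$ or exponential in $K$ are absorbed into an $e^{o(T)}$ term. Throughout I use the ``pre-sampled'' coupling from the proof of Lemma~\ref{lem_abstpoe}, so that $\bar X_{i,n}$, the mean of the first $n$ draws of arm $i$, is defined for all $i,n$ and the vectors $(\bar X_{i,n})_{i}$ are independent $\Normal(P_{i},1/n)$.

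\emph{Upper bound.} Arm $1$ is misidentified iff it is discarded in some phase. If it is discarded in the $j$-arm phase, then for the (random) active set $A\ni1$, $|A|=j$, one has $\bar X_{1,n_{p}}\le\bar X_{i,n_{p}}$ for all $i\in A$; hence, union-bounding over the $K-1$ phases and the $\le 2^{K-1}$ candidate sets $A$, it suffices to bound $\Prob[\bar X_{1,n}\le\bar X_{i,n}\ \forall i\in A]$. By standard Gaussian large deviations this equals $\exp(-n I_{A}+o(n))$ with $I_{A}=\inf_{v\in\Real}\big(\tfrac{(P_{1}-v)^{2}}{2}+\sum_{i\in A\setminus\{1\}}\tfrac{((v-P_{i})_{+})^{2}}{2}\big)$. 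Two elementary facts close this direction: (i) for fixed $|A|=j$ the map $A\mapsto I_{A}$ is minimized at $A=\{1,\dots,j\}$, because $(v-P)_{+}^{2}$ is decreasing in $P$ so replacing an arm by one of larger mean decreases every summand; (ii) the first-order condition for the inner infimum, $P_{1}-v=\sum_{i\in\{2,\dots,j\},\,P_{i}<v}(v-P_{i})$, is exactly the defining equation of the modified mean $\modP{j}$, so the optimal $v$ equals $\modP{j}$ and $I_{\{1,\dots,j\}}=\tfrac12\sum_{i=1}^{j}(\Delta_{i}^{(j)})^{2}$. Substituting $n_{p}=\tfrac{T}{\olog(K)\,j}(1+o(1))$ and minimizing over $j\in\{2,\dots,K\}$ gives $\limsup_{T}\tfrac1T\log\perr\le-\min_{2\le j\le K}\tfrac{1}{\olog(K)}\cdot\tfrac{1}{2j}\sum_{i=1}^{j}(\Delta_{i}^{(j)})^{2}=-\tfrac{1}{\olog(K)\,H_{3}(\bP)}$, using \eqref{def_h3_a}.

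\emph{Lower bound.} Let $j^{\star}$ attain the minimum in \eqref{def_h3_a}, put $p^{\star}=K+1-j^{\star}$, $v^{\star}=\modP{j^{\star}}$, and fix a small $\eps>0$ (assume $j^{\star}<K$ and the means in general position; the remaining cases are analogous or simpler). I lower bound $\perr$ by the probability of one explicit ``tube'' of sample paths on which SR must discard arm~$1$: require, for each arm $i$, the running-mean path $t\mapsto\bar X_{i,t}$ over $t\le n_{p^{\star}}$ to stay within $o(1)$ of the line of slope $\mu_{i}$, where $\mu_{1}=v^{\star}-\eps$, $\mu_{i}=\max(P_{i},v^{\star})$ for $2\le i\le j^{\star}$, and $\mu_{i}=P_{i}$ for $i>j^{\star}$. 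On this event the smallest empirical mean at each checkpoint $n_{1},\dots,n_{p^{\star}-1}$ is held by an arm in $\{j^{\star}+1,\dots,K\}$ (since $v^{\star}-\eps>P_{j^{\star}+1}\ge\cdots\ge P_{K}$ in general position), so phases $1,\dots,p^{\star}-1$ eliminate $K,K-1,\dots,j^{\star}+1$ and leave active set $\{1,\dots,j^{\star}\}$; and at checkpoint $n_{p^{\star}}$ arm~$1$ has empirical mean $\approx v^{\star}-\eps$, strictly below $\max(P_{i},v^{\star})\ge v^{\star}$ of every other active arm, so arm~$1$ is discarded. By the sample-path (Mogulskii-type) large deviation principle, the probability of this tube is $\exp(-n_{p^{\star}}J_{\eps}+o(n_{p^{\star}}))$ with $J_{\eps}=\tfrac{(P_{1}-v^{\star}+\eps)^{2}}{2}+\sum_{2\le i\le j^{\star},\,P_{i}<v^{\star}}\tfrac{(v^{\star}-P_{i})^{2}}{2}$ (arms $i>j^{\star}$ and arms with $P_{i}\ge v^{\star}$ target their true-mean lines and cost $o(n_{p^{\star}})$). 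Letting $\eps\downarrow0$ slowly with $T$, $J_{\eps}\to\tfrac12\sum_{i=1}^{j^{\star}}(\Delta_{i}^{(j^{\star})})^{2}$, and with $n_{p^{\star}}=\tfrac{T}{\olog(K)\,j^{\star}}(1+o(1))$ this yields $\liminf_{T}\tfrac1T\log\perr\ge-\tfrac{1}{\olog(K)\,H_{3}(\bP)}$, matching the upper bound and proving the theorem.

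\emph{Main obstacle.} The delicate part is the lower bound. One is tempted to condition on ``the active set at phase $p^{\star}$ equals $\{1,\dots,j^{\star}\}$'', but this event shares the pre-phase-$p^{\star}$ samples with the phase-$p^{\star}$ empirical means and has only sub-exponentially small complement, which swamps the target exponential rate; working directly with an explicit trajectory tube (instead of conditioning) is what circumvents this. The price is that the tube must be simultaneously compatible with all $p^{\star}-1$ earlier eliminations, which forces a general-position hypothesis on $\bP$ ($P_{1}>\cdots>P_{K}$ and $\modP{j^{\star}}\neq P_{j^{\star}+1}$), with boundary instances recovered by continuity of both sides of the claimed identity in $\bP$. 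A second point to verify is that the sample-path LDP delivers the rate $n_{p^{\star}}J_{\eps}$ even though the tube constrains the whole trajectory and not just its endpoint; this holds because the cheapest path reaching the prescribed endpoint is the constant-(tilted-)drift straight line, and that line stays inside an $O(n_{p^{\star}}^{-1/2}\log n_{p^{\star}})$-tube with probability $\to1$.
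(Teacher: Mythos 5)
Your proof is correct in outline and, for the upper bound, is essentially the paper's argument in Appendix~\ref{subsec:tight_sr_lower_bound}: union bound over the rejection phase and the active set, monotonicity of $((v-P)_+)^2$ in $P$ to reduce to the top-$j$ arms, and the variational computation whose optimizer is exactly the modified mean $\modP{j}$, yielding the exponent $\tfrac{n_{K+1-j}}{2}\sum_{i\le j}(\Delta_i^{(j)})^2$ and then $H_3$ after minimizing over $j$. The lower bound targets the same misleading event as the paper (arm $1$ behaves like $\modP{j^*}$, arms $2,\dots,j^*$ like $\max(P_i,\modP{j^*})$, the rest like themselves), but you implement it with a sample-path tube and a Mogulskii-type LDP, whereas the paper constrains, for each arm and each of the first $k^*$ phases, the \emph{per-phase block mean} $Y_{i,k}$ to lie in a fixed interval: since every cumulative mean at a checkpoint is a convex combination of block means, this pins down the elimination pattern, and the blocks are independent, so plain Cram\'er gives the exponent with no path-level machinery. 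That block-mean device is exactly what neutralizes the conditioning obstacle you describe, and it is the cleaner route.

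Two points in your write-up need shoring up, though neither is fatal. First, the tube probability: requiring the running mean $\bar X_{i,t}$ to stay within $o(1)$ of $\mu_i$ for \emph{all} $t\le n_{p^*}$ is not implied by the scaled-path fluctuation statement you invoke, since for $t\ll \sqrt{n_{p^*}}$ the running-mean constraint is far stronger than an $O(n_{p^*}^{-1/2}\log n_{p^*})$ tube on the scaled path; you must either let the tolerance shrink slowly and bound the small-$t$ segment separately (its log-cost is subexponential under the tilted measure), or impose the constraint only at the $K-1$ checkpoints, or switch to the paper's per-phase means. Second, handling boundary instances ``by continuity of both sides in $\bP$'' is circular: continuity of $\lim_{T}T^{-1}\log\perr$ in $\bP$ is precisely what the theorem would establish. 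The repair is easy and makes the general-position hypothesis unnecessary: on your tube event, any arm eliminated while arm $1$ is still active must have cumulative mean at most arm $1$'s, hence tube value at most $v^*-\eps+o(1)$, so it lies in $\{j^*+1,\dots,K\}$; since that set has only $p^*-1$ elements, arm $1$ is eliminated by phase $p^*$ at the latest, which is an error in every case, and the tube cost is unchanged. With these repairs your argument delivers the same constants as the paper's.
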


Before proving Theorem \ref{thm:hthree}, we consider the relation between $H_2$ and $H_3$.
\begin{thm}\label{thm:h2h3}
It holds that
\begin{align}
\frac{1}{2}< \frac{H_3(\bP)}{H_2(\bP)}\le 1.
\end{align}
In addition, there exists a sequence of instances that the left inequality becomes arbitrarily close to the equality,
and there exists another instance such that the right inequality holds with equality.
\end{thm}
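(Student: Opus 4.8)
The plan is to reduce Theorem~\ref{thm:h2h3} to a pointwise (in the index $j$) two–sided comparison of the two complexity measures' per–index objectives and then pass to the minimum. For $j\in\{2,\dots,K\}$ set $B_j:=\Delta_j^2/(4j)$ and $A_j:=\frac{1}{2j}\sum_{i=1}^j(\Delta_i^{(j)})^2$, so that $H_2(\bP)^{-1}=\min_j B_j$ (the best arm's index $j=1$ is inactive in the max defining $H_2$) and, by \eqref{def_h3_b}, $H_3(\bP)^{-1}=\min_j A_j$. The key claim is
\begin{equation}\label{eq:key_h2h3}
B_j\ \le\ A_j\ <\ 2B_j\qquad\text{for every }j\in\{2,\dots,K\},\qquad\text{i.e.}\qquad \tfrac12\Delta_j^2\ \le\ \sum_{i=1}^j(\Delta_i^{(j)})^2\ <\ \Delta_j^2 .
\end{equation}
Granting \eqref{eq:key_h2h3}, the left inequality gives $H_3^{-1}=\min_j A_j\ge\min_j B_j=H_2^{-1}$, hence $H_3\le H_2$; and, choosing $j^\star$ attaining $\min_j B_j$, we get $H_3^{-1}\le A_{j^\star}<2B_{j^\star}=2H_2^{-1}$, hence $H_3>\tfrac12 H_2$. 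This is exactly $\tfrac12<H_3/H_2\le 1$.

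To prove \eqref{eq:key_h2h3} I would first extract the structural properties of the modified mean. Write $S:=\iset{j}(\modP{j})$. Since $P_j=\min\{P_2,\dots,P_j\}\le\modP{j}$, the index $j$ lies in $S$, so $\Delta_j^{(j)}=\modP{j}-P_j$ and therefore $\Delta_1^{(j)}+\Delta_j^{(j)}=(P_1-\modP{j})+(\modP{j}-P_j)=\Delta_j$. Because $\modP{j}$ is the average of $\{P_i:i\in S\}$, the signed deviations from $\modP{j}$ sum to zero over $S$; as $\Delta_i^{(j)}=0$ for $i\in\{2,\dots,j\}\setminus S$, this yields the centering identity $\Delta_1^{(j)}=\sum_{i=2}^j\Delta_i^{(j)}$. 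Finally $P_i\ge P_j$ for $i\le j$ gives $\Delta_i^{(j)}=(\modP{j}-P_i)_+\le\modP{j}-P_j=\Delta_j^{(j)}$. The lower bound in \eqref{eq:key_h2h3} then follows by discarding the middle terms and using $x^2+y^2\ge\tfrac12(x+y)^2$:
\[
\sum_{i=1}^j(\Delta_i^{(j)})^2\ \ge\ (\Delta_1^{(j)})^2+(\Delta_j^{(j)})^2\ \ge\ \tfrac12(\Delta_1^{(j)}+\Delta_j^{(j)})^2=\tfrac12\Delta_j^2 .
\]
For the strict upper bound, $\Delta_i^{(j)}\le\Delta_j^{(j)}$ gives $\sum_{i=2}^j(\Delta_i^{(j)})^2\le\Delta_j^{(j)}\sum_{i=2}^j\Delta_i^{(j)}=\Delta_j^{(j)}\Delta_1^{(j)}$, so with the centering identity
\[
\sum_{i=1}^j(\Delta_i^{(j)})^2\ \le\ (\Delta_1^{(j)})^2+\Delta_1^{(j)}\Delta_j^{(j)}=\Delta_1^{(j)}(\Delta_1^{(j)}+\Delta_j^{(j)})=\Delta_1^{(j)}\Delta_j\ <\ \Delta_j^2 ,
\]
the last step because $\Delta_1^{(j)}=P_1-\modP{j}<P_1-P_j=\Delta_j$; indeed $|S|\modP{j}=P_1+\sum_{i\in S\setminus\{1\}}P_i\ge P_1+(|S|-1)P_j>|S|P_j$, using $P_1>P_j$ (uniqueness of the best arm).

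For the tightness claims, consider the one–parameter family $\bP^{(K)}$ with $P_1=1$ and $P_2=\dots=P_K=0$ for $K\ge 2$. Here $\Delta_j=1$, $S=\{1,\dots,j\}$, $\modP{j}=1/j$, hence $\Delta_1^{(j)}=(j-1)/j$, $\Delta_i^{(j)}=1/j$ for $i=2,\dots,j$, and $\sum_{i=1}^j(\Delta_i^{(j)})^2=(j-1)/j$. Thus $A_j=(j-1)/(2j^2)$ and $B_j=1/(4j)$; both are decreasing in $j$ on $\{2,\dots,K\}$, so both minima are attained at $j=K$, giving $H_3/H_2=B_K/A_K=K/\bigl(2(K-1)\bigr)$. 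For $K=2$ this equals $1$, so the right inequality is attained with equality; and as $K\to\infty$ it tends to $1/2$, so the left inequality is approached arbitrarily closely, as claimed.

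The main obstacle is the bookkeeping for the implicitly defined modified mean $\modP{j}$ and the associated set $\iset{j}(\modP{j})$: one must verify $j\in\iset{j}(\modP{j})$, derive the centering identity $\Delta_1^{(j)}=\sum_{i\ge 2}\Delta_i^{(j)}$ and the ordering $\Delta_i^{(j)}\le\Delta_j^{(j)}$, and confirm that the index $j=1$ never participates in the minimum defining $H_2^{-1}$ so that the two minima range over the same set. Once these facts are pinned down the estimates in \eqref{eq:key_h2h3} are a few lines of elementary inequalities, and—unlike Theorem~\ref{thm:hthree}—no large–deviation input is needed.
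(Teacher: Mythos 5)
Your proposal is correct and follows the same overall skeleton as the paper's proof: a pointwise (in $j$) two-sided comparison $\tfrac12\Delta_j^2\le\sum_{i=1}^j(\Delta_i^{(j)})^2<\Delta_j^2$, passed through the minimum over $j$, with the same tightness witnesses (the $K=2$ case for equality on the right, and equal gaps with $K\to\infty$ for the left). The one place where you genuinely diverge is the strict upper bound: the paper maximizes $\sum_i(\Delta_i^{(j)})^2$ subject to the centering identity and $\Delta_1^{(j)}+\Delta_i^{(j)}\le\Delta_j$ via a Lagrange-multiplier argument, concluding the maximum is $(1-1/j)\Delta_j^2$ at equal gaps, whereas you use the termwise bound $\Delta_i^{(j)}\le\Delta_j^{(j)}$ together with $\sum_{i\ge2}\Delta_i^{(j)}=\Delta_1^{(j)}$ to get $\sum_i(\Delta_i^{(j)})^2\le\Delta_1^{(j)}\Delta_j<\Delta_j^2$. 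Your route is more elementary and avoids having to justify that the Lagrange critical point is the constrained maximum; the paper's route yields the slightly sharper constant $(1-1/j)$, which is not needed for the theorem but makes the asymptotic tightness of the factor $2$ transparent. You also spell out the bookkeeping for $\modP{j}$ ($j\in\iset{j}(\modP{j})$, the centering identity, $\Delta_1^{(j)}<\Delta_j$) that the paper uses implicitly, which is a welcome addition.
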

From this result, we see that the existing exponent bound $H_2(\bP)\olog K$ (i.e., Eq.~\eqref{eq:sr_upper_bound}) is tight up to a factor of 2.

Recall that there is an instance such that the exponent of our algorithm is better by a factor of $O(\log \log K)$ compared with $O(H_2 \log K)$.
From this result, the better exponent for this instance does not come from the looseness of the existing error probability upper bound for SR,
but rather comes from essentially better error probability for this instance.
In other words, our algorithm is admissible with respect to SR.

\begin{proof}[Proof of Theorem \ref{thm:h2h3}]
First we derive the upper bound of $H_3$.
\begin{align}
H_3(\bP)^{-1}
&=\min_k \frac{1}{2j}\sum_{i=1}^j (\Delta_i^{(j)})^2
\nn
&\ge \min_k \frac{1}{2j}\left(
(\Delta_1^{(j)})^2
+(\Delta_j^{(j)})^2
\right)
\nn
&\ge
\min_k \frac{1}{2j}\left(
2\cdot \left(\frac{\Delta_j}{2}\right)^2
\right)
\nn
&\ge
\min_k \frac{1}{4j}
(\Delta_j)^2=H_2^{-1}.
\end{align}
The inequalities become equalities when $K=2$.

Next consider the lower bound.
From the definition of $\modP{j}$, we see that
\begin{align}
\sum_{i=2}^j \Delta_i^{(j)}=\Delta_1^{(j)}.
\end{align}
In addition, $\Delta_i^{(j)}$ trivially satisfies $\Delta_i^{(j)}\ge 0$ and $\Delta_1^{(j)}+\Delta_i^{(j)}\le \Delta_j$.
By Lagrange multiplier method,
under these constraints
\begin{align}
\sum_{i=1}^j (\Delta_i^{(j)})^2
\end{align}
is maximized when
$\Delta_2=\Delta_3=\dots=\Delta_j$, where $\modP{j}=P_1- (1-1/j)\Delta_j$.
Then
\begin{align}
\sum_{i=1}^j (\Delta_i^{(j)})^2
&\le
\left(\frac{(j-1)\Delta}{j}\right)^2
+
(j-1)\left(\Delta-\frac{(j-1)\Delta}{j}\right)^2
\nn
&=
(1-1/j)\Delta_j^2.
\nn
&<
\Delta_j^2.
\end{align}
Then we have
\begin{align}
H_3(\bP)^{-1}
&\le \min_j \frac{1}{2j}\sum_{i=1}^j (\Delta_i^{(j)})^2
\nn
&< \min_j \frac{\Delta_j^2}{2j}
\nn
&=2H_2(\bP)^{-1},
\end{align}
where inequalities become arbitrarily close to equalities
when $\Delta_2=\Delta_3=\dots=\Delta_K=\Delta$ with $\Delta>0$ and $K\to\infty$.
This completes the proof of Theorem \ref{thm:h2h3}.
\end{proof}

\begin{proof}[Proof of Theorem \ref{thm:hthree}]
First we prove the error probability upper bound.

The best arm is rejected at the $k$-th phase only when the means of at least $K-k$ suboptimal arms including the already rejected ones
exceeded the mean of the best arm.
Then
\begin{align}
\perr
&\le
\sum_{k=1}^{K-1}\sum_{S\subset [K] \setminus \{1\}: |S|=K-k} %
\Prob\left[\prod_{j\in S}\{\hat{X}_{j,k}\ge \hat{X}_{1,n_k}\}\right]
\nn
&\le
\sum_{k=1}^{K-1}\sum_{S\subset [K] \setminus \{1\}: |S|=K-k} %
\Prob\left[\prod_{j=2}^{K-k+1}\{\hat{X}_{j,k}\ge \hat{X}_{1,n_k}\}\right]
\nn
&=
\sum_{k=1}^{K-1}
\binom{K-1}{K-k}
\Prob\left[\prod_{j=2}^{K-k+1}\{\hat{X}_{j,k}\ge \hat{X}_{1,n_k}\}\right]
\nn
&=
\sum_{k=1}^{K-1}
\binom{K-1}{K-k}
\Prob\left[(\hat{X}_{1,n_k},\dots,\hat{X}_{K-k+1, n_k})\in S_k\right].
\end{align}
where $S_k=\{x\in \mathbb{R}^{K-k+1}: x_j \ge x_1,\,\forall j\in\{2,3,\dots,K-k+1\}\}$.
Since $S_k$ is a convex set, by Cram\'er theorem we have
\begin{align}
\lefteqn{
\Pr\left[(\hat{X}_{1,n_k},\dots,\hat{X}_{K-k+1})\in S_k\right]
}\nn
&\le
\exp\left(
-n_k \inf_{x \in S_k}\sup_{\lambda\in\mathbb{R}^{K-k+1}}\left\{\lambda^{\top}x-\log \E[\e^{\lambda^\top X}]\right\}
\right)\nn
&\le
\exp\left(
-n_k \inf_{x \in S_k}\sum_{j=1}^K \frac{(x-P_i)^2}{2}
\right),
\end{align}
for $X=(X_1,X_2,\dots,X_{K-k+1})$ with $X_i$ independently following $N(P_i, 1)$. 
We can easily see that the infimum is attained at $x_1=\modP{K-k+1}$ and $x_j=\max\{\modP{K-k+1}, P_j\}$, which results in
\begin{align}
\Pr\left[(\hat{X}_{1,n_k},\dots,\hat{X}_{K-k+1})\in S_k\right]
\le
\exp\left(
-\frac{n_k}{2} \sum_{j=1}^{K-k+1}(\Delta_j^{(K-k+1)})^2
\right).
\end{align}
Finally we obtain
\begin{align}
\perr
&\le
\sum_{k=1}^{K-1}
\binom{K-1}{K-k}
\exp\left(
-\frac{T-K}{2(K+1-k)\olog K} \sum_{j=1}^{K-k+1}(\Delta_j^{(K-k+1)})^2
\right)
\nn
&\le
\left(
\sum_{k=1}^{K-1}
\binom{K-1}{K-k}
\right)
\exp\left(
-
\frac{T-K}{2\olog K}
\inf_{k\in[K-1]}
\frac{1}{K+1-k}
\sum_{j=1}^{K-k+1}(\Delta_j^{(K-k+1)})^2
\right)\nn
&\le
\left(
\sum_{k=1}^{K-1}
\binom{K-1}{K-k}
\right)
\exp\left(
-
\frac{T-K}{2\olog K}
\inf_{k\in\{2,3,\dots,K\}}
\frac{1}{k}
\sum_{j=1}^{k}(\Delta_1^{(k)})^2
\right)\nn
&\le
2^K
\exp\left(
-
\frac{T-K}{H_3\olog K}
\right).
\end{align}
Then we immediately obtain the asymptotic upper bound
\begin{align}
\lim_{T\to\infty}\frac{1}{T}\log \perr\le -\frac{1}{H_3\olog K}.
\end{align}

Next we consider the lower bound.
Though we can obtain an explicit error probability lower bound when we consider Gaussians,
we only derive an asymptotic bound so that it can be easily extended to general distributions.
Let $Y_{i,k}$ be the mean of the samples from arm $i$ at the $k$-th phase, and $m_k=n_k-n_{k-1}$ be the number of samples
of each arm at the $k$-th phase.
Then we have
\begin{align}
\hat{X}_{i,n_k}=\frac{\sum_{j=1}^k m_jY_{i,j}}{n_k}.
\end{align}
Take optimal $j^*$ achieving the maximum in \eqref{def_h3_a} and define $k^*=K-j^*+1$.
Consider event $\mathcal{E}$ that
for all $k=1,2,\dots,k^*$,
\begin{align}
&Y_{1,k}\in (P_{j^*+1}, \modP{j^*}],\nn
&Y_{j,k}\in (\modP{j^*}, \infty),\,j\in\{2,3,\dots,j^*\},\nn
&Y_{j,k}\in (-\infty, P_{j^*+1}],\,j\in\{j^*+1,\dots,K\}.
\end{align}
Under this event, arm 1 is not rejected for the first $k^*-1$ phases
and is rejected at the $k^*$-th phase.
Then we have $\perr\ge \Pr[\mathcal{E}]$.
Here, by Cram\'er's theorem we have
\begin{align}
\lim_{m_k\to\infty}\frac{1}{m_k}\log \Pr[Y_{1,k}\in (P_{j^*+1}, \modP{j^*}]]
&=
-\frac{(\Delta_1^{(j^*)})^2}{2}
\nn
\lim_{m_k\to\infty}\frac{1}{m_k}\log \Pr[Y_{j,k}\in (\modP{j^*}, \infty)]
&=
-\frac{(\Delta_j^{(j^*)})^2}{2},\,j\in\{2,3,\dots,j^*\}
\nn
\lim_{m_k\to\infty}\frac{1}{m_k}\log \Pr[Y_{j,k}\in (-\infty, \modP{j^*+1}]]
&=
0,\,j\in\{j^*+1,\dots,K\}.
\end{align}
Then we have
\begin{align}
\lim_{T\to\infty}\frac{1}{T}\log \perr
&\ge
\lim_{T\to\infty}\frac{1}{T}\log \Pr[\mathcal{E}]
\nn
&=
-\lim_{T\to\infty}\sum_{k=1}^{k^*}\frac{m_k}{T}\sum_{j=1}^{j^*}\frac{(\Delta_j^{(j^*)})^2}{2}
\nn
&=
-\lim_{T\to\infty}\frac{n_{k^*}}{T}\sum_{j=1}^{j^*}\frac{(\Delta_j^{(j^*)})^2}{2}
\nn
&=
-\frac{1}{(K+1-k^*)\olog K}\sum_{j=1}^{j^*}\frac{(\Delta_j^{(j^*)})^2}{2}
\nn
&=
-\frac{1}{j^*\olog K}\sum_{j=1}^{j^*}\frac{(\Delta_j^{(j^*)})^2}{2}
\nn
&=
-\frac{1}{H_3(\bP)\olog K}.
\end{align}
This completes the proof of Theorem \ref{thm:hthree}.
\end{proof}
 
\section{Proof of Lemma \ref{lem_ourwin}}
\label{sec_ourwin}

The goal of this section is to show an example where the proposed algorithm outperforms SR.
The structure of this section is as follows:
Appendix \ref{subsec:outwin} shows the instance where the proposed algorithm's rate is larger than SR's known rate by a factor of $\Omega(\log K/(\log\log K))$. 
However, this does not exclude the possibility that SR's known rate $H_2$ is loose (Appendix \ref{subsec:junya_sr_lower_bound}). To see our algorithm provably outperforms SR, we show the SR's rate based on $H_2$ is tight up to a factor of 2 (Appendix \ref{subsec:tight_sr_lower_bound}).

\subsection{Explicit construction of the instance}
\label{subsec:outwin}

Let $\bP$ be such that
\begin{align*}
P_1 &= 2,\\
P_2 &= P_3 = \dots = P_{\log K} = 1,\\
P_{\log K +1} &= \dots = P_K =0.
\end{align*}
Then, on this instance, SR's rate\footnote{Here, an algorithm has rate $r$ if its probability of error is $\mathrm{poly}(T)\exp(-rT)$ asymptotically.} is proportional to
\begin{align}\label{ineq:rate_outwin_sr}
\frac{1}{(\log K) H_2(\bP)} 
\propto \frac{1}{(\log K) K}.
\end{align}
Meanwhile, Theorem \ref{thm_batchtrack} implies the rate of Almost Tracking is lower bounded by
\begin{align}
    \inf_{\bQ: i^*(\bP) \notin i^*(\bQ)} 
    \sum_i w_i(\bQ) \frac{(Q_i-P_i)^2}{2} 
    &\propto
    \frac{1}{H_1(\bP)}
    \inf_{\bQ: i^*(\bP) \notin i^*(\bQ)} \frac{\Stbl(\bQ, \bP)}{Z(\bQ)}\\
    &\propto \frac{1}{K}\inf_{\bQ: i^*(\bP) \notin i^*(\bQ)} \frac{\Stbl(\bQ, \bP)}{Z(\bQ)}
    \label{ineq:rate_outwin}
\end{align}

\begin{lem}\label{lem:rate_outwin}
For any $\bQ$ such that $i^*(\bP) \notin i^*(\bQ)$, it holds that
\begin{equation}
    \frac{\Stbl(\bQ, \bP)}{Z(\bQ)} =  \Omega\left( \frac{1}{(\log\log K)} \right).
\end{equation}
\end{lem}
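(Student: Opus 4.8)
The plan is to prove $\Stbl(\bQ,\bP)/Z(\bQ)=\Omega(1/\log\log K)$ by splitting on the size of $Z(\bQ)$, with the stability lemma as the base estimate. First I would fix notation: relabel the arms so that $Q_1\ge Q_2\ge\dots\ge Q_K$ (this leaves $\Stbl(\bQ,\bP)$ and $Z(\bQ)$ unchanged if $\bP$ is permuted along), write $Q^*:=Q_1$ and $D_i:=D_i(\bQ)$, and recall from \lemref{lem_dlower} and \lemref{lem_dupper} that $i\le D_i\le 4K$ for every $i$ and that $D_i$ is non-decreasing in $i$; hence $Z(\bQ)=\sum_i 1/D_i\le\sum_i 1/i\le\log K+1$ and $\sum_{i\le\log K}1/D_i\le\log\log K+1$. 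For the instance $\bP$ of this section, $H_1(\bP)=(\log K-1)+\tfrac14(K-\log K)=\Theta(K)$, and after relabelling $\bP$ still has exactly one arm of mean $2$, exactly $\log K-1$ arms of mean $1$, and $K-\log K$ arms of mean $0$; so it suffices to show $\sum_i(Q_i-P_i)^2/D_i=\Omega\big(Z(\bQ)/(K\log\log K)\big)$.

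Next I would isolate one elementary fact, the \emph{cluster bound}: if $\EB$ is a set of arms ($|\EB|\ge2$) all of whose empirical means lie in an interval of length $\ell$ whose upper endpoint is at distance at least $\ell$ below $Q^*$, then $\sum_{i\in\EB}1/D_i\le 32$. Indeed, then $\Delta_i^Q=Q^*-Q_i$ lies within a factor $2$ of a common value $\delta$ for all $i\in\EB$ (and each such $i$ is not the empirical best, so $D_i=(\Delta_i^Q)^2 H_1'(i,\bQ)$), whence $D_i\ge\delta^2\sum_{z\in\EB\setminus\{i\}}(\Delta_z^Q+\Delta_i^Q)^{-2}\ge\delta^2(|\EB|-1)/(16\delta^2)=(|\EB|-1)/16$.

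Then the argument splits. If $Z(\bQ)\le C_0\log\log K$ for a suitably large universal constant $C_0$ (when $\log\log K=O(1)$ one instead uses $Z(\bQ)\le\log K+1$ and $\Stbl\ge\Cstabilityinner$, already $\Omega(1)=\Omega(1/\log\log K)$), then \lemref{lem_stability} gives $\Stbl(\bQ,\bP)\ge\Cstabilityinner$ and so $\Stbl(\bQ,\bP)/Z(\bQ)\ge\Cstabilityinner/(C_0\log\log K)=\Omega(1/\log\log K)$; this case is essentially tight, as seen by putting the $\log K-1$ mean-$1$ arms at the top of $\bQ$ with vanishing mutual gaps and all remaining arms in a tight cluster near $0$. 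Otherwise $Z(\bQ)>C_0\log\log K$: since the top $\log K$ positions contribute at most $\log\log K+1\le Z(\bQ)/2$ to $Z(\bQ)$, and the $\le\log K$ arms of $\bP$ with mean $\ge1$ occupy at most $\log K$ positions beyond the top $\log K$ (each of $1/D_i$-weight $<1/\log K$, so total $<1$), the mean-$0$ arms sitting at positions $>\log K$ carry $1/D_i$-mass $\ge Z(\bQ)/4$. Split these mean-$0$ arms into three groups according to whether $|Q_i|>\tfrac12$, $\varepsilon<|Q_i|\le\tfrac12$, or $|Q_i|\le\varepsilon$, with $\varepsilon:=(K\log\log K)^{-1/2}$; one group carries $1/D_i$-mass $\ge Z(\bQ)/12$. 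If it is the first, $\sum_i(Q_i-P_i)^2/D_i\ge\tfrac14\cdot Z(\bQ)/12$, so $\Stbl/Z\gtrsim K$. If the second, $\sum_i(Q_i-P_i)^2/D_i\ge\varepsilon^2\cdot Z(\bQ)/12=Z(\bQ)/(12K\log\log K)$, so $\Stbl/Z\gtrsim1/\log\log K$. If the third, the cluster bound applied to that group (which lies in $[-\varepsilon,\varepsilon]$ and has mass $>32$) forces $Q^*<3\varepsilon$; then the mean-$2$ arm has $Q<Q^*<3\varepsilon$ and each of the $\log K-1$ mean-$1$ arms has $(Q_i-1)^2\ge(1-3\varepsilon)^2\ge\tfrac12$, so by $D_i\le4K$ these terms alone give $\sum_i(Q_i-P_i)^2/D_i\ge(\log K-1)/(8K)=\Omega(\log K/K)$, hence $\Stbl=\Omega(\log K)$ and $\Stbl/Z\ge\Omega(\log K)/(\log K+1)=\Omega(1)$. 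In every sub-case $\Stbl(\bQ,\bP)/Z(\bQ)=\Omega(1/\log\log K)$; the case $|i^*(\bQ)|\ge2$ reduces to $|i^*(\bQ)|=1$ exactly as in Appendix~\ref{subseq_multibest}.

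The hard part will be the large-$Z$ case, and in particular the cluster bound and its deployment: one must see that, because $\bP$ has only $\log K$ high-mean arms, a large $Z(\bQ)$ forces many mean-$0$ arms high in the $\bQ$-order, and then convert a lower bound on the $1/D_i$-mass of a set of mean-$0$ arms confined to a short empirical-mean interval into the structural conclusion that the interval reaches up to $Q^*$ — which then strands the mean-$1$ arms and the best arm far from their true means. Everything else (counting of positions, the three-way split on $|Q_i|$, and the passage through $H_1(\bP)=\Theta(K)$) is routine bookkeeping with the two-sided bound $i\le D_i\le4K$.
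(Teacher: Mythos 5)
Your proof is correct, but it follows a genuinely different route from the paper's. The paper pivots on the number of \emph{significantly moved} arms, i.e.\ arms with $|Q_i-P_i|\ge 1/2$: if at least $\log K$ arms moved, the numerator $\sum_i(Q_i-P_i)^2$ already beats the crude bound $Z(\bQ)\le\log K+1$; if fewer moved, then the mean-$0$ arms are clustered in $[-1/2,1/2]$ while $Q^*$ stays high, which makes $(\Delta_i^Q)^2/(\Delta_j^Q)^2=\Omega(1)$ across the cluster and hence $D_i=\Omega(K)$ for all but $O(\log K)$ positions, giving the key estimate $Z(\bQ)=O(\log\log K)$ directly; the numerator is then $\Omega(1)$ because misidentification forces at least one significant move. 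You instead split on the size of $Z(\bQ)$ itself: when $Z(\bQ)=O(\log\log K)$ you invoke \lemref{lem_stability} as a black box ($\Stbl\ge\Cstabilityinner$), and when $Z(\bQ)$ is large you run the same clustering fact in its contrapositive form — your ``cluster bound'' is exactly the observation underlying the paper's denominator estimate, packaged as ``a tight cluster of non-best arms carries $1/D_i$-mass at most $32$'' — to conclude that a large $Z(\bQ)$ forces either many mean-$0$ arms to leave a neighborhood of $0$ (large numerator via the three-way split on $|Q_i|$ with threshold $\varepsilon=(K\log\log K)^{-1/2}$) or the entire empirical vector to collapse below $3\varepsilon$, stranding the $\log K$ high-mean arms. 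Both arguments rest on the same two-sided bounds $i\le D_i(\bQ)\le 4K$ from \lemref{lem_dlower} and \lemref{lem_dupper} and on the same structural insight about clustered gaps; what your version buys is modularity (reuse of the stability lemma, so the only instance-specific work is the large-$Z$ case) at the cost of the extra $\varepsilon$-interpolation and the explicit constant bookkeeping ($Z/2$, $Z/4$, $Z/12$), whereas the paper's version is shorter and self-contained because it bounds $Z(\bQ)$ by $O(\log\log K)$ outright rather than reasoning conditionally on its size. All the individual steps I checked — the cluster bound computation $D_i\ge(|\EB|-1)/16$, the mass accounting for positions beyond the top $\log K$, $H_1(\bP)=\Theta(K)$, and the three sub-case conclusions — are sound, so this stands as a valid alternative proof.
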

Lemma \ref{lem:rate_outwin}, combined with Eq.~\eqref{ineq:rate_outwin_sr} and Eq.~\eqref{ineq:rate_outwin}, implies that our algorithm outperforms SR by a factor of $\Omega(\log K/(\log\log K))$.

\begin{proof}[Proof of Lemma \ref{lem:rate_outwin}]
    \begin{align}
    \frac{\Stbl(\bQ, \bP)}{Z(\bQ)}
    &:= \frac{\sum_i \frac{(Q_i-P_i)^2}{D_i(\bQ)}}{\sum_i \frac{1}{D_i(\bQ)}}
    H_1(\bP)\\
    &\ge
    \frac{\sum_i \frac{(Q_i-P_i)^2}{D_i(\bQ)}}{\sum_i \frac{1}{D_i(\bQ)}} \times K\\
    &\ge
    \frac{1}{4} \frac{\sum_i (Q_i-P_i)^2}{\sum_i \frac{1}{D_i(\bQ)}}
    \tag{Lemma \ref{lem_dupper} implies $1/D_i(\bQ) \ge 1/(4K)$}\\
    &\ge
    \frac{1}{4} \frac{\sum_i (Q_i-P_i)^2}{\sum_i \frac{1}{i +  \sum_{j>i} \frac{(\Delta_i^Q)^2}{(\Delta_j^Q)^2}}},\tag{by Lemma \ref{lem_dlower}}\\
    &\label{ineq_approx_vz}
    \end{align}
    where $\Delta_i^Q := Q_1 - Q_i$.
    We say arm $i$ \textit{moved significantly} if $|P_i - Q_i| \ge \updatedduringaistats{1/3}$.
    Eq.\eqref{ineq_approx_vz} implies that 
    \[
    \frac{\Stbl(\bQ, \bP)}{Z(\bQ)}
    \ge \Omega\left(
    \frac{\text{\# of significant moves}}{\log K}
    \right).
    \]
    If number of significant moves are $\ge \log K$, then $\frac{\Stbl(\bQ, \bP)}{Z(\bQ)} = \Omega(1)$. In the following, we assume there are at most $(\log K) - 1$ significant moves, which implies $\max_i Q_i \ge \updatedduringaistats{2/3}$.
    
    Let $i, j > \log{K}$. 
    If both $i,j$ did not move significantly, then $Q_i, Q_j \in \updatedduringaistats{[-1/3, 1/3]}$ and thus 
    \begin{align} 
    \frac{(\Delta_i^Q)^2}{(\Delta_j^Q)^2} 
    \ge \left( \frac{2/3 - 1/3}{2/3 - (- 1/3)} \right)^2 \ge 1/9 = \Omega(1).
    \label{ineq_nosigmove}
    \end{align}
    By assumption, the number of significantly moved arms is at most $\log K$. 
    Therefore, 
    \begin{align}
    \text{(Denominator of Eq.~\eqref{ineq_approx_vz})}
    &= \sum_{i\in[K]} \frac{1}{i +  \sum_{j>i} \frac{(\Delta_i^Q)^2}{(\Delta_j^Q)^2}}
    \\
    &\le \sum_{i \le \log K} \frac{1}{i} %
    + \sum_{i \in [\log K + 1, 2 \log K]} \frac{1}{i}
    +
    K \times \frac{1}{(K-\log K)/9} \\
    &\le 
     \sum_{i \le 2 \log K} \frac{1}{i}
    + O(1)\\
    &= O(\log(2 \log K) + 1) = O(\log\log K),\label{ineq_dlimit}
    \end{align}
    where, the second term is derived by the following discussion: 
    At most $\log K$ arm among those in $\{\log K+1, \dots, K\}$ are significantly moved. If $i$ has not significantly moved, then at least $K - \log K$ of $\{j: (\Delta_i^Q)^2/(\Delta_j^Q)^2\}$ are $\ge 1/9$ due to Eq.~\eqref{ineq_nosigmove} and the fact that at most $\log K$ of $j$ has significant move.
    
    In summary, 
    \begin{align}
    \eqref{ineq_approx_vz} 
    &\ge \frac{1}{\sum_i \frac{1}{i +  \sum_{j>i} \frac{(\Delta_i^Q)^2}{(\Delta_j^Q)^2}}}\tag{$\argmax_i Q_i \ne 1$ requires at least one significant move, which implies $\sum_i (Q_i-P_i)^2 = \Omega(1)$}\\
    &\ge \Omega\left(\frac{1}{\log\log K}\right) \tag{by Eq.~\eqref{ineq_dlimit}}.
    \end{align}
\end{proof} %

\section{Proof of Lemma \ref{lem_srwin}}
\label{sec_srwin}

This section shows an instance where SR outperforms our algorithm's bound.
The particular instance is such that $P_i = - \sqrt{i}$ for $i=1,2,\dots,K$ where $i^*(\bP) = 1$. In this case, the rate of SR is
\begin{align}\label{ineq:rate_srwin_sr}
\frac{1}{(\log K) H_2(\bP)} 
    \propto \frac{1}{(\log K)}.
\end{align}
For Almost Tracking, $H_1(\bP)=\Theta(\log K)$ and the rate is 
\begin{align}\label{ineq:rate_srwin_ours}
\frac{1}{H_1(\bP)}\inf_{\bQ: i^*(\bP) \notin i^*(\bQ)} \frac{\Stbl(\bQ, \bP)}{Z(\bQ)}
&:= \inf_{\bQ: i^*(\bP) \notin i^*(\bQ)} \frac{1}{Z(\bQ)} \sum_i \frac{(Q_i - P_i)^2}{D_i(\bQ)}\\
&= O\left( \frac{1}{(\log K)^2} \right)
\end{align}
where the last step is derived from the instance $\bQ$ such that
\[
Q_i = \begin{cases}
    P_i & \text{if } i \ne 2 \\
    2 P_1 - P_2 & \text{if } i = 2.
\end{cases}
\]
Here, $D_i = \Theta(i + \sum_{j>i} \frac{\Delta_i^2}{\Delta_j^2})$ (Eq.~\eqref{lem_dlower} and Eq.~\eqref{lem_dupper}), from which we have
\begin{align}
D_1(\bQ) &= D_2(\bQ) = \Theta(\log K)\\
Z(\bQ) &= \sum_i \frac{1}{D_i(\bQ)} = \Theta\left(\sum_j \frac{1}{\log K + j}\right) = \Theta(\log K).
\end{align}

In summary, comparing Eq.~\eqref{ineq:rate_srwin_sr} and Eq.~\eqref{ineq:rate_srwin_ours}, SR outperforms the bound of our algorithm by the rate of $\Omega(\log K)$. 
Note that Eq.~\eqref{ineq:rate_srwin_ours} only bounds our rate from below and the actual rate of our algorithm might be better than this.

\section{Non-convexity of the objective}\label{sec_nonconvexity}

\updatedafterneurips{
Eq.~\eqref{def_rgo} is an optimization of $\bw(\cdot) = \bw(\bQ)$:
\[
\sup_{\bw(\cdot) \in \simplex{K}}\,
\inf_{\bQ \in \EQ^K}\, \inf_{\bP: \Ist(\bP) \notin \Ist(\bQ)} H(\bP)
\sum_{i\in[K]} w_i(\bQ) D(Q_i\Vert P_i).
\]
The optimization can be solved by maximizing
\begin{equation}\label{ineq_eachw}    
V_{\bQ}(\bw) 
:= 
\inf_{\bP: \Ist(\bP) \notin \Ist(\bQ)} H(\bP)
\sum_{i\in[K]} w_i D(Q_i\Vert P_i)
\end{equation}
for each $\bw = \bw(\bQ)$. Eq.~\eqref{ineq_eachw}, which is a infimum of a linear objective, is concave in $\bw$. 
However, obtaining the value of $V_{\bQ}(\bw)$ for a given vector $\bw$ is a non-convex problem of finding the infimum of $\bP$ for typical $H(\bP)$. 
}

\section{Technical limitations}\label{sec:limitations}

In this paper, we have considered the best arm identification problem with sub-Gaussian rewards. While sub-Gaussian rewards are common in practice and cover major distributions, including any bounded distributions as well as Gaussian distributions, it is still important to consider the case where the rewards are not sub-Gaussian. 
Moreover, while the results in Section \ref{sec_trackability} hold universally for any risk measure $H(\bP)$, the results in Section \ref{sec_optimization} build upon the most widely adopted risk measure $H_1(\bP)$. %
\begin{itemize}
\item The two-approximation algorithm (Appendix \ref{sec:two_approx}) is quite general. It should hold with a large class of one-parameter distributions (e.g., one-parameter exponential family) and any risk measure $H(\bP)$.
\item The constant-ratio ceiling (Appendix \ref{sec:fractional_pulls}) should hold with any distribution and any risk measure.
\item Appendix \ref{sec_batchtrack}: Theorem \ref{thm_batchtrack}, which guarantees the trackability of Almost Tracking, depends on the sub-Gaussian assumption. 
\item Appendix \ref{sec_thm_constant}: Lemma \ref{lem_abstpoe}, which is about large deviation, is easily extendable for a larger class of distributions, such as one-parameter exponential family of distributions.
\item Appendix \ref{sec:proof_approxopt}: Theorem \ref{thm:approxopt}, or the stability Lemma \ref{lem_stability}, which is used by this theorem, depends on the sub-Gaussian assumption and property of the risk measure $H_1(\bP)$. 
\item Appendix \ref{sec:tight_sr_analysis}: Theoretical results for SR (Appendix \ref{sec:tight_sr_analysis}) depend on the sub-Gaussian assumption, just like the existing results of \cite{Audibert10}.
\item Appendix \ref{sec_ourwin} and Appendix \ref{sec_srwin}: The instances of Lemma \ref{lem_ourwin} and Lemma \ref{lem_srwin} are for sub-Gaussian rewards since we compare SR and Almost Tracking.
\end{itemize}

\section{Details of the experiments}

Our code is implemented in Python and runs on a standard Linux server. 
Our code does not require any GPUs nor does it require a large amount of memory.

\subsection{Compared algorithms}\label{subsec:compared_algorithms}

\begin{itemize}
    \item Successive rejects (SR, \cite{Audibert10}): This algorithm splits the rounds into $K-1$ batches of predefined sizes. At the end of each batch, it eliminates one arm. This algorithm requires $T$.
    \item Sequential halving (SH, \cite{Karnin2013}): This algorithm splits the rounds into predefined size of $\lceil \log_2 K \rceil$ batches. At the end of each batch, it eliminates half of the arms. This algorithm requires $T$.
    \item Continuous rejects (CR, \cite{wang2023best}): This algorithm is a dynamic version of SR where the size of each batch is adaptive. This algorithm requires $T$. There are two versions of CR. In our experiments, we show the results of CR-A, which consistently outperforms CR-C. \updatedafterneurips{We set $\theta_0 = 0.01$.}
    \item Double sequential halving (DSH, \cite{DBLP:conf/icml/ZhaoSSJ23}): This is a meta-algorithm that implements a doubling epoch strategy that combined with SH. It begins with $O(K \log_2 K)$ samples and doubles the sample budget in each subsequent epoch, running a complete SH procedure within each epoch.
    This algorithm does not require $T$ beforehand.
    \item Double successive rejects (DSR, a natural doubling-based anytime variant of SR): This algorithm is an version of DSH where the base SH algorithm is replaced by SR.
    \item EB-TC and TS-TC \citep{shang2020fixed,jourdan2022top} are two empirically good versions of top-two Thompson sampling \citep{Russo2020} that are designed for the fixed-confidence identification.
    \item \updatedduringaistats{EB-TC$_{\varepsilon_0}$ \citep{jourdaneps2023} is a version of EB-TC with $\varepsilon_0$-best identification. The value of $\varepsilon_0$ is set to $0.10$. We chose the ``fixed'' version of it with $\beta=0.5$. These choices are based on their report that setting $\varepsilon_0=0$ suffers from poor empirical performance for moderate value of $\delta$ as well as Figures 5, 6, and 7 therein. }
    \item Uniform is a naive algorithm that draws arms in a round-robin fashion.
    \item Simple tracking: Algorithm \ref{alg:R_track} in this paper. This algorithm does not require $T$ beforehand.
    \item Almost tracking: Algorithm \ref{alg:R_track_almost} in this paper. This algorithm does not require $T$ beforehand. We set the size of each batch $N$ to be $2K$ and  $\Csuf=0.999$ for all our experiments.
\end{itemize}

\subsection{Instances used to calculate Table \ref{tab:minimax_rates}}
\label{subsec:instances_minimax_rates}

This section shows the instances that we used to calculate the minimax rates in Table \ref{tab:minimax_rates}. For all instances, we set $K=40$. The budget $T$ is determined so that the $\PoE$ of the best algorithm is between 0.01 and 0.1. We run each simulation for \Runnum{} times and the figures are empirical PoE values.

\begin{figure}[htbp]
    \centering
   \includegraphics[width=0.8\textwidth]{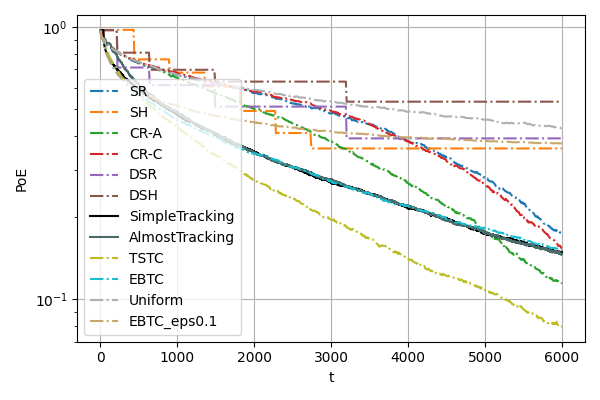}
\caption*{Instance 1: $\bP = \{1 - (i-1)\times0.05\}$} %
\includegraphics[width=0.8\textwidth]{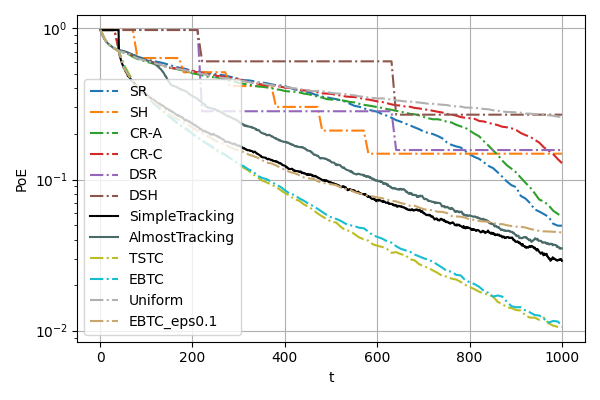}
\caption*{Instance 2: $\bP = \{ 10 \frac{(i-1)^{0.8}}{39^{0.8}} \}$} %
\end{figure}
\begin{figure}[htbp]
    \centering
\includegraphics[width=0.8\textwidth]{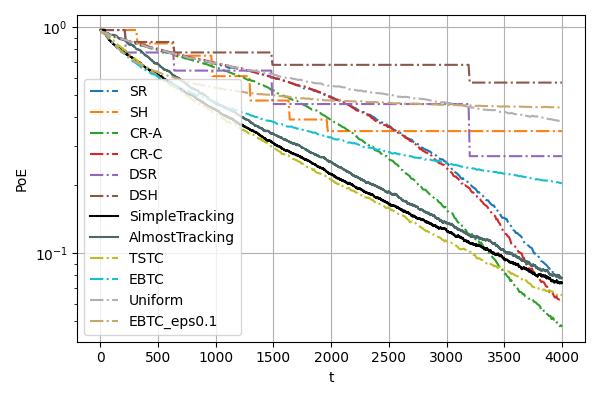}
\caption*{Instance 3: $\bP = \{1 - \sqrt{i-1}/10\}$. This instance is in favor of SR that we discussed in Section \ref{sec_srwin}.} %
\includegraphics[width=0.8\textwidth]{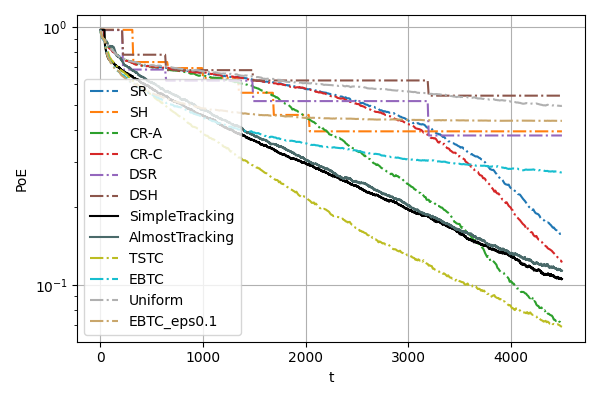}
\caption*{Instance 4: $\bP = \{1, 0.9, 0.9, 0.9, 0.9, \underbrace{0,0,\dots,0,0}_{\text{35 arms with zero mean}}\}$. This instance is in favor of Almost Tracking that we discussed in Section \ref{sec_ourwin}.} %
\end{figure}
\begin{figure}[htbp]
    \centering
\includegraphics[width=0.8\textwidth]{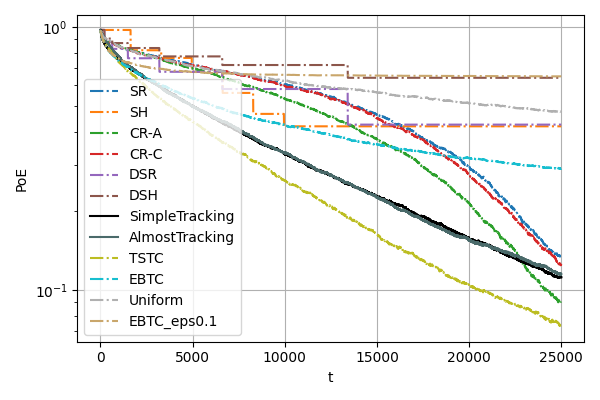}
\caption*{Instance 5: $\bP = \{ \sin((K-1)\pi/(2K))\} \cup \{\sin(9 \pi (K-i)/(20 K))\}_{i=2}^{40}$. This is \text{Concave} set of arms \citep{wang2023best}.} %
\includegraphics[width=0.8\textwidth]{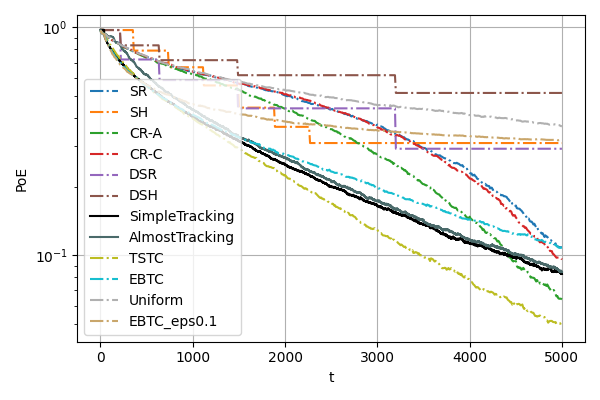}
\caption*{Instance 6: $\bP = \{0.75 \times 3^{-i/10}\}$. This is \text{Convex} set of arms \citep{wang2023best}.} %
\end{figure}
\begin{figure}[htbp]
    \centering
\includegraphics[width=0.8\textwidth]{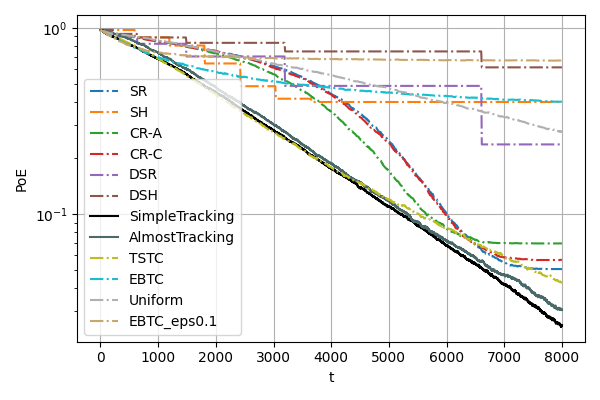}
\caption*{Instance 7: $\bP = \{1, \underbrace{0.8,0.8,\dots,0.8,0.8}_{\text{39 arms}} \}$.} %
\includegraphics[width=0.8\textwidth]{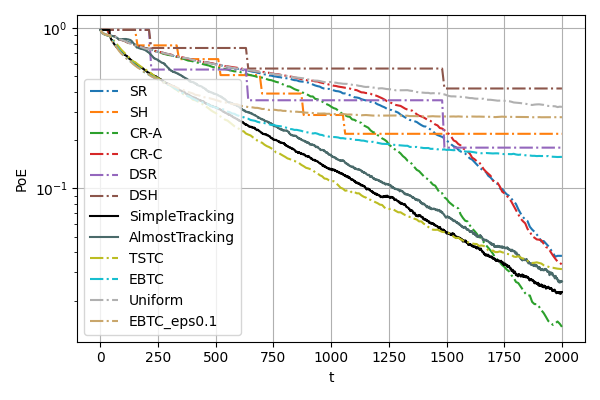}
\caption*{Instance 8: $\bP = \{1, \underbrace{0.8,0.8,0.8}_{\text{3 arms}}, \underbrace{0.8,0.8,\dots,0.8,0.8}_{\text{6 arms}}, \underbrace{0.2,0.2,\dots,0.2,0.2}_{\text{10 arms}}
, \underbrace{0,0,\dots,0,0}_{\text{20 arms}}\} $.} %
\end{figure}
\begin{figure}[htbp]
    \centering
\includegraphics[width=0.8\textwidth]{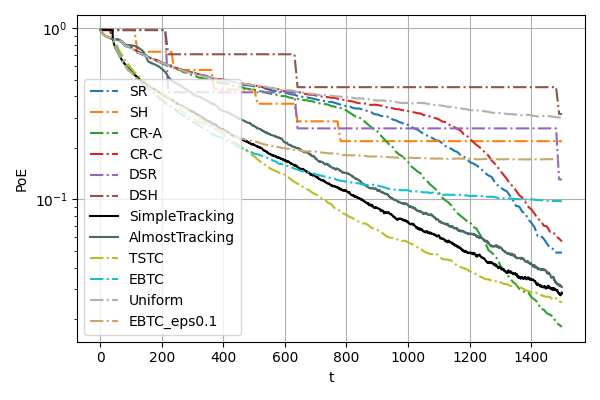}
\caption*{Instance 9: $\bP = \{1.0, 0.8, 0.8, \underbrace{0,0,\dots,0,0}_{\text{37 arms}}\}$.} %
\includegraphics[width=0.8\textwidth]{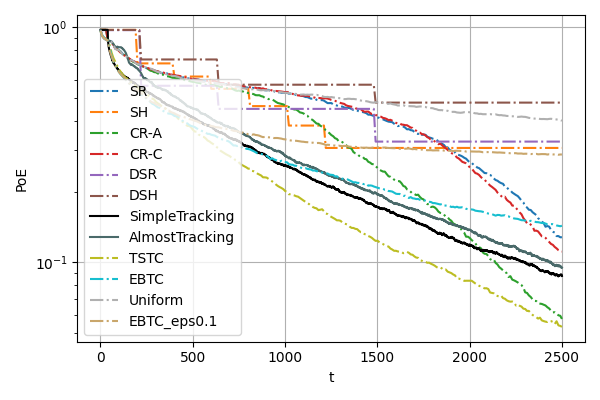}
\caption*{Instance 10: $\bP = \{1.0, 0.9, 0.85, 0.8, \underbrace{0,0,\dots,0,0}_{\text{36 arms}}\}$.} %
    
    \caption{Probability of Error (PoE) as a function of rounds $t$.}
    \label{fig: 10 model experiments}
\end{figure}
    
\clearpage

\subsection{Restatement of Table \ref{tab:minimax_rates} with confidence intervals}\label{subsec:confidence_intervals}

Performance of algorithms with 95\% confidence intervals are shown in Table \ref{tab:confidence_intervals} and \ref{tab:confidence_intervals_two}.

\begin{table}[ht]
    \centering
    \caption{Minimax rate for $H_1$ (lower, plugin, upper) for each algorithm. The column ``Plugin'' is based on the empirical average of PoE, which is the same as Table \ref{tab:minimax_rates} in the main paper. ``Lower Bound'' and ``Upper Bound'' are based on the lower and upper bounds of PoE with 95\% confidence intervals. One can see that Simple Tracking and Almost Tracking outperform the other algorithms with statistical significance.}
    \label{tab:confidence_intervals}
    \begin{tabular}{lccc}
    \hline
    Algorithm       & Lower Bound & Plugin  & Upper Bound \\
    \hline
SR & 0.443 & 0.456 &0.470 \\
SH & 0.216 & 0.222 &0.229 \\
CR-A & 0.423 & 0.435 &0.447 \\
CR-C & 0.304 & 0.312 &0.319 \\
DSR & 0.227 & 0.234 &0.242 \\
DSH & 0.111 & 0.114 &0.116 \\
Simple Tracking & 0.521 & 0.537 &0.555 \\
Almost Tracking & 0.494 & 0.509 &0.525 \\
TS-TC & 0.375 & 0.396 &0.424 \\
EB-TC & 0.177 & 0.181 &0.186 \\
Uniform & 0.154 & 0.158 &0.163 \\
EB-TC$_{\varepsilon_0}$ & 0.122 & 0.125 &0.128 \\
    \hline
    \end{tabular}
\end{table}

\begin{table}[ht]
    \centering
    \caption{Minimax rate for $H_2$ (lower, plugin, upper) for each algorithm. The column ``Plugin'' is based on the empirical average of PoE, which is the same as Table \ref{tab:minimax_rates} in the main paper. ``Lower Bound'' and ``Upper Bound'' are based on the lower and upper bounds of PoE with 95\% confidence intervals.}
    \label{tab:confidence_intervals_two}
    \begin{tabular}{lccc}
    \hline
    Algorithm       & Lower Bound & Plugin  & Upper Bound \\
    \hline
SR & 0.249 & 0.256 &0.263 \\
SH & 0.103 & 0.106 &0.108 \\
CR-A & 0.264 & 0.271 &0.279 \\
CR-C & 0.189 & 0.194 &0.199 \\
DSR & 0.128 & 0.131 &0.134 \\
DSH & 0.055 & 0.056 &0.058 \\
Simple Tracking & 0.253 & 0.260 &0.267 \\
Almost Tracking & 0.249 & 0.255 &0.262 \\
TS-TC & 0.216 & 0.227 &0.244 \\
EB-TC & 0.102 & 0.104 &0.107 \\
Uniform & 0.089 & 0.091 &0.093 \\
EB-TC$_{\varepsilon_0}$ & 0.070 & 0.072 &0.074 \\
\hline
    \end{tabular}
\end{table}

\subsection{Instances derived from real-world data}
\label{tab:realworld_instances}

In addition to the synthetic datasets, we adopt two real-world datasets to evaluate the performance of our algorithms. 
The Open Bandit Dataset \cite{SaitoOBP} is a publicly available, real-world logged bandit dataset collected a prominent Japanese fashion e-commerce platform (ZOZO, Inc.). It consists of 12 million impressions consists of 80 advertisements. 
We view each advertisements as a Gaussian arm and normalized the reward with average standard deviation so that the variance of each arm is 1.

The Movielens 1M dataset \cite{movielens1m} comprises over 1,000,000 anonymous ratings of nearly 3,900 movies by 6,040 MovieLens users who joined in 2000. 
Each datapoint includes user IDs, movie IDs, 5-star ratings, timestamps, and many other features.
In our experiments, we filter movies with 2,000 or less ratings to keep 31 popular movies.
We use each movie as an arm and its average rating as the mean reward, which is normalized by the average standard deviation of all movies so that the variance of each arm is 1.

\begin{table}[ht]
    \centering
    \caption{Open Bandit: CTR values extracted from the Open Bandit dataset.}
    \label{tab:openbandit_ctrs}
    \vspace{0.5em}
    \begin{tabular}{rrrrr}
    \hline
    0.0029265 & 0.0014464 & 0.0021134 & 0.0026464 & 0.0018947 \\
    0.0032350 & 0.0024874 & 0.0052780 & 0.0037272 & 0.0025919 \\
    0.0015018 & 0.0033327 & 0.0018368 & 0.0020283 & 0.0029336 \\
    0.0030222 & 0.0032011 & 0.0036364 & 0.0036137 & 0.0018426 \\
    0.0017718 & 0.0023036 & 0.0028038 & 0.0025506 & 0.0024710 \\
    0.0019308 & 0.0021782 & 0.0016784 & 0.0037885 & 0.0015287 \\
    0.0045120 & 0.0041963 & 0.0036784 & 0.0032292 & 0.0055569 \\
    0.0055678 & 0.0028800 & 0.0035584 & 0.0044478 & 0.0053337 \\
    0.0026211 & 0.0055760 & 0.0035852 & 0.0048702 & 0.0024826 \\
    0.0051337 & 0.0039318 & 0.0055106 & 0.0044275 & 0.0057023 \\
    0.0034024 & 0.0056714 & 0.0049135 & 0.0028941 & 0.0026866 \\
    0.0038009 & 0.0026913 & 0.0037623 & 0.0049876 & 0.0055036 \\
    0.0048012 & 0.0059725 & 0.0044809 & 0.0056396 & 0.0033993 \\
    0.0041044 & 0.0038471 & 0.0019121 & 0.0018957 & 0.0035998 \\
    0.0022913 & 0.0030215 & 0.0027332 & 0.0025879 & 0.0020447 \\
    0.0026221 & 0.0036932 & 0.0024460 & 0.0052332 & 0.0056697 \\
    \hline
    \end{tabular}
\end{table}

\textbf{Open Bandit Dataset:} Table \ref{tab:openbandit_ctrs} shows the CTR values extracted from Open Bandit dataset ($K=80$). Based on the CTR values, we derive the instance $P$ by normalizing the mean standard deviation ($= 0.057774753125$). We then modeled each draw as a result of Milli-impressions (1,000 impressions) and calculated the instance $\bP$ by multiplying the normalized CTRs by $\sqrt{10^3}$. We set $T=3000$ and run each simulation for \Runnum{} times.

\begin{table}[ht]
    \centering
    \caption{MovieLens 1M: Normalized ratings of movies with $>2000$ ratings.}
    \label{tab:movielens1m_ctrs}
    \vspace{0.5em}
    \begin{tabular}{rrrrr}
    \hline
    0.86074 & 0.79806 & 0.90208 & 0.79304 & 0.88125 \\
    0.82937 & 0.89074 & 0.86747 & 0.85094 & 0.68196 \\
    0.80458 & 0.84699 & 0.81170 & 0.86348 & 0.75277 \\
    0.79061 & 0.85860 & 0.89554 & 0.87036 & 0.86317 \\
    0.82550 & 0.91091 & 0.81759 & 0.82508 & 0.74799 \\
    0.83192 & 0.83041 & 0.85564 & 0.84388 & 0.78111 \\
    0.90499 &        &        &        &        \\
    \hline
    \end{tabular}
\end{table}
    
\textbf{Movielens 1M Dataset:} Table \ref{tab:movielens1m_ctrs} shows the normalized ratings of movies with $>2000$ ratings ($K = 31$). We normalize the ratings by the mean standard deviation of the ratings ($= 0.17820006619699696$) to obtain instance $\bP$. We set $T=10000$ and run each simulation for \Runnum{} times.

Note that the simulation results with these two instances derived from real-world data are shown in Figure \ref{fig:two_plots} in the main paper.

\subsection{Suboptimality of Two-two algorithm}\label{subsec:two_two_suboptimality}

In our simulation, we have tested two top-two algorithms, Thompson Sampling - Transportation
Costs (TS-TC) and Empirical Best - Transportation
Costs (EB-TC). These algorithms are designed for the fixed-confidence setting and have a zero rate in the fixed-budget setting.
To empirically demonstrate the suboptimality of the Two-two algorithms, we consider a version of Instance with different $T$. 
Table \ref{tab:toptworate} shows that the rate ($= (H(\bP)/T)\log(1/\PoE)$) for Top-Two keeps decreasing as $T$ increases, implying that its error probability only decays sub-exponentially.
In contrast, the PoE of other good algorithms (SR, CR, and Tracking) decays exponentially fast until it reaches below $1/\Runnum{}$.
\begin{table*}[h]
    \vspace{-0.2cm}
    \centering
    \caption{Estimated minimax rates of algorithms on instance 9 with different value of $T$ (larger better). Here, $\infty$ means that all \Runnum{} runs correctly identified the best arm.}
    \label{tab:toptworate}
    \begin{tabular}{c||cccc|cccc|cc}
    \toprule
    T & SR & SH & CR-A & CR-C & DSR & DSH & S.Track & A.Track & TS-TC & EB-TC \\
    \midrule
2000 & 0.637 & 0.321 & 0.915 & 0.671 & 0.366 & 0.200 & 0.785 & 0.773 & 0.721 & 0.415 \\
10000 & $\infty$ & 0.172 & $\infty$ & $\infty$ & 0.206 & 0.087 & $\infty$ & $\infty$ & 0.237 & 0.094 \\
20000 & $\infty$ & 0.132 & $\infty$ & $\infty$ & $\infty$ & 0.072 & $\infty$ & $\infty$ & 0.136 & 0.050 \\
    \bottomrule
\end{tabular}
\end{table*}

\clearpage

\subsection{An instance demonstrating the worst performance of continuous rejection sampling}\label{subsec:kyoungseok_cr}
Continuous Rejection (CR, \cite{wang2023best}) outperformed Successive Rejection (SR) in our main simulation (Table \ref{tab:minimax_rates}). Still, we found the performance of CR is more sensitive to the number of rounds $T$.
In particular, CR tends to have high PoE when \textbf{there are many similarly suboptimal arms} compared to the total number of samples. 
CR eliminates arms at each time step based on a significant gap between either the worst active arm and the second worst (in CR-C), or between the worst arm and the average of the remaining arms (in CR-A). It essentially performs uniform sampling across active arms. However, this strategy becomes inefficient when the second worst arm is similarly bad (CR-C), or when there are so many arms close in performance to the worst arm that the average does not differ significantly (CR-A). In such cases, CR struggles to remove suboptimal arms from the active set in a timely manner, leading to excessive and inefficient sampling.

The key bottleneck lies in the \textbf{timing of the first elimination }of a group of similarly suboptimal arms. If this elimination is delayed due to limited remaining time, CR ends up with uniform sampling, which leads to a high PoE.

The following experiments allow us to observe the limitations of CR more clearly. 
To highlight the limitation of CR, we set $\theta_0 = 1/\olog K$, which makes the timing of the first elimination at least the same or later than SR.
The results in Figure \ref{fig: modified 4 model experiments} show the outcome of best arm identification performed on the same bandit instance with a much shorter time horizon of $T = 1000$. One can check that CR-A and CR-C exhibit significantly higher error probabilities compared to the other algorithms, which can be largely attributed to the structural issue of CR mentioned earlier—namely, the delayed initiation of arm elimination.

In contrast, SR and Tracking are more robust against the choice of $T$. SR follows a predetermined schedule in which one arm is eliminated at each checkpoint based on its index $i$ and the total time $T$, ensuring that less promising arms are eliminated in a timely manner, maintaining reasonable sampling efficiency. Our Tracking algorithm, meanwhile, dynamically updates the sampling distribution based on the empirical means at each time step. Even when many poor arms are present, they are quickly assigned low distribution weights around the same time, allowing the algorithm to focus resources efficiently.

\begin{figure}[htbp]
\centering\includegraphics[width=0.8\textwidth]{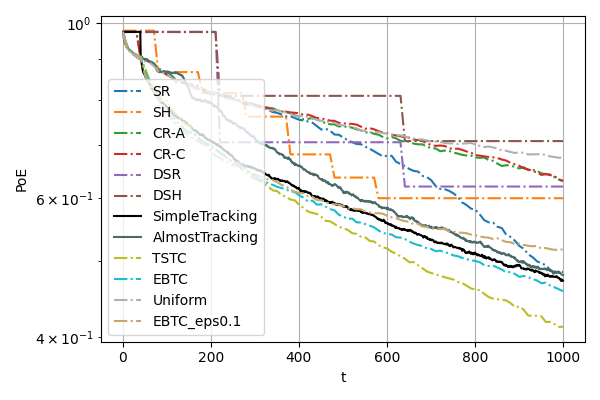}
\caption*{Modified Instance 1: $\bP = \{1 - (i-1)\times0.05\}$ with $T=1000$} %
\includegraphics[width=0.8\textwidth]{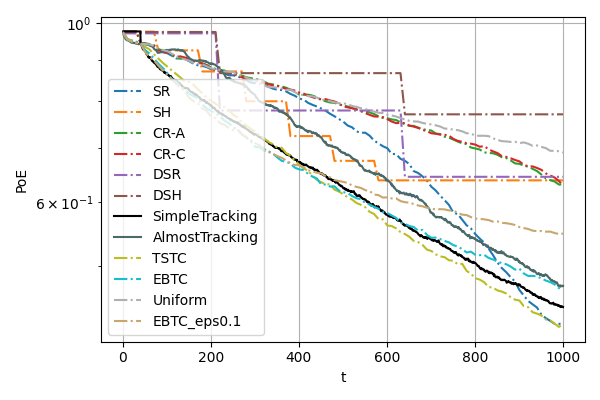}
\caption*{Modified Instance 3: $\bP = \{1 - \sqrt{i-1}/10\}$ with $T=1000$.} %
\end{figure}
\begin{figure}[htbp]
    \centering
\includegraphics[width=0.8\textwidth]{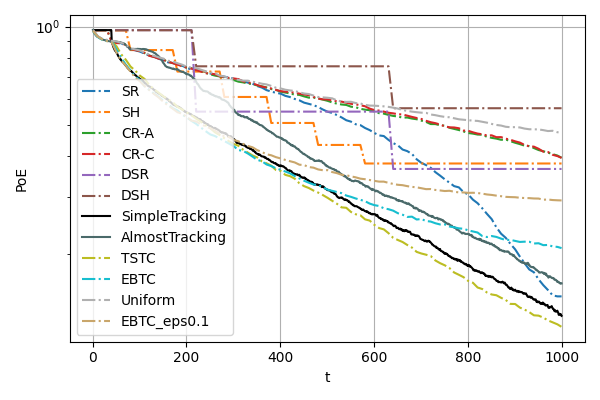}
\caption*{Modified Instance 8: $\bP = \{1, \underbrace{0.8,0.8,0.8}_{\text{3 arms}}, \underbrace{0.8,0.8,\dots,0.8,0.8}_{\text{6 arms}}, \underbrace{0.2,0.2,\dots,0.2,0.2}_{\text{10 arms}}
, \underbrace{0,0,\dots,0,0}_{\text{20 arms}}\} $ with $T=1000$.} %
\includegraphics[width=0.8\textwidth]{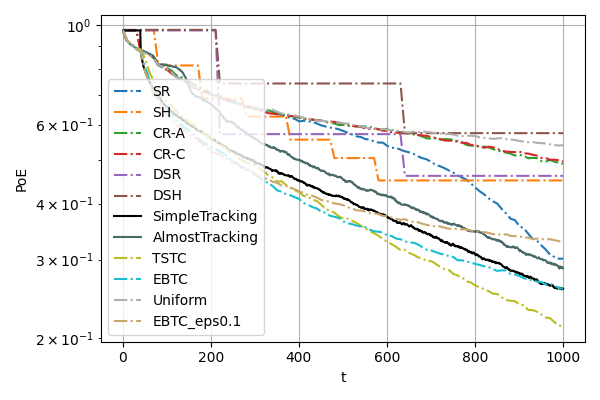}
\caption*{Modified Instance 10: $\bP = \{1.0, 0.9, 0.85, 0.8, \underbrace{0,0,\dots,0,0}_{\text{36 arms}}\}$ with $T=1000$.} %
    
    \caption{Probability of Error (PoE) as a function of rounds $t$, under the same setting as Figure \ref{fig: 10 model experiments} but with a shorter time horizon ($T=1000$).} \label{fig: modified 4 model experiments}
\end{figure}
\clearpage

\end{document}